\def\ddefloop#1{\ifx\ddefloop#1\else\ddef{#1}\expandafter\ddefloop\fi}
\def\ddef#1{\expandafter\def\csname bb#1\endcsname{\ensuremath{\mathbb{#1}}}}
\def\ddef#1{\expandafter\def\csname c#1\endcsname{\ensuremath{\mathcal{#1}}}}
\def\ddef#1{\expandafter\def\csname s#1\endcsname{\ensuremath{\mathscr{#1}}}}
\def\ddef#1{\expandafter\def\csname v#1\endcsname{\ensuremath{\boldsymbol{#1}}}}
\def\ddef#1{\expandafter\def\csname v#1\endcsname{\ensuremath{\boldsymbol{\csname #1\endcsname}}}}
\def\balign#1\ealign{\begin{align}#1\end{align}}
\def\baligns#1\ealigns{\begin{align*}#1\end{align*}}
\def\balignat#1\ealign{\begin{alignat}#1\end{alignat}}
\def\balignats#1\ealigns{\begin{alignat*}#1\end{alignat*}}
\def\bitemize#1\eitemize{\begin{itemize}#1\end{itemize}}
\def\benumerate#1\eenumerate{\begin{enumerate}#1\end{enumerate}}
\newenvironment{talign*}
 {\csname align*\endcsname}
 {\endalign}
\newenvironment{talign}
 {\csname align\endcsname}
 {\endalign}
\def\balignst#1\ealignst{\begin{talign*}#1\end{talign*}}
\def\balignt#1\ealignt{\begin{talign}#1\end{talign}}
\let\originalleft\left
\let\originalright\right
\renewcommand{\left}{\mathopen{}\mathclose\bgroup\originalleft}
\renewcommand{\right}{\aftergroup\egroup\originalright}
\def\tinycitep*#1{{\tiny\citep*{#1}}}
\def\tinycitealt*#1{{\tiny\citealt*{#1}}}
\def\tinycite*#1{{\tiny\cite*{#1}}}
\def\smallcitep*#1{{\scriptsize\citep*{#1}}}
\def\smallcitealt*#1{{\scriptsize\citealt*{#1}}}
\def\smallcite*#1{{\scriptsize\cite*{#1}}}
\def\mbb#1{\mathbb{#1}}
\theoremstyle{plain}
\def\R{\mathbb{R}}
\def\<{\left\langle} %
\def\>{\right\rangle}
\def\iff{\Leftrightarrow}
\def\E{\mbb{E}} %
\DeclareSymbolFont{rsfs}{U}{rsfs}{m}{n}
\DeclareSymbolFontAlphabet{\mathscrsfs}{rsfs}
\newtheorem{theo}{Theorem}
\newtheorem{lemm}[theo]{Lemma}
\newtheorem{coro}[theo]{Corollary}
\newtheorem{defi}[theo]{Definition}
\newtheorem{prop}[theo]{Proposition}
\newtheorem{rema}{Remark}
\newtheorem{exam}{Example}
\renewenvironment{proof}{\noindent\textbf{Proof.}\hspace*{.3em}}{\qed\\}
\newenvironment{proof-sketch}{\noindent\textbf{Proof Sketch}
  \hspace*{1em}}{\qed\bigskip\\}
\newenvironment{proof-idea}{\noindent\textbf{Proof Idea}
  \hspace*{1em}}{\qed\bigskip\\}
\newenvironment{proof-of-lemma}[1][{}]{\noindent\textbf{Proof of Lemma {#1}}
  \hspace*{1em}}{\qed\\}
\newenvironment{proof-of-theorem}[1][{}]{\noindent\textbf{Proof of Theorem {#1}}
  \hspace*{1em}}{\qed\\}
\newenvironment{proof-attempt}{\noindent\textbf{Proof Attempt}
  \hspace*{1em}}{\qed\bigskip\\}
\newtheorem{assumption}{Assumption}
\newcommand{\Ntrg}{N_\mathrm{trg}}
\newcommand{\one}{\mathbf{1}}
\newcommand{\zero}{\mathbf{0}}
\newcommand{\SM}[1]{\mathrm{Softmax}\left(#1\right)}
\newcommand*\samethanks[1][\value{footnote}]{\footnotemark[#1]}
\newcommand{\citep}[1]{\cite{#1}}
\newcommand{\citet}[1]{\cite{#1}} 
\newcommand{\probtis}{q}
\renewcommand{\paragraph}{%
  \@startsection{paragraph}{4}%
  {\z@}{1.6ex \@plus 1ex \@minus .2ex}{-1em}%
  {\normalfont\normalsize\bfseries}%
}
\title{
From Shortcut to Induction Head: How Data Diversity Shapes Algorithm Selection in Transformers
}
\author{ 
{Ryotaro Kawata}\thanks{Equal contribution. \vspace{-0mm}}$^{~\,,1,2}$,\,\,
{Yujin Song}\samethanks[1]$^{~\,,1,2}$,\,\,
{Alberto Bietti}$^{3}$,\,\,
{Naoki Nishikawa}$^{1,2}$, \vspace{-1.6mm}\\ 
{Taiji Suzuki}$^{1,2}$,\,\,
{Samuel Vaiter}$^{4,5}$,\,\,
{Denny Wu}$^{3,6}$ \vspace{3.2mm}
\\
\normalsize{$^{1}$The University of Tokyo},\,\,\, 
\normalsize{$^{2}$RIKEN AIP},\,\,\,
\normalsize{$^{3}$Flatiron Institute},\,\,\,
\normalsize{$^{4}$CNRS}, \\
\normalsize{$^{5}$Université Côte d'Azur, Laboratoire J.A. Dieudonné},\,\,\,
\normalsize{$^{6}$New York University} \vspace{2mm}
\\
\nolinkurl{{kawata-ryotaro725,nishikawa-naoki259}@g.ecc.u-tokyo.ac.jp;}\\ \nolinkurl{y.song.research@gmail.com;}\,\,\nolinkurl{{abietti,dwu}@flatironinstitute.org;}\\\nolinkurl{taiji@mist.i.u-tokyo.ac.jp;}\,\,\nolinkurl{samuel.vaiter@cnrs.fr}
}
\date{}
\begin{document}
\etocdepthtag.toc{mtchapter}
\etocsettagdepth{mtchapter}{subsection}
\etocsettagdepth{mtappendix}{none}

\maketitle 

\vspace{-3mm}

\begin{abstract}
Transformers can implement both generalizable algorithms (e.g., induction heads) and simple positional shortcuts (e.g., memorizing fixed output positions). In this work, we study how the choice of pretraining data distribution steers a shallow transformer toward one behavior or the other. Focusing on a minimal trigger-output prediction task -- copying the token immediately following a special trigger upon its second occurrence -- we present a rigorous analysis of gradient-based training of a single-layer transformer.
In both the infinite and finite sample regimes, we prove a transition in the learned mechanism: if input sequences exhibit sufficient diversity, measured by a low “max-sum” ratio of trigger-to-trigger distances, the trained model implements an induction head and generalizes to unseen contexts; by contrast, when this ratio is large, the model resorts to a positional shortcut and fails to generalize out-of-distribution (OOD). 
We also reveal a trade-off between the pretraining context length and OOD generalization, and derive the optimal pretraining distribution that minimizes computational cost per sample.
Finally, we validate our theoretical predictions with controlled synthetic experiments, demonstrating that broadening context distributions robustly induces induction heads and enables OOD generalization. 
Our results shed light on the algorithmic biases of pretrained transformers and offer conceptual guidelines for data-driven control of their learned behaviors.
\end{abstract}

\section{Introduction}
\begin{figure*}[!t]
    \vspace{-3.35mm}
    \centering
    \includegraphics[width=0.77\textwidth]{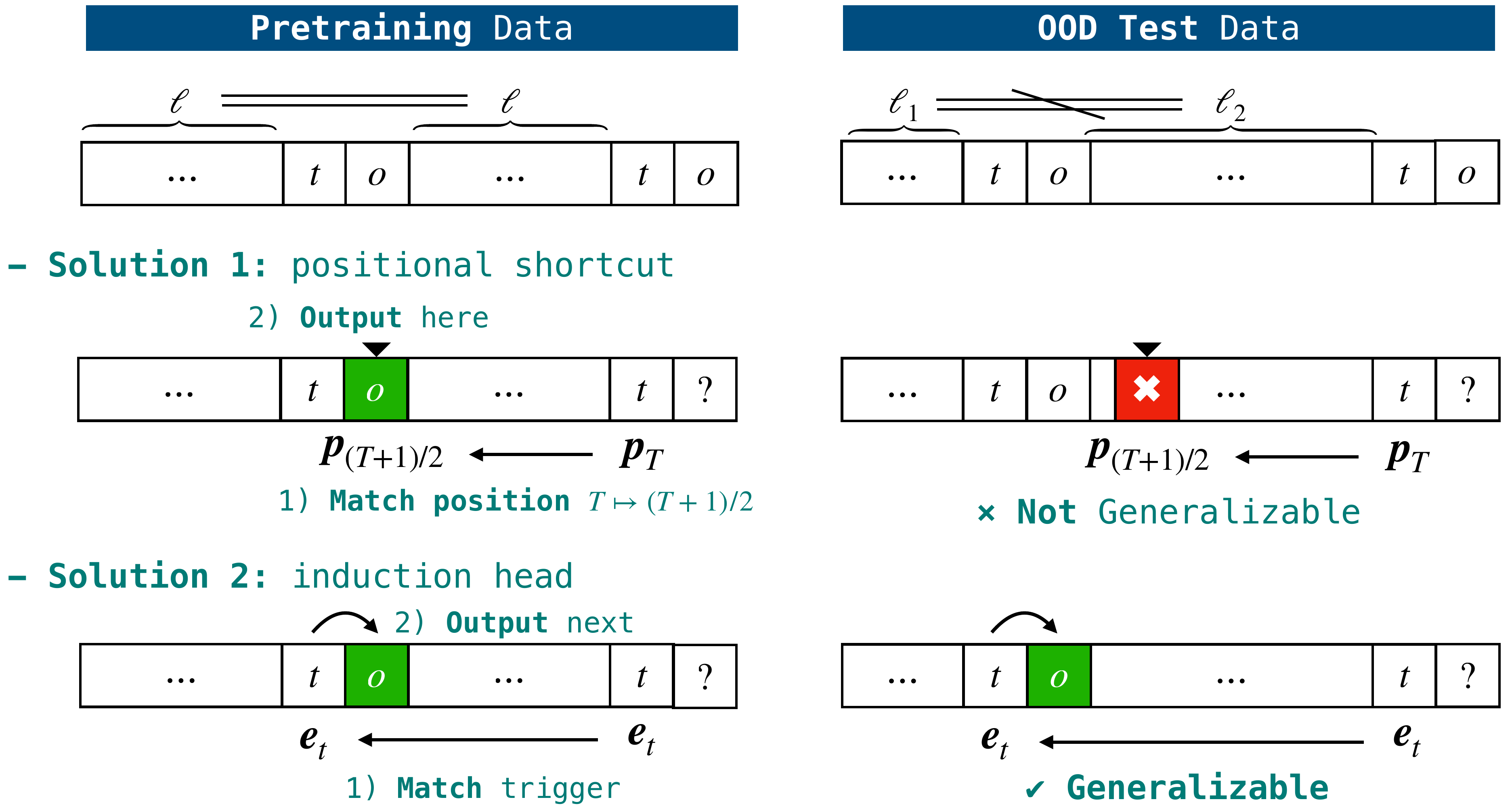}
    \caption{\small 
        Two mechanisms for the associative copying task \texttt{[..., t, o, ..., t] $\mapsto$ o}. 
        In the pretraining data, the size of irrelevant tokens before the occurrence of the first and second trigger $\ell$ remains fixed per sequence, hence allowing two solutions: 
        $(i)$ \textit{positional shortcut} that outputs the token at position $(T+1)/2$ for input length $T$; and 
        $(ii)$ \textit{induction head} using token embedding $\ve$, which finds the queried token and returns the ensuing token. 
        Whereas on OOD sequences with varying $\ell_1\neq\ell_2$, only $(ii)$ remains a valid solution.
    }
    \vspace{-1.2mm}
    \label{fig:problem_setup} 
\end{figure*}

Large language models (LLMs) leverage circuits of attention heads~\cite{NIPS2017_3f5ee243} to perform (implicit) algorithmic reasoning. Certain attention heads implement discrete algorithms — notably \emph{induction heads}~\citep{elhage2021mathematical,olsson2022context}, which scan for previously seen token patterns in the context to predict subsequent tokens. Such heads enable \emph{in-context learning} behaviors~\cite{brown2020language}, allowing a transformer to continue a sequence such as $[A, B, \dots, A] \rightarrow B$ purely by leveraging patterns in the context. By contrast, attention can also implement positional mechanisms that select tokens based solely on their location in the sequence \cite{voita2019analyzing,akyurek2024context}. These mechanisms can yield contrasting generalization performance \cite{cui2024phase}, and we expect the pretraining data distribution to play a central role in determining which mechanisms a model learns to rely on: depending on structural properties of the corpus, a transformer may either discover generalizable strategies (content-based retrieval) or adopt position-based shortcuts.

\paragraph{Motivation.} We theoretically study how pretraining data influences the implemented circuit and out-of-distribution (OOD) generalization performance of the transformer. This perspective is motivated from the empirical observation that pretrained models often leverage shortcut solutions that are brittle beyond the training distribution \cite{mccoy2019right,geirhos2020shortcut,liu2022transformers}. For instance, a transformer might utilize the aforementioned position-based attention head to memorize that a certain output tends to occur at a particular position in the training text, instead of learning the underlying association (induction head); such positional shortcut is a double-edge sword in algorithmic tasks: transformers can achieve near-perfect accuracy in distribution, but struggle on test sequences of unseen lengths or structures. Since it is empirically known that the learned mechanism heavily depends on the structure of pretraining data \cite{garg2022can,razeghi2022impact,raventos2023pretraining,wang2025learning}, we ask the following question. 
\begin{center} \emph{How does the data structure decide whether a pretrained transformer implements a generalizable \\ mechanism (e.g.,induction head) or a shortcut that fails OOD (e.g., positional memorization)?} \end{center}

\subsection{Our Contributions}

\paragraph{Trigger-output Copying.}
To investigate this question in a controlled setting, we introduce a minimal \emph{trigger-output copying} task inspired by~\cite{bietti2024birth}. In this synthetic task, each input sequence contains a special $\mathrm{\texttt{trigger}}$ token that appears twice. The model must predict the token that immediately follows the \emph{first} trigger when the trigger appears the second time. For example, given
$$\dots  [\mathrm{\texttt{trigger}}] [X] \dots [\mathrm{\texttt{trigger}}][?] \dots,$$
the correct prediction is $X$. Depending on the structure of the input sequence, this task admits multiple solutions. We focus on two mechanisms — see Figure~\ref{fig:problem_setup}.
\begin{itemize}[leftmargin=1em,topsep=0pt]
\item \textbf{Induction head.} The model attends back to the location of the previous trigger and copies the token following it; this works for arbitrarily long gaps between trigger occurrences (up to context-length limit).
\item \textbf{Positional shortcut.} When the position of the first trigger is inferable from the second (e.g., under periodic structure), the model may copy the token using positional information alone. This shortcut is valid in-distribution but does not reflect the underlying association.
\end{itemize}

For this task, we define out-of-distribution (OOD) generalization as performance on test sequence with altered structure, where the trigger appears at positions not seen during pretraining (e.g., longer or aperiodic sequences). The induction head mechanism is robust to such shifts as it learns the correct association, whereas the positional shortcut typically fails OOD. Our goal is to identify a data-dependent transition between these two mechanisms that governs OOD generalization: 
intuitively, increasing the \emph{diversity} of pretraining sequences — by varying the distances between trigger occurrences — dilutes positional signals and discourages the shortcut; conversely, as the number of trigger tokens in the data grows, the effective signal for induction weakens. We make these intuitions precise in our theoretical analysis.

\paragraph{Main Findings.} We provide a quantitative account of how pretraining data diversity shapes the mechanism learned by a pretrained transformer in the trigger-output copying task introduced above. Specifically, we rigorously analyze the in-distribution and out-of-distribution performance of a shallow (single-layer) transformer trained on this synthetic task. By studying an “early-phase’’ simplification of gradient descent in both the infinite-data (population loss) and finite-data (empirical loss) regimes, we show that the pretraining distribution directly selects the model’s algorithm: when pretraining data are sufficiently “diverse’’ -- as measured by a \emph{max-sum ratio} of trigger distances -- the transformer learns an induction head; when diversity is low, the model adopts a positional shortcut that fails to generalize OOD. Using this diversity measure that governs the phase transition, we discuss various tradeoffs and how to choose a pretraining distribution that induces the desired induction mechanism with minimal computational cost. Finally, we empirically probe the learned circuits by visualizing attention scores, and present evidence that a similar mechanism transition arises under standard gradient-based training beyond our theoretical setting.

\subsection{Related Works}
The induction head mechanism in transformers was first presented in the mechanistic interpretability literature~\citep{elhage2021mathematical,olsson2022context}, and followup theory investigates when such circuits emerge under simplified training dynamics and tasks \citep{bietti2024birth,edelman2024evolution,nichani2024transformers,reddy2023mechanistic,chen2024unveiling}. Empirical studies on algorithmic tasks (copying, arithmetic, sorting) demonstrate that transformers often rely on spurious ``shortcut'' solutions that fail to generalize, often due to poor use of positional information~\citep{zhang2022unveiling,jelassi2023length,zhou2023algorithms,golowich2025role}; the OOD brittleness of shortcut solution is also documented in \citep{liu2022transformers}.
A complementary thread links the structure of pretraining data to in-context behaviors: the function classes a transformer implements in context and the sensitivity of performance to data statistics such as corpus coverage and frequency \citep{garg2022can,razeghi2022impact,min2022rethinking} or task diversity \cite{raventos2023pretraining,lu2025asymptotic}. Our analysis aligns with this view by making explicit how diversity in trigger distances steers the learned mechanism. Methodologically, we borrow the “early-phase’’ simplification of training dynamics and study the loss improvement after the first few gradient descent step \citep{ba2022high,damian2022neural,oymak2023role,bietti2024birth}.  %

\section{Problem Setting}\label{sec:ProblemSetting}
\paragraph{Notations.}
For a positive integer $N$, we denote $[N] := \{1, 2, \dotsc, N\}$.  
For integers $N_1 \leq N_2$, we define $[N_1 : N_2] := \{N_1, N_1 + 1, \dotsc, N_2\}$.
The $\mathrm{Softmax}$ function for an $N$-dimensional vector $\vv \in \mathbb{R}^N$ is defined as
$\mathrm{Softmax}(\vv)_i := \frac{e^{v_i}}{\sum_{j=1}^N e^{v_j}}.$
For a vector $\vv$, we write $\vv = O_2(f(N))$ if $\|\vv\|_2 = O(f(N))$,  
and $\vv = O_\infty(f(N))$ if $\max_i |v_i| = O(f(N))$.  Similar notation is used for a matrix $\vA$, where $\|\vA\|_2$ and $\|\vA\|_\infty$ denote its $\ell_2\to \ell_2$ spectral and max norms, respectively.

\subsection{Data Generating Process}
We study the \emph{trigger-output} setting to investigate how transformers acquire the induction head mechanism.  
Let $N \in \mathbb{N}$ denote the vocabulary size and $L \in \mathbb{N}$ the maximum input sequence length.  
We designate special tokens as \emph{trigger tokens}. We define our data model as follows:

\begin{defi}[Data Distribution]
\label{defi:unigram_general}
    Let $\ell_1, \ell_2 \in \mathbb{N}$ such that $T \coloneqq \ell_1 + \ell_2 + 3 \leq L-1$.
    Let $\Ntrg\leq N$ denote the number of trigger tokens.
    A sequence $z_{1:T+1} \in [N]^{T+1}$ is sampled as follows:
    \begin{enumerate}[leftmargin=*]
        \item Sample a trigger token $t \in [\Ntrg]$ and an output token $o \in [\Ntrg+1:N]$ uniformly at random, where $\Ntrg=o(N^{1/3})$.
        \item Construct the sequence:
        \[
        z_{1:T+1} = (\underbrace{z_1, \dots, z_{\ell_1}}_{\ell_1~\text{irrelevant tokens}},\ \underbrace{t, o}_{\text{trigger-output pair}},\ \underbrace{z_{\ell_1+3}, \dots, z_{\ell_1+\ell_2+2}}_{\ell_2~\text{irrelevant tokens}},\ \underbrace{t, o}_{\text{trigger-output pair}})
        \]
        where irrelevant token $z_i~(i\in[1:\ell_1]\cup [\ell_1+3:\ell_1+\ell_2+2])$ is drawn i.i.d.~from $[\Ntrg+1:N]$.
    \end{enumerate}
    We refer to such a sequence as a \emph{trigger-output} model with subtext lengths $\ell_1$ and $\ell_2$.
\end{defi}

In our data model, the task is to identify the output token $z_{T+1}=o$ from the sequence $z_{1:T} = (z_1, \dots, z_{\ell_1},\ t,\ o,\ z_{\ell_1+3}, \dots, z_{\ell_1+\ell_2+2},\ t)$.  
This can be achieved by implementing the \emph{induction head mechanism}~\citep{elhage2021mathematical,olsson2022context}, which copies the token that follows the first occurrence of the trigger token and outputs it upon encountering the second occurrence of the same trigger.

Due to structure of the input sequence, transformer may also rely on positional shortcuts to achieve low loss; in particular, when the lengths of irrelevant tokens are identical within each sequence, i.e., $\ell_1 = \ell_2 = \ell$, 
a transformer can achieve 100\% training accuracy simply by inferring the correct position to attend to $(T+1)/2 = \ell + 2$ from the position of the second trigger $T = 2\ell + 3$. Such positional solution does not make use of the semantic information and generally fails when $\ell_1 \neq \ell_2$.

To study the phase transition between the two mechanisms, we assume the pretraining data consists of a mixture of sequences with different lengths determined by $\ell=\ell_1=\ell_2$. 

\begin{defi}\label{defi:unigram_ours}
    Consider a language model $p_{\vtheta}(\cdot \,|\, z_1 z_2 \cdots z_T)$ that is pretrained on $M$ sequences $\big\{z^{(i)}_{1:T^{(i)}+1}\big\}_{i=1}^M$ generated as follows:
    \begin{itemize}[leftmargin=1em]
        \item Sample $\ell^{(i)}$ from a distribution $\mathcal{D}_\ell$.
        \item Generate $z^{(i)}_{1:T^{(i)}+1}$ according to Definition~\ref{defi:unigram_general} with $\ell_1 = \ell_2 = \ell^{(i)}$, i.e.,
        \[
            z^{(i)}_{1:T^{(i)}+1} = (z_1, \dots, z_{\ell^{(i)}},\ t,\ o,\ z_{\ell^{(i)}+3}, \dots, z_{2\ell^{(i)}+2},\ t,\ o).
        \]
    \end{itemize}
\end{defi}

\paragraph{OOD Generalization.} Note that the pretraining distribution (defined by $\mathcal{D}_\ell$) may not cover all possible sequences. 
We say that $p_{\vtheta}$ \emph{generalizes out-of-distribution (OOD)} if it implements the correct copying mechanism across all possible $\ell$'s, that is, for any $\ell_1, \ell_2$ such that $\ell_1 + \ell_2 + 3 \leq L - 1$ (possibly $\ell_1\neq\ell_2$), and for any test sequence $z_{1:T+1}$ generated from the trigger-output unigram model with subtext lengths $\ell_1$ and $\ell_2$ (Definition~\ref{defi:unigram_general}), we have
\[
    \arg\max_{k \in [N]} p_{\vtheta}(k \mid z_1 z_2 \cdots z_T) = z_{T+1}.
\]

\subsection{Gradient-based Training of Single-layer Transformer}
\paragraph{Architecture and Embedding.}
We consider a single-layer transformer block $f_{\mathrm{TF}}$ defined as
\begin{align}\label{eq:our_transformer}
    f_{\mathrm{TF}}(\vX_{1:t};\vW_{KQ},\vW_V) = \vW_V \vX_{1:t} \, \mathrm{Softmax}(\vX_{1:t}^\top \vW_{KQ} \vx_t) \in \mathbb{R}^N,
\end{align}
where $\vW_{KQ}\in \mathbb{R}^{D\times D},\vW_V \in \mathbb{R}^{N\times D}$ and $\vX_{1:t} = \begin{pmatrix} \vx_1 & \cdots & \vx_t \end{pmatrix} \in \mathbb{R}^{D \times t}$ denotes the input embeddings of $z_{1:t}$, with embedding dimension $D$.
We define the embedding as follows:
\begin{defi}\label{defi:embedding}
Let $D = L + 2N$.
Let $\vp_t \in \mathbb{R}^L$ denote the one-hot vector with a 1 at the $t$-th position (representing the {positional embedding}),  
and let $\ve_z \in \mathbb{R}^N$ denote the one-hot vector with a 1 at the $z$-th position (representing the {token identity}).

We then construct the input embedding $\vx_t$ as
\begin{equation}\label{eq:embedding}
\vx_t = \begin{bmatrix}
    \vp_{t} \\ \ve_{z_t} \\ \ve_{z_{t-1}}
\end{bmatrix} \in \mathbb{R}^{L + 2N}.
\end{equation}
\end{defi}
The prediction probability is given by
$$
p_{(\vW_{KQ}, \vW_V)}(z_{T+1} = k \mid z_1 \cdots z_T) = \left[\mathrm{Softmax}\left(f_{\mathrm{TF}}(\vX_{1:t};\vW_{KQ},\vW_V)\right)\right]_k.
$$
\vspace{-3mm}

\begin{rema}
    We make the following remarks on the design of our architecture and embedding.
\begin{itemize}[leftmargin=1em]
    \item  The architecture (with the FFN is absorbed into the value matrix $\vW_V$, and tied key and query projections) is commonly used in theoretical analyses and mechanistic studies~\citep{li2023transformers,bietti2024birth,nichani2024transformers}; the simplification allows us to focus on the inductive bias by simple attention mechanisms, while retaining sufficient expressiveness to implement algorithmic behaviors.  
    \item Two-layer architecture is typically needed to implement the induction head mechanism, where the first layer often learns to detect the trigger and identify of the following token via attention to the previous token~\citep{sanford2024one}.  
    To reflect this inductive step in our simplified single-layer setting, we explicitly encode the identity of the previous token $z_{t-1}$ in the third component of the embedding $\vx_t$. This choice also echoes recent empirical developments that incorporate information of previous tokens directly into the current state, such as Mamba~\citep{gu2023mamba}, RWKV~\citep{peng2023rwkv}, and convolution augmentations~\citep{li2025power,Allenzhu2025-canon}. 
\end{itemize}
\end{rema}

\paragraph{Gradient-based Learning Algorithm.}
We use gradient descent (Algorithm~\ref{alg:pretraining}) on the cross-entropy loss to pretrain our shallow transformer \eqref{eq:our_transformer},
$$
\cL(\vX_{1:T^{(i)}}^{(i)};\vW_{KQ},\vW_V)=\mathrm{CrossEntropy}(\ve_{z_{T^{(i)}+1}},\mathrm{Softmax}(f_{\mathrm{TF}}(\vX_{1:T^{(i)}};\vW_{KQ},\vW_V))).
$$

\begin{algorithm}[t]
\SetKwInOut{Input}{Input}
\SetKwInOut{Output}{Output}
\SetKwBlock{StageOne}{Gradient descent on $\vW_V$}{}
\SetKwBlock{StageTwo}{Gradient descent on $\vW_{KQ}$}{}
\SetKw{Initialize}{Initialize}
	
\Input{Learning rate $\eta_{KQ},\eta_V$}

\Initialize{$\vW_{KQ}(0)=\vO_{(L+2N)\times (L+2N)},\vW_{V}(0)=\vO_{N\times (L+2N)}$}

\StageOne{
$\vW_V(1)\leftarrow \vW_V(0)-\eta_V \nabla_{\vW_V}\frac{1}{M_V}\sum_{i=1}^{M_V}\cL(\vX_{1:T^{(i)}}^{(i)};\vW_{KQ}(0),\vW_V(0))$
}\label{alg:line:wv}
\StageTwo{$\vW_{KQ}(1)\leftarrow \vW_{KQ}(0)-\eta_{KQ} \nabla_{\vW_{KQ}}\frac{1}{M_{KQ}}\sum_{i=M_V+1}^{M_V+M_{KQ}}\cL(\vX_{1:T^{(i)}}^{(i)};\vW_{KQ}(0),\vW_V(1))$}\label{alg:line:wkq}
\Output{Prediction $f_{\mathrm{TF}}(\cdot)$}
\caption{Gradient-based training of single-layer transformer} 
 \label{alg:pretraining}
\end{algorithm}

In Algorithm~\ref{alg:pretraining}, we apply a \emph{single gradient descent step} with large learning rate on the value and key-query matrices. This is motivated by recent studies~\cite{oymak2023role,bietti2024birth,wang2024understandingknowledgehijackmechanism} showing that the first gradient step can induce associative memory tied to specific components of the input embedding.  
In particular, the gradient can often be expressed as a linear combination of outer products $\vw \vv^\top$, where either $\vw$ or $\vv$ corresponds to embedding vectors such as $\ve_{z_t}$, $\ve_{z_{t-1}}$, or $\vp_t$.  
Such a gradient structure is sufficient to construct simple forms of associative memory within the model. 
We remark that similar single-step update is commonly used in the analysis of feature learning in shallow neural networks \cite{ba2022high,damian2022neural,barak2022hidden} and  transformers~\cite{NEURIPS2024_8d6c356c,nishikawa2025nonlinear,wang2025learning}. 
\section{Main Result: Data-driven Transition Between Mechanisms}
\label{sec:results}
\subsection{Positional Shortcut vs.~Induction Head}\label{transition}

In this section, we illustrate how the diversity of pretraining distribution influences which algorithm the trained transformer implements — either the positional shortcut or the induction head. The following quantity plays a central role in our characterization. 

\begin{defi}
For each $\ell$, let $q_\ell$ denote the probability mass assigned under $\mathcal{D}_\ell$, and $\mathcal{S}$ the support of $\mathcal{D}_\ell$.  
We define the {max-sum ratio} as
\[
R(\mathcal{D}_\ell) = \frac{\max_{\ell \in \mathcal{S}} \ell^{-1} q_\ell}{\sum_{\ell \in \mathcal{S}} \ell^{-1} q_\ell}.
\]
\end{defi}

\paragraph{Interpretation of max-sum ratio.}
The max-sum ratio can be seen as a \emph{diversity} measure of $\mathcal{D}_\ell$.  
The following example provides an intuitive illustration:

\begin{exam}\label{exam:diversity}
Let $\mathcal{D}_\ell = \mathrm{Unif}(\{\ell_0, \ell_0 + 1, \dotsc, \ell_0 + K - 1\})$.  
Then the max-sum ratio is given by
\begin{equation}\label{eq:max_sum_ratio_uniform}
R(\ell_0,K) = \frac{\ell_0^{-1}}{\sum_{k=0}^{K-1} (\ell_0 + k)^{-1}},
\end{equation}
which monotonically decreases with $K$; hence {greater diversity of $\mathcal{D}_\ell$ gives smaller max-sum ratio}.
\end{exam}

Note that the max-sum ratio does not merely capture the width of the distribution: in Example~\ref{exam:diversity}, increasing $\ell_0$ while keeping $K$ fixed decreases the proportion of $\ell_0^{-1}$ in $[\ell_0^{-1}, \dotsc, (\ell_0 + K - 1)^{-1}]$, thus reducing the max-sum ratio. 
Hence, even with a narrow range, shifting the distribution rightward -- placing more probability on larger $\ell$ -- naturally yields a smaller max-sum ratio.
This is because the max-sum ratio weights each probability mass $q_\ell$ by $\ell^{-1}$.

\paragraph{Learning under Population Loss.}
The next theorem shows the existence of a threshold in the max-sum ratio that determines whether OOD generalization is achieved, in the infinite-data limit.

\begin{theo}[Infinite Sample Setting] \label{theo:infinite_sample}
    Suppose we run Algorithm~\ref{alg:pretraining} on the expected loss $\E[\mathcal{L}(\vX_{1:T}; \vW_{KQ}, \vW_V)]$  
    with learning rates $\eta_V\lesssim 1, \eta_V\eta_{KQ}\gtrsim \frac{N^3}{\Ntrg^3}\log N$.
    Then, there exist $\epsilon_1(\Ntrg),\epsilon_2(\Ntrg)=\Theta(\Ntrg^{-1})$ such that:
    \begin{itemize}[leftmargin=1em]
        \item If $R(\mathcal{D}_\ell) < \epsilon_1$, then the pretrained transformer generalizes OOD, as defined in Definition~\ref{defi:unigram_ours}.
        \item If $R(\mathcal{D}_\ell) > \epsilon_2$, then there exist OOD test sequences such that the pretrained transformer fails.
    \end{itemize}
\end{theo}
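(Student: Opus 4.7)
The plan is to analyze the two gradient steps in Algorithm~\ref{alg:pretraining} separately, exploiting the fact that both $\vW_V$ and $\vW_{KQ}$ start at zero so each update admits a closed-form expectation, and then to decompose $\vW_V(1)$ and $\vW_{KQ}(1)$ in the natural $(\text{pos},\text{tok},\text{prev-tok})$ block layout induced by the embedding~\eqref{eq:embedding}. The central objects in $\vW_{KQ}(1)$ will be two competing block-level signals — an induction-head signal on the $(\text{prev-tok})\times(\text{tok})$ cross-block and a positional-shortcut signal on the $(\text{pos})\times(\text{pos})$ block — and the threshold $\epsilon_1,\epsilon_2=\Theta(\Ntrg^{-1})$ arises as the crossover between their magnitudes.

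\emph{Step 1 ($\vW_V(1)$).} At initialization the attention is uniform and the logits vanish, so $\nabla_{\vW_V}\cL = (N^{-1}\mathbf{1}-\ve_o)\bar{\vx}^\top$ with $\bar{\vx}=T(\ell)^{-1}\sum_s \vx_s$. Taking expectation and computing block by block, I will show that, up to rank-one and position-block corrections of relative order $\Ntrg/N$, $\vW_V(1)$ acts as an ``output-projector'' with diagonal entries $\Theta(\eta_V \mathcal{T}/(N-\Ntrg))$ on the output-indexed diagonal of both the token and previous-token blocks, where $\mathcal{T}:=\E_\ell[1/T(\ell)]$. Two facts will drive Step~2: the zero-row-sum identity $\mathbf{1}^\top \vW_V(1)=0$, and the localization property that $(\vW_V(1)\vx_s)_o$ is a $\Theta(\eta_V\mathcal{T}/(N-\Ntrg))$-spike exactly when $z_s=o$ or $z_{s-1}=o$.

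\emph{Step 2 ($\vW_{KQ}(1)$).} Differentiating through the attention at $\vW_{KQ}(0)=0$ yields $\nabla_{\vW_{KQ}}\cL = T^{-1}\sum_s (u_s-\bar u)(\vx_s-\bar{\vx})\vx_T^\top$ with $u_s:=\vg^\top\vW_V(1)\vx_s$ and $\vg:=\mathrm{Softmax}(\vW_V(1)\bar{\vx})-\ve_o$. Combining the row-sum identity with the Step~1 structure, $u_s$ is a negative spike of magnitude $\eta_V\mathcal{T}/(N-\Ntrg)$ at exactly the two positions $s\in\{\ell+2,\ell+3\}$, and is $\Theta(N^{-1})$-smaller elsewhere. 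Expanding $(\vx_s-\bar{\vx})\vx_T^\top$ by blocks and averaging over the data then produces two dominant contributions to $\vW_{KQ}(1)$: an \emph{induction head} of magnitude $\Theta(\eta_{KQ}\eta_V\mathcal{T}^2/(\Ntrg(N-\Ntrg)))$ on the $(t,t)$ diagonal of the $(\text{prev-tok})\times(\text{tok})$ block for $t\in[1{:}\Ntrg]$ (arising from the event $z_{\ell+1}=t=z_T$, averaged over $(t,\ell)$), and a \emph{positional shortcut} of magnitude $\Theta(\eta_{KQ}\eta_V\mathcal{T}\,q_\ell/((N-\Ntrg)T(\ell)))$ at the entries $(\ell+2,T(\ell))$ and $(\ell+3,T(\ell))$ of the $(\text{pos})\times(\text{pos})$ block, for each $\ell\in\mathrm{supp}(\mathcal{D}_\ell)$. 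All remaining cross-block entries must be shown to be of strictly smaller order, uniformly in their indices.

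\emph{Step 3 and main obstacle.} On an OOD sequence $(\ell_1,\ell_2)$, the induction head adds the $(\ell_1,\ell_2)$-independent signal to the attention logit at the correct position $s=\ell_1+2$, while the shortcut adds a signal at $s=\ell_\star+2$ only when $\ell_\star:=(\ell_1+\ell_2)/2\in\mathrm{supp}(\mathcal{D}_\ell)$. The ratio of the maximal admissible shortcut signal to the induction signal equals $\Theta(\Ntrg\cdot \max_\ell (q_\ell/T(\ell))/\mathcal{T}) = \Theta(\Ntrg\,R(\mathcal{D}_\ell))$ (using $T(\ell)=2\ell+3\asymp\ell$). When $R(\mathcal{D}_\ell)<c_1/\Ntrg$, the induction signal uniformly dominates; the assumption $\eta_V\eta_{KQ}\gtrsim(N/\Ntrg)^3\log N$ then yields a logit gap exceeding $\log N$, the softmax concentrates on $s=\ell_1+2$, and $\vW_V(1)$ places mass $\Theta(\eta_V\mathcal{T}/(N-\Ntrg))$ on the $o$-th coordinate, proving OOD generalization. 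Conversely when $R(\mathcal{D}_\ell)>c_2/\Ntrg$, choosing $\ell_\star\in\arg\max q_\ell/T(\ell)$ and any $\ell_1\neq\ell_2$ with $\ell_1+\ell_2=2\ell_\star$ makes the shortcut dominate; attention concentrates on $\ell_\star+2$, which hosts an i.i.d.\ irrelevant token, and the model predicts that token rather than $o$. The main technical difficulty will be the uniform $\ell_\infty$ control of the rank-one and cross-block corrections across all nine blocks of $\vW_{KQ}(1)$ — in particular the $(\text{tok})\times(\text{tok})$ and $(\text{pos})\times(\text{tok})$ entries, together with the softmax nonlinearity inside $\vg$ — all of which must be shown to be $o(\mathcal{T}^2/\Ntrg)$; this is precisely where the scaling assumption $\Ntrg=o(N^{1/3})$ enters, absorbing the $1/N$ factors from the rank-one mean corrections.
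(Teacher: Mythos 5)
Your proposal follows essentially the same route as the paper's proof: one-step closed forms for $\vW_V$ and $\vW_{KQ}$ from zero initialization, a block decomposition of $\vW_{KQ}(1)$ isolating a positional-shortcut component of size $\propto q_\ell T(\ell)^{-1}$ on the position--position block and an induction-head component of size $\propto \E[T^{-1}]/\Ntrg$ on the prev-token--token block, a comparison of attention logits on OOD sequences where the large step size $\eta_V\eta_{KQ}\gtrsim N^3\Ntrg^{-3}\log N$ turns the logit gap into softmax concentration, and the $\Theta(\Ntrg^{-1})$ threshold arising precisely as the ratio $\Ntrg\,R(\mathcal{D}_\ell)$ — this matches Lemmas~\ref{lemm:wv_calculation}, \ref{lemm:kq_modules}, \ref{lem:lower_maxsum}, and \ref{lem:upper_maxsum}. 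One local overstatement in your failure branch: attention need not concentrate on the pseudo position $\ell_\star+2$ (several lengths can have comparable $q_\ell/T(\ell)$), so, as in the paper's Lemma~\ref{lem:lower_maxsum}, you should either take all irrelevant tokens equal to a single token $u$ in the adversarial test sequence, or argue only that the positions carrying $o$ receive attention $O(N^{-C})$ while some irrelevant token retains attention $\gtrsim 1/T$, which still rules out $o$ as the argmax.
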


\begin{rema}\,
\begin{itemize}[leftmargin=1em,topsep=0.2mm]
    \item  
    Note that the training data only contain sequences with $\ell_1 = \ell_2$,
and thus a positional shortcut (as illustrated in Figure~\ref{fig:problem_setup}) can still achieve 100\% training accuracy.
However, since the OOD test data include sequences with $\ell_1 \neq \ell_2$, such shortcuts inevitably fail.
Our main theorems show that the pretrained transformer avoids such shortcuts when the max-sum ratio is below a certain threshold, i.e., when the data distribution is sufficiently diverse.
    \item We also provide a tight $\Theta(\Ntrg^{-1})$ characterization of the max-sum ratio threshold,  
    indicating that increasing the number of possible triggers makes OOD generalization more difficult. The underlying mechanism is discussed in the ensuing subsection.
\end{itemize}
\end{rema}

\paragraph{Learning under Empirical Loss.} Our next result establishes (via gradient concentration) similar transition behavior in the finite-sample setting. 

\begin{theo}[Finite Sample Setting] \label{theo:finite_sample}
    Suppose we run Algorithm~\ref{alg:pretraining}  
    with the same learning rate scaling as in Theorem~\ref{theo:infinite_sample},  
    and with sample sizes $M_{KQ}\gtrsim \mathrm{poly}\log N\cdot  \frac{N^3}{\Ntrg^2} \left( \sum_{\ell} \sqrt{q_\ell} \right)^2 ~~\text{and}~~ M_V\gtrsim \mathrm{poly}\log N\cdot  \frac{N^5}{\Ntrg^2} \left( \frac{\sum_{\ell \in \mathcal{S}} \sqrt{q_\ell}}{\sum_{\ell \in \mathcal{S}} q_\ell \ell^{-1}} \right)^2$.
    Then, with high probability there exist $\epsilon'_1(\Ntrg),\epsilon'_2(\Ntrg)=\Theta(\Ntrg^{-1})$ such that the assertion of Theorem~\ref{theo:infinite_sample} holds by substituting $(\epsilon'_1,\epsilon'_2)$ for $(\epsilon_1,\epsilon_2)$.
\end{theo}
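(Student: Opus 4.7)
The strategy is to reduce Theorem~\ref{theo:finite_sample} to Theorem~\ref{theo:infinite_sample} via concentration: if both stages' empirical gradients in Algorithm~\ref{alg:pretraining} are uniformly close to their population counterparts, then $(\vW_{KQ}(1),\vW_V(1))$ inherit the structural properties driving the infinite-sample analysis, and the max-sum-ratio dichotomy persists up to $\Theta(1)$ multiplicative perturbation of the thresholds. The plan is to (i)~decompose each stage's empirical gradient as the population mean plus a mean-zero deviation, (ii)~apply a Bernstein-type concentration inequality to bound this deviation, and (iii)~check that the chain of arguments behind Theorem~\ref{theo:infinite_sample} is robust to such small perturbations in $(\vW_{KQ}(1),\vW_V(1))$.

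For stage~1 at initialization $\vW_{KQ}(0)=\zero$ the attention is uniform, so the per-sample gradient $\vg_V^{(i)}$ is an outer product whose trigger-output ``signal'' component scales like $\ell^{-1}$ on a sequence of length $2\ell+3$, while its overall Frobenius norm stays $O(1)$. The population gradient therefore carries signal of order $\sum_\ell q_\ell\ell^{-1}$. I would apply a scalar Bernstein bound to each relevant entry of $\tfrac{1}{M_V}\sum_i\vg_V^{(i)}-\E[\vg_V]$ and union-bound over the $O(N^2)$ relevant entries; the per-entry variance aggregates across $\ell$ via Cauchy--Schwarz, producing the $\bigl(\sum_\ell\sqrt{q_\ell}\bigr)^{2}$ factor. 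To push the entrywise deviation below the signal $\sum_\ell q_\ell\ell^{-1}$ with the resolution $\Theta(\Ntrg/N^{5/2})$ required to preserve the $\Theta(\Ntrg^{-1})$ threshold on $R(\mathcal{D}_\ell)$, the sample size $M_V$ must scale as the stated $\mathrm{polylog}(N)\cdot\frac{N^{5}}{\Ntrg^{2}}\bigl(\sum_\ell\sqrt{q_\ell}\big/\sum_\ell q_\ell\ell^{-1}\bigr)^{2}$.

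For stage~2 I condition on the high-probability event that $\vW_V(1)$ equals its population counterpart $\overline{\vW}_V(1)$ up to a negligible perturbation. Because stage~2 uses a disjoint batch $\{M_V{+}1,\dots,M_V{+}M_{KQ}\}$, conditional independence lets me apply Bernstein cleanly to $\tfrac{1}{M_{KQ}}\sum_i\nabla_{\vW_{KQ}}\cL(\vX^{(i)};\zero,\vW_V(1))$. Here each per-sample gradient has Frobenius norm $O(1)$ uniformly in $\ell$ since softmax outputs are probability vectors; this removes the $\ell^{-1}$ reweighting, and hence the $1/\sum_\ell q_\ell\ell^{-1}$ factor, from the complexity, leaving the smaller requirement $\mathrm{polylog}(N)\cdot\frac{N^{3}}{\Ntrg^{2}}\bigl(\sum_\ell\sqrt{q_\ell}\bigr)^{2}$ on $M_{KQ}$.

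The main obstacle is the two-stage composition: $\vW_V(1)$ is itself random and enters the stage-2 gradient nonlinearly through the forward softmax, so I must show that the block structure of $\vW_V(1)$ implementing the previous-token lookup -- the key ingredient in the proof of Theorem~\ref{theo:infinite_sample} -- is preserved entrywise under the stage-1 deviation, and that the softmax Jacobian does not amplify this perturbation during stage~2. Once both stages are concentrated, plugging the resulting $(\vW_{KQ}(1),\vW_V(1))$ into the comparison between the induction-head and positional-shortcut contributions to the output logit shifts the two thresholds $\epsilon_1,\epsilon_2$ only by lower-order multiplicative factors, yielding $\epsilon'_1,\epsilon'_2=\Theta(\Ntrg^{-1})$ and the claimed finite-sample dichotomy.
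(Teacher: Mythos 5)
Your proposal follows essentially the same route as the paper: concentrate each stage's empirical gradient around its population counterpart (stage 2 on the fresh batch, conditioned on the stage-1 event that $\vW_V(1)$ is accurate enough that the predicted probabilities stay within $O(\E[T^{-1}]/N^2)$ of $1/N$), then rerun the infinite-sample attention-score comparison with the deviations absorbed into the $O_\infty(\tilde\eta\,\Ntrg/N)$ error term, which is exactly how the paper derives the $M_V\propto N^5$ and $M_{KQ}\propto N^3$ requirements and the perturbed thresholds $\epsilon_1',\epsilon_2'=\Theta(\Ntrg^{-1})$. The only difference is technical rather than structural: the paper uses matrix Hoeffding in spectral norm with an explicit decomposition over $(\ell,k)$-conditioned subsamples (whence the $\sum_\ell\sqrt{q_\ell}$ factor), whereas you propose entrywise Bernstein plus a union bound, which delivers the same factors since the downstream argument only needs entrywise control.
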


\subsection{Mechanism of Algorithm Selection}\label{subsec:mechanism}
Now we take a closer look at how the \emph{positional shortcut} and the \emph{induction head} are implemented in the attention.
We begin with the case where the support of $\mathcal{D}_\ell$ is a singleton and $\Ntrg=1$.
After a single gradient step, the parameter matrix $\vW_{KQ}$ can be shown to implement a form of associative memory over the relevant embedding vectors.

\begin{lemm}[Informal]
Let $\mathcal{D}_\ell=\{\ell\}$, and assume the trigger consists of a single token $w$. After one gradient step of Algorithm~\ref{alg:pretraining}, $\vW_{KQ}$ takes the form
\begin{align}
\vW_{KQ}&\propto T(\ell)^{-1}
\begin{bmatrix}
(\vp_{\ell+2}+\vp_{\ell+3})\\
\zero\\
\ve_w
\end{bmatrix}
\begin{bmatrix}
\vp_{T(\ell)}^\top \!& \ve_w^\top \!& \zero
\end{bmatrix},
\end{align}
where $T(\ell)=2\ell+3$ denotes the position of the second occurrence of the trigger token.
\end{lemm}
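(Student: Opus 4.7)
The plan is to unfold Algorithm~\ref{alg:pretraining} in its two stages, exploiting the zero initialization and the one-hot structure of Definition~\ref{defi:embedding} so that each gradient update reduces to a small sum of outer products. At $\vW_V(0)=\vW_{KQ}(0)=\vO$ the attention is uniform and the pre-softmax logits vanish, so the per-sample value gradient collapses to $(\tfrac{1}{N}\one-\ve_o)\bar{\vx}^\top$ with $\bar{\vx}=\tfrac{1}{T}\sum_{t=1}^T \vx_t$ and $o=z_{T+1}$. Averaging over the data distribution with $\Ntrg=1$, fixed trigger $w$, and fixed length $\ell$, one obtains an explicit $\vW_V(1)$ whose $o$-th row is, up to an $o$-independent background, the label-conditional mean $\E[\bar{\vx}\mid z_{T+1}=o]$.

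Next, at $(\vW_V(1), \vW_{KQ}=\vO)$ the attention remains uniform, so $f=\vW_V(1)\bar{\vx}$ and the chain rule through the softmax yields a per-sample attention gradient of the rank-one form $\tfrac{1}{T}\vC\vW_V(1)^\top \vg\,\vx_T^\top$ with $\vC=\vX_{1:T}\vX_{1:T}^\top-T\bar{\vx}\bar{\vx}^\top$ and $\vg=\mathrm{Softmax}(f)-\ve_o$. Using $\one^\top\vg=0$ and the small-logit regime $f=O(\eta_V)$, the identity $\vW_V(1)^\top\vg=\eta_V\E_{z'}[\bar{\vx}'([\mathrm{Softmax}(f)]_{o'}-\mathbbm{1}[o'=o])]$ isolates a dominant $o$-dependent direction $-\tfrac{\eta_V}{T(N-1)}[\zero;\ve_o;\ve_o]$ up to $O(1/N^2)$ corrections; this block vector arises structurally because $\ve_o$ sits in exactly two embedding slots within a pretraining sample, namely the current-token slot at position $\ell+2$ and the previous-token slot at position $\ell+3$. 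Applying $\vC$, the one-hot identity $\vx_t^\top[\zero;\ve_o;\ve_o]=\mathbbm{1}[z_t=o]+\mathbbm{1}[z_{t-1}=o]$ vanishes outside $t\in\{\ell+2,\ell+3\}$ up to $O(\ell/N)$-probability coincidences, yielding $\vC[\zero;\ve_o;\ve_o]=\vx_{\ell+2}+\vx_{\ell+3}-2\bar{\vx}+o(1)$.

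To finish, I would take the expectation of the rank-one outer product with $\vx_T^\top$. Since $o$, $z_{\ell+3}$, and $z_{T-1}$ are mutually independent, the expectation factors as $(\E[\vx_{\ell+2}]+\E[\vx_{\ell+3}])\E[\vx_T]^\top$; plugging in Definition~\ref{defi:embedding} and collecting the $O(1)$ components (every irrelevant-token expectation contributes only $\tfrac{1}{N-1}(\one-\ve_w)=O(1/N)$), the surviving factors are exactly the two vectors $[\vp_{\ell+2}+\vp_{\ell+3};\zero;\ve_w]$ and $[\vp_T;\ve_w;\zero]$, with the $T(\ell)^{-1}=(2\ell+3)^{-1}$ prefactor inherited from the $\tfrac{1}{T}$ normalization appearing in the softmax derivative at the uniform point.

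The main obstacle is the apparent non-rank-one $-2\bar{\vx}$ correction, which does not match the claimed form. I would resolve it by observing that its positional block, $-\tfrac{2}{T}\sum_t\vp_t$, is a shift uniform in the key index $t$ and is therefore absorbed by the translation invariance of the softmax when $\vW_{KQ}$ is used to form attention scores; the remaining blocks of $-2\bar{\vx}$ are $O(1/T)$ in operator norm relative to the $\vv_L\vv_R^\top$ structure. This is precisely what the informal ``$\propto$'' elides, and it also makes the role of the two mechanisms transparent: the $(1,1)$ block of $\vv_L\vv_R^\top$ carries the positional-shortcut pattern $(\vp_{\ell+2}+\vp_{\ell+3})\vp_T^\top$, while the $(3,2)$ block carries the induction-head pattern $\ve_w\ve_w^\top$, thus planting both mechanisms in $\vW_{KQ}(1)$ after a single step.
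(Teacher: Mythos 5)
Your proposal is correct and takes essentially the same route as the paper's own derivation (Lemmas~\ref{lemm:wv_calculation} and~\ref{lemm:kq_modules}): one gradient step on $\vW_V$ at uniform attention and uniform softmax output, then the rank-one softmax-Jacobian form of the $\vW_{KQ}$ gradient in which the label-dependent signal is confined to the two $\ve_o$-carrying embedding slots at positions $\ell+2$ and $\ell+3$, followed by an expectation factorization that yields the displayed outer product up to $O(N^{-1})$ terms. The only difference is bookkeeping: the paper's rigorous version retains the $-2T^{-2}$ correction explicitly and controls it via the magnitude comparison $T^{-2}\lesssim T^{-1}$ (Assumption~\ref{assumption:bounded_ell}), which your $O(1/T)$ relative-size remark already supplies, whereas the softmax shift-invariance argument covers only the positional block of that correction and only for test lengths matching $T(\ell)$, so it is a helpful but inessential addition.
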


To further simplify the exposition, we ignore the cross terms between $\vp$ and $\ve$ and assume that $\vW_{KQ}$ takes the following form:
\begin{align}
\vW_{KQ}&\propto 
\underbrace{T(\ell)^{-1}
\begin{bmatrix}
(\vp_{\ell+2}+\vp_{\ell+3})\\
\zero\\
\zero
\end{bmatrix}
\begin{bmatrix}
\vp_{T(\ell)}^\top \!& \zero^\top \!& \zero^\top
\end{bmatrix}}_{\text{positional shortcut}}
\;+\;
\underbrace{T(\ell)^{-1}
\begin{bmatrix}
\zero\\
\zero\\
\ve_w
\end{bmatrix}
\begin{bmatrix}
\zero \!& \ve_w^\top \!& \zero
\end{bmatrix}}_{\text{induction head}}
\label{eq:sketch_kq}
\end{align}

Now consider an OOD test sequence as in Figure~\ref{fig:problem_setup}, whose total length matches the training sequence but whose first and second subtext lengths differ: $\ell_1+\ell_2=2\ell$, $\ell_1\neq \ell_2$.
In this case, the two terms in \eqref{eq:sketch_kq} contribute to the attention score
\[
\mathrm{Softmax}\bigl(\vX_{1:T_{\mathrm{test}}}^\top \vW_{KQ}\,\vx_{T_{\mathrm{test}}}\bigr)
\quad\text{with}\quad
T_{\mathrm{test}}=\ell_1+\ell_2+3=T(\ell)
\]
as follows (see Figure~\ref{fig:heatmap_kq}), noting that $\vx_{T_{\mathrm{test}}}=\begin{bmatrix}\vp_{T(\ell)}& \ve_w& *\end{bmatrix}^\top$:
\begin{itemize}[leftmargin=1em,topsep=0.3mm]
    \item \textbf{1st term (positional shortcut).} Regardless of $\ell_1$, it attends to the positions $\ell+2=({T_{\mathrm{test}}}+1)/2$ and $\ell+3=({T_{\mathrm{test}}}+3)/2$. In particular, for the former, even though $\ell_1 \neq \ell_2$, the transformer incorrectly associates the second trigger position $T_{\mathrm{test}}$ with $({T_{\mathrm{test}}}+1)/2$ as if $\ell_1 = \ell_2$.\footnote{For the latter position $({T_{\mathrm{test}}}+3)/2$, the model also attends to the same token via the previous-token embedding. This follows from a detailed computation of $\vW^V$, which we omit here. \vspace{-2mm}}
    \item \textbf{2nd term (induction head).} It attends to tokens whose third embedding block equals $\ve_w$, i.e., tokens whose \emph{previous} token is the trigger $w$. In other words, it scans for the trigger $w=z_{T_{\mathrm{test}}}$ and then attends to its \emph{next} token — this is precisely the desired induction head behavior.
\end{itemize}

Thus, the learned attention matrix implements a mixture of positional shortcut and induction head, and the relative strength of these components determines which algorithm is ultimately selected. Two factors affect this balance: the diversity of irrelevant token length $\ell$ and the trigger size $\Ntrg$.

\paragraph{Length distribution $\mathcal{D}_\ell$.}
Equation~\eqref{eq:sketch_kq} describes the case where $\ell$ is deterministic. When $\ell$ is distributed according to  $\mathcal{D}_\ell$, $\vW_{KQ}$ becomes a superposition over $\ell$:
\begin{equation}
\vW_{KQ}(1)\propto 
\sum_{\ell} q_\ell\, T(\ell)^{-1}
\begin{bmatrix}
(\vp_{\ell+2}+\vp_{\ell+3})\\
\zero\\
\zero
\end{bmatrix}
\begin{bmatrix}
\vp_{T(\ell)}^\top \!& \zero^\top \!& \zero^\top
\end{bmatrix}
\;+\;
\E\!\bigl[T(\ell)^{-1}\bigr]
\begin{bmatrix}
\zero\\
\zero\\
\ve_w
\end{bmatrix}
\begin{bmatrix}
\zero \!& \ve_w^\top \!& \zero
\end{bmatrix}.
\end{equation}

Here, the first term spreads its mass across multiple positions and is consequently weakened, whereas the second term does not depend on $\ell$ and retains its strength. As a result, the magnitude of the former is at most $\max_\ell q_\ell\, T(\ell)^{-1}$, while that of the latter is $\sum_\ell q_\ell\, T(\ell)^{-1}$. Since $T(\ell)\asymp \ell$, the ratio between the strengths of positional memory and the induction head is nothing but the max-sum ratio $R(\mathcal{D}_\ell)$. This explains why the max-sum ratio governs algorithm selection.

\paragraph{Trigger size $\Ntrg$.}
In \eqref{eq:sketch_kq}, when the trigger size $\Ntrg\ge 2$, the second term is replaced by
\[
\Ntrg^{-1}\!\sum_{w\in[\Ntrg]} T(\ell)^{-1}
\begin{bmatrix}
\zero\\
\zero\\
\ve_w
\end{bmatrix}
\begin{bmatrix}
\zero \!& \ve_w^\top \!& \zero
\end{bmatrix},
\]
while the first term remains unchanged. Hence, the induction-head signal is split across trigger types and its strength decreases proportionally to $\Ntrg^{-1}$. This explains $\Theta(\Ntrg^{-1})$ threshold in Theorem~\ref{theo:infinite_sample}.
\begin{figure}[t]
    \centering
    \begin{subfigure}{0.48\textwidth}
        \centering
        \includegraphics[width=0.9\linewidth]{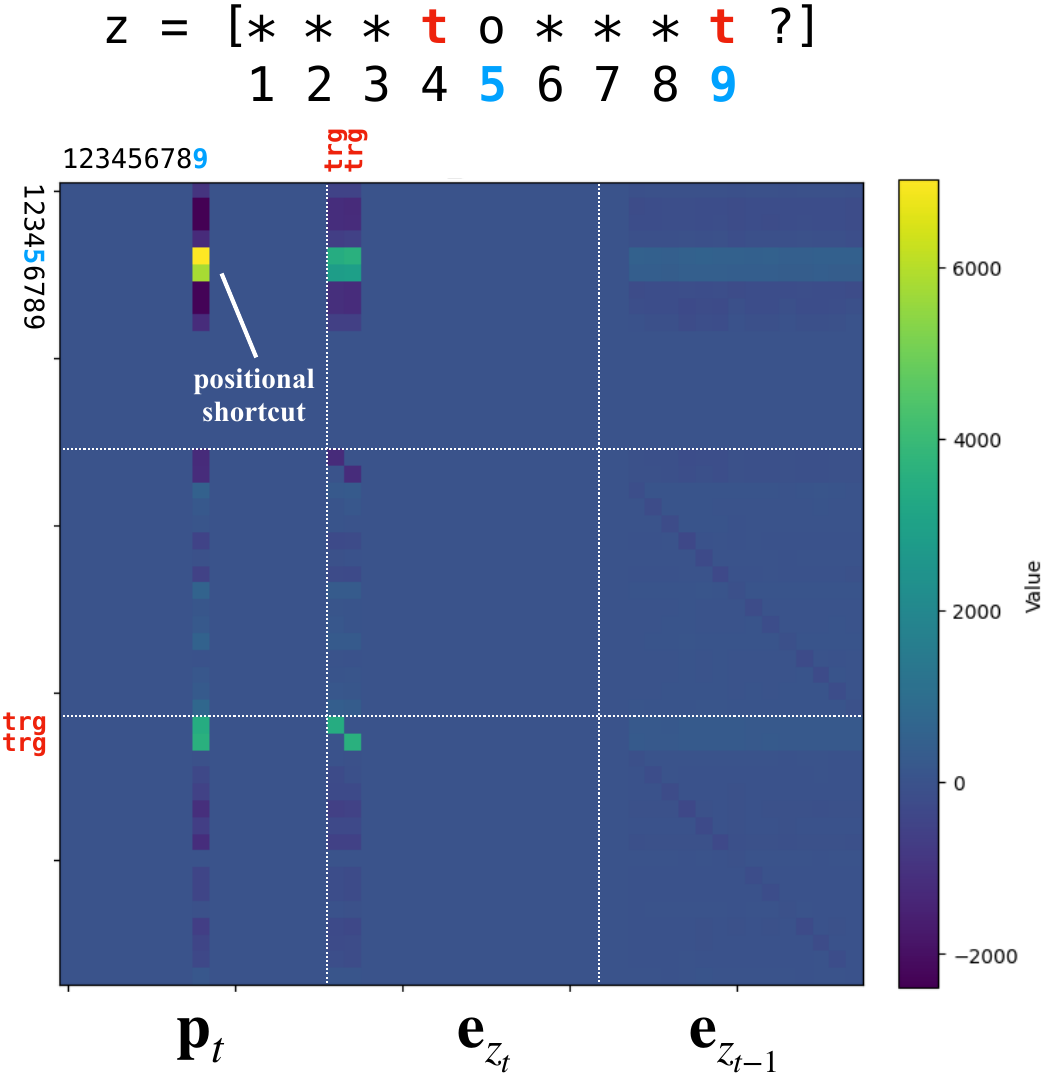}
        \caption{Attention heatmap for $\ell\sim\mathrm{Unif}(\{3\})$.}
    \end{subfigure}
    \hfill
    \begin{subfigure}{0.48\textwidth}
        \centering
        \includegraphics[width=0.9\linewidth]{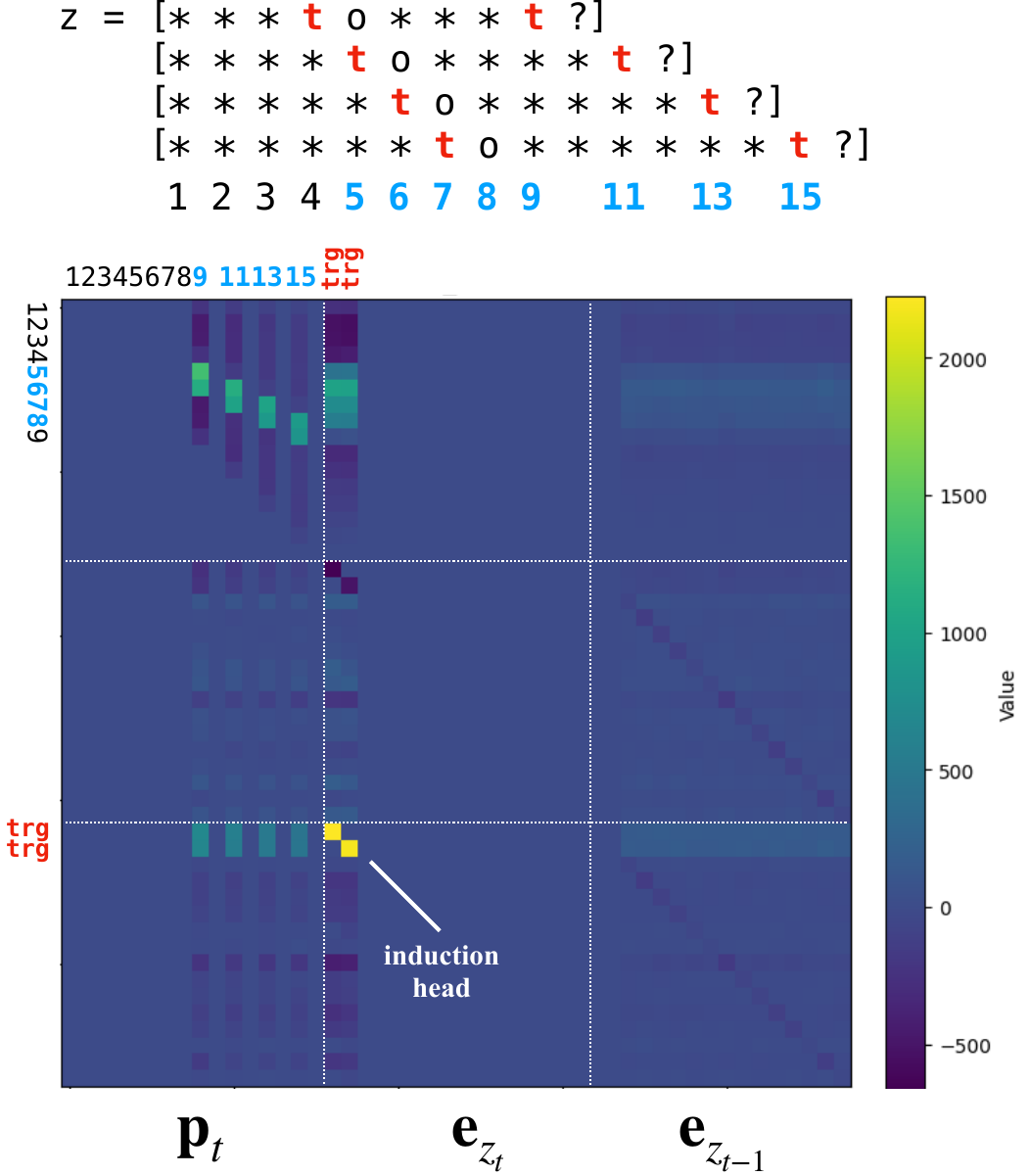}
        \caption{Attention heatmap for $\ell\sim\mathrm{Unif}(\{3,4,5,6\})$.}
    \end{subfigure}

    \caption{\small
        Attention heatmaps of $\vW^{KQ}$ when the pretraining sequence diversity is small (left) and large (right).
        In the left figure, there is a strong \textit{positional shortcut} that links position~9 to position~5 (the correct position in pretraining data),
        whereas in the right figure, the trigger positions are more dispersed, weakening this shortcut.
        Instead, a signal corresponding to \textit{induction head} -- detecting tokens after trigger -- becomes dominant.
    }
    \label{fig:heatmap_kq}
\end{figure}

The above intuition is visualized in an experiment reported in Figure~\ref{fig:heatmap_kq}. 

\begin{exam}
In Figure~\ref{fig:heatmap_kq}, we set $N=16$ and $\Ntrg=2$, train the model with $\cD(\ell)={3}$ and $\cD(\ell)=\mathrm{Unif}([3:8])$, and visualize the resulting $\vW^{KQ}$.
The trigger-token set is $\{1,2\}$. The training setting is the same as that in Section~\ref{sec:exp_ours}.
\begin{itemize}[leftmargin=1em]
    \item \textbf{(Left):} when $\cD(\ell)=\{3\}$, $\vW^{KQ}$ has a strong component that maps position 9 to position 5. Although it also contains an induction head component that maps between trigger tokens, it is comparatively weak compared to the positional signal.
    \item \textbf{(Right):} when $\cD(\ell)=\mathrm{Unif}([3:8])$, $\vW^{KQ}$ exhibits a superposition of signals mapping position $k$ to $(k+1)/2$, which results in each individual signal being weakened. In contrast, the induction head signal does not diminish.
\end{itemize}
\end{exam}

\subsection{Tradeoff between Context Length and OOD Generalization}
As discussed in Section~\ref{transition}, the max-sum ratio captures not only the overall “width’’ of the distribution but also decreases as mass shifts toward larger $\ell$.
This effect becomes especially pronounced near the $\Theta(\Ntrg^{-1})$ threshold identified in Theorems~\ref{theo:infinite_sample} and~\ref{theo:finite_sample}:
\begin{exam}
Consider the max-sum ratio for the uniform distribution~\eqref{eq:max_sum_ratio_uniform}.
If $\ell_0 = 1$, then $R(\ell_0,K) = \Theta((\log K)^{-1})$.
To attain a max-sum ratio of order $O(\Ntrg^{-1})$ -- the OOD generalization threshold in Theorems~\ref{theo:infinite_sample} and~\ref{theo:finite_sample} -- the support width must satisfy $K \gtrsim \exp(\Ntrg)$.
By contrast, if $\ell_0 = \Theta(\Ntrg)$, then it suffices to take $K = \Theta(\Ntrg)$ to obtain a max-sum ratio of $O(\Ntrg^{-1})$.
\end{exam}

Therefore, merely ``widening'' the distribution may not be efficient to reduce the max-sum ratio; biasing pretraining toward longer contexts is substantially more effective. This, in turn, suggests that reliably learning the induction-head mechanism (and hence achieving OOD generalization) may incur greater computational cost due to longer training sequences.

We now consider the “optimal’’ shape of the pretraining sequence (under the constraint in Definition~\ref{defi:unigram_ours}) that learns the induction-head mechanism with minimal compute.
Since the forward-pass cost scales quadratically with context length, we seek short contexts while maintaining a favorable max-sum ratio.
Formally, for $U \ge \Ntrg$, consider the optimization problem
\begin{equation}
        \mathbb{P}:\begin{cases}
            \text{minimize}~~ &{\sum_{\ell=1}^U {q_\ell}\ell^2}\\
            \text{subject to}~~&\frac{\max_{\ell=1}^U q_\ell\ell^{-1}}{\sum_{\ell=1}^U q_\ell \ell^{-1}}\leq \Ntrg^{-1}\\
            &\sum_{\ell=1}^U q_\ell=1\\
            
            &q_1,\dotsc,q_U\geq 0
        \end{cases}
    \end{equation}

This objective is the sample-average forward-pass cost in pretraining; the constraints enforce the OOD threshold from Theorem~7 and the normalization of $(q_\ell)_{\ell=1}^U$. 
This problem is a linear program whose optimizer is characterized below.

\begin{prop}\label{prop:optimal_linear}
The optimal solution of problem $\mathbb{P}$ assigns linearly increasing probability mass to the first $\Ntrg$ context lengths and zero to the remaining ones:
\[
(q_1, q_2, \dotsc, q_U) = Z^{-1}{(1, 2, \dotsc, \Ntrg, 0, \dotsc, 0)},
\]
where the normalization constant is $Z = \Ntrg(\Ntrg + 1)/2.$
\end{prop}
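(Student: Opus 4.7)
The plan is to recognize that problem $\mathbb{P}$ is a linear program and then verify the candidate $q^\star_\ell := \ell/Z$ (for $\ell \leq \Ntrg$, and zero otherwise) against the KKT optimality conditions, which are sufficient for LPs. First, I would rewrite the ratio constraint $R(\mathcal{D}_\ell) \leq \Ntrg^{-1}$ as the family of $U$ linear inequalities $\Ntrg q_\ell - \ell \sum_{\ell'} q_{\ell'}/\ell' \leq 0$ for each $\ell \in [U]$. Feasibility of $q^\star$ is then immediate: writing $S(q) := \sum_\ell q_\ell/\ell$, one computes $S(q^\star) = \Ntrg/Z = 2/(\Ntrg+1)$, so that for $\ell \leq \Ntrg$ the ratio $q^\star_\ell/\ell = 1/Z$ equals $S(q^\star)/\Ntrg$ -- every active constraint is tight -- while for $\ell > \Ntrg$ the inequality holds trivially.

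The heart of the argument is exhibiting a dual certificate. With multiplier $\lambda \in \mathbb{R}$ for the normalization equality, $\mu_\ell \geq 0$ for each ratio constraint, and $\nu_\ell \geq 0$ for non-negativity, the stationarity condition $\partial L / \partial q_j = 0$ simplifies to
\[
\mu_j - \nu_j \;=\; \lambda + A/j - j^2, \qquad A \;:=\; \Ntrg^{-1} \sum_\ell \mu_\ell \ell,
\]
where the nonlocal term $A/j$ arises because each $q_j$ enters the shared aggregate $S(q)$ through $1/j$. Complementary slackness against $q^\star$ forces $\nu_j = 0$ for $j \leq \Ntrg$ and $\mu_j = 0$ for $j > \Ntrg$, so $\mu_j = \lambda + A/j - j^2$ on the support. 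Substituting this into the self-consistency identity $\Ntrg A = \sum_j \mu_j j$ and invoking the closed forms $\sum_{j=1}^{\Ntrg} j = Z$ and $\sum_{j=1}^{\Ntrg} j^3 = Z^2$ cancels the $A$-contribution and pins down $\lambda = Z = \Ntrg(\Ntrg+1)/2$.

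The remaining KKT requirements $\mu_j \geq 0$ on $[1, \Ntrg]$ and $\nu_j \geq 0$ on $(\Ntrg, U]$ translate into a single two-sided bound $\max_{j \leq \Ntrg}(j^3 - Zj) \leq A \leq \min_{j > \Ntrg}(j^3 - Zj)$. Since $j \mapsto j^3 - Zj$ is convex on $\mathbb{R}_+$ with derivative $3j^2 - Z$ positive for all $j \geq \Ntrg$, the left max is attained at $j = \Ntrg$ and the right min at $j = \Ntrg + 1$, yielding the admissible interval $[\Ntrg^2(\Ntrg-1)/2,\, (\Ntrg+1)^2(\Ntrg+2)/2]$, which is non-empty. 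Any $A$ in this interval completes the KKT certificate and hence establishes optimality of $q^\star$.

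The main obstacle is the self-consistency created by $S(q)$: because every ratio constraint depends on every $q_{\ell'}$, the multiplier $\mu$ enters stationarity only through the aggregate $A$, so one cannot solve the KKT system index-by-index. Untangling this fixed point -- simultaneously fixing $\lambda$ via the consistency identity and controlling the sign of $\mu$ through the convexity of $j \mapsto j^3 - Zj$ -- is what turns a routine LP duality exercise into one that actually delivers the clean linear-in-$\ell$ form of $q^\star$ together with the precise cutoff at $\Ntrg$.
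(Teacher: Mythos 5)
Your KKT-certificate argument for optimality is correct and is essentially the paper's own route: the paper likewise verifies the KKT system of the LP, solving for the ratio-constraint multipliers through the rank-deficient system $(\vI_{\Ntrg}-\Ntrg^{-1}\bm{1}\bm{1}^\top)\vlambda = -(\ell^3+\nu\ell)_\ell$ and using the kernel direction $\bm{1}$ to shift the multipliers into the nonnegative orthant; your reduction of that freedom to the single scalar aggregate $A=\Ntrg^{-1}\sum_\ell \mu_\ell\ell$, with the admissible window $\Ntrg^2(\Ntrg-1)/2 \le A \le (\Ntrg+1)^2(\Ntrg+2)/2$, is the same mechanism written more cleanly (your kernel shift is exactly the translation of $A$). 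All the computations check out: feasibility with every ratio constraint tight on $[\Ntrg]$, complementary slackness forcing $\nu_j=0$ on the support and $\mu_j=0$ off it, $\lambda=Z$ from $\sum_{j\le\Ntrg} j = Z$ and $\sum_{j\le\Ntrg} j^3=Z^2$, and the monotonicity of $j\mapsto j^3-Zj$ on $[\Ntrg,\infty)$ giving a nonempty interval for $A$. (Minor nit: convexity plus positivity of the derivative for $j\ge\Ntrg$ does not by itself place the left maximum at $j=\Ntrg$; you also need the endpoint comparison $1-Z\le \Ntrg^3-Z\Ntrg$, which is immediate — and in any case only nonemptiness of the interval is needed.)

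The one genuine omission is uniqueness. The proposition asserts that \emph{the} optimizer is $q^\star=Z^{-1}(1,\dotsc,\Ntrg,0,\dotsc,0)$, and the paper spends a paragraph on this, invoking Mangasarian's perturbation criterion (optimality preserved under all sufficiently small linear perturbations of the objective). Your certificate only shows that $q^\star$ is \emph{an} optimal solution; a priori the LP could have a face of optimizers. The gap is easy to close inside your own framework: pick $A$ in the \emph{open} interval $\bigl(\Ntrg^2(\Ntrg-1)/2,\,(\Ntrg+1)^2(\Ntrg+2)/2\bigr)$, which makes the certificate strictly complementary, i.e.\ $\mu_j>0$ for all $j\le\Ntrg$ and $\nu_j>0$ for all $j>\Ntrg$. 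Complementary slackness against this fixed dual solution then forces every optimal primal point to satisfy $q_j=0$ for $j>\Ntrg$ and $q_j/j=\Ntrg^{-1}\sum_{\ell}q_\ell/\ell$ for all $j\le\Ntrg$, and together with $\sum_j q_j=1$ these equations have $q^\star$ as their unique solution. Adding this step makes your argument a complete (and arguably more self-contained) proof of the proposition.
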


In other words, to minimize average forward-pass cost per sample while meeting the OOD generalization constraint, the pretraining distribution should be linear in the context length, making $q_\ell \ell^{-1}$ uniform over $\ell\le \Ntrg$.
We note that if one optimizes a different objective (e.g., incorporating sample complexity), the optimal pretraining distribution may change. %
\section{Numerical Experiments}\label{sec:experiment}
\subsection{Experiments for Theoretical Setting}\label{sec:exp_ours}

To observe the transition from positional shortcut to induction head, we first consider the architecture defined in~\eqref{eq:our_transformer} and conduct experiments under the data model described in Definition~\ref{defi:unigram_general}.

\subsubsection{Experimental Setup}
\paragraph{Dataset.}
We generate training and test data according to the trigger-output setting in Definition~\ref{defi:unigram_general}.
\begin{itemize}[leftmargin=1em]
\item In the \emph{pretraining data}, the lengths of irrelevant tokens $\ell_1$, $\ell_2$ are always equal.
We choose two integers $\ell_\mathrm{min}$ and $\ell_\mathrm{max}$ ($\ell_\mathrm{min}\le \ell_\mathrm{max}$), and length $\ell$ is sampled from $\mathrm{Unif}([\ell_\mathrm{min}, \ell_\mathrm{max}])$.
\item In the \emph{OOD test data}, we shift the position of the first trigger to produce non-periodic sequences. Specifically, we first sample $\ell \sim \mathrm{Unif}([\ell_\mathrm{min}+1, \ell_\mathrm{max}])$, and then sample $\ell_1 \sim \mathrm{Unif}(\{1,\dotsc,2\ell-1\}\setminus\{\ell\})$, defining $\ell_2 = 2\ell - \ell_1$ so that $\ell_1 \neq \ell_2$.
\end{itemize}

\paragraph{Model architecture, embedding, and training.}
We implement a one-layer transformer architecture as defined in~\eqref{eq:our_transformer} with embeddings defined in~\eqref{eq:embedding}. Training follows Algorithm~\ref{alg:pretraining}, and the learning rates for $\vW^V$ and $\vW^{KQ}$ are set to $10^3$ and $10^4$, respectively. Both matrices are trained with the empirical cross entropy loss computed on 8192 training examples.

\subsubsection{Empirical Observations}

\paragraph{OOD Accuracy.}
We conduct experiments for all combinations of $\ell_\mathrm{min}\in[3,15]$ and $\ell_\mathrm{max}\in[3,15]$ such that $\ell_\mathrm{min}<\ell_\mathrm{max}$, and evaluate all models on 1024 OOD test samples. The test accuracies (with different trigger size $\Ntrg$) are presented in Figure~\ref{fig:ood-accuracy-theo}.
\begin{itemize}[leftmargin=1em]
\item OOD accuracy tends to increase as $\ell_\mathrm{max}$ increases (with $\ell_\mathrm{min}$ fixed). This suggests that a greater diversity in the training data biases the model towards the induction head.
\item Comparing the left and right figures, we see that as the trigger size increases, the region where OOD generalization is achieved shifts rightward, suggesting an increased difficulty of induction head learning with larger $\Ntrg$, as predicted by Theorem~\ref{theo:finite_sample}.
\end{itemize}

\paragraph{Error Visualization.}
Our theory predicts two characteristic error modes:

\begin{itemize}[leftmargin=1em,topsep=0.3mm]
\item \emph{Pseudo trigger position.} For non-periodic OOD evaluation data with $\ell_1+\ell_2=2\ell$ and $\ell_1\neq\ell_2$, let $\tilde{\ell}=(\ell_1+\ell_2)/2$. The positional shortcut maps the second-trigger position $\ell_1+\ell_2+3$ to the \emph{pseudo} output position $\tilde{\ell}+2$. Accordingly, we measure the fraction of instances where the model outputs $z_{\tilde{\ell}+2}$ and report this frequency as the \emph{pseudo accuracy rate}.
\item \emph{Leftmost position.} Since the leftmost trigger in the pretraining data typically provides the strongest positional signal, the model may output $z_{\ell_{\mathrm{min}}+2}$ \emph{independent of the second trigger position}. This error mode is especially likely when $\Ntrg$ is small. We record its frequency as the \emph{leftmost rate}.
\end{itemize}

Figure~\ref{fig:shortcut_mistakes} illustrates the existence of these positional shortcuts.
We observe that the error rate due to the pseudo-trigger mechanism is higher near the diagonal, and both errors decline as $\ell_{\mathrm{max}}$ increases. 

\begin{figure}[t]
  \centering
  \begin{subfigure}{0.48\textwidth}
    \centering
    \includegraphics[width=0.9\linewidth]{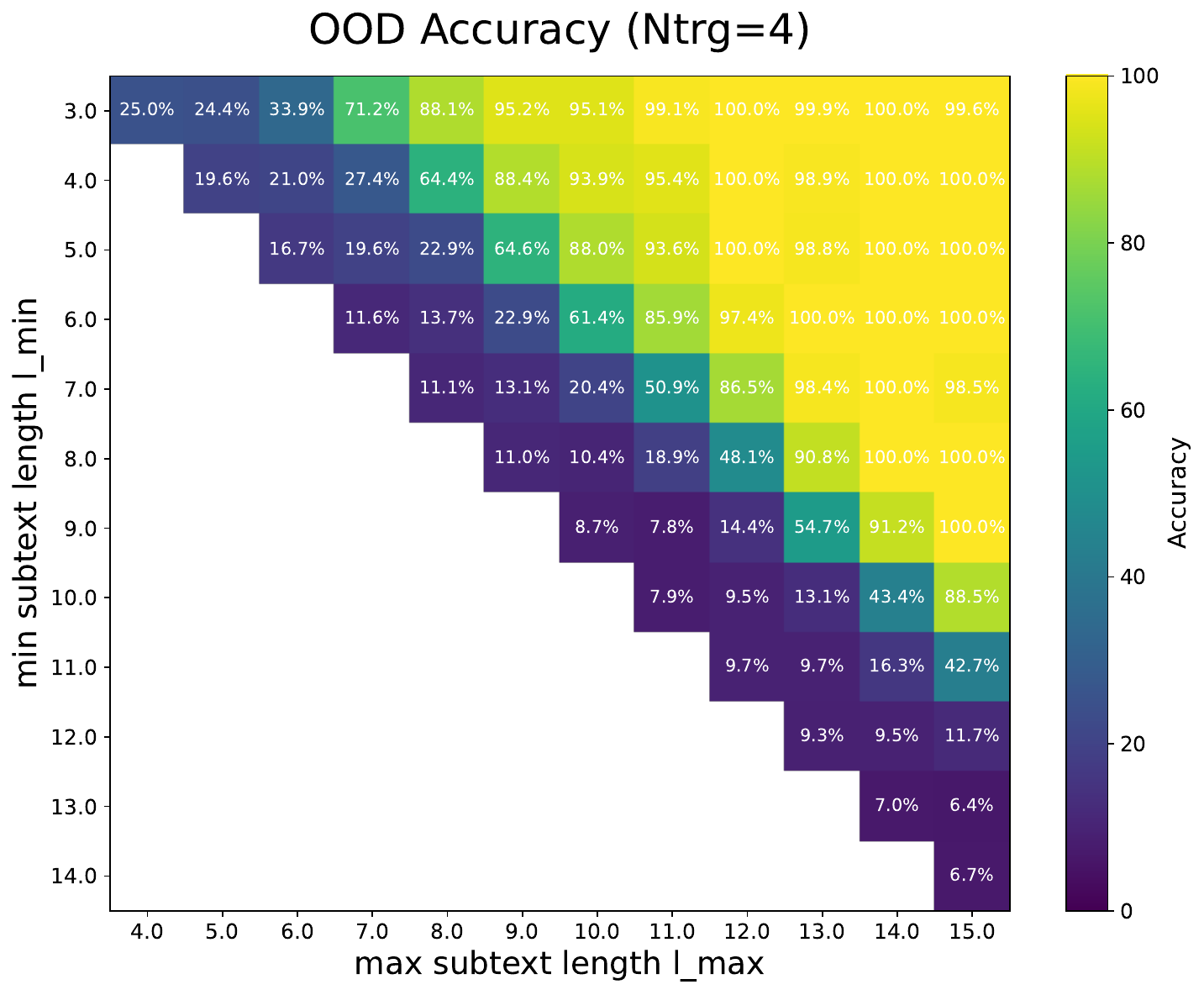}
    \caption{number of triggers $\Ntrg=4$}
    \label{fig:ood-acc-ntrg4}
  \end{subfigure}
  \hfill
  \begin{subfigure}{0.48\textwidth}
    \centering
    \includegraphics[width=0.9\linewidth]{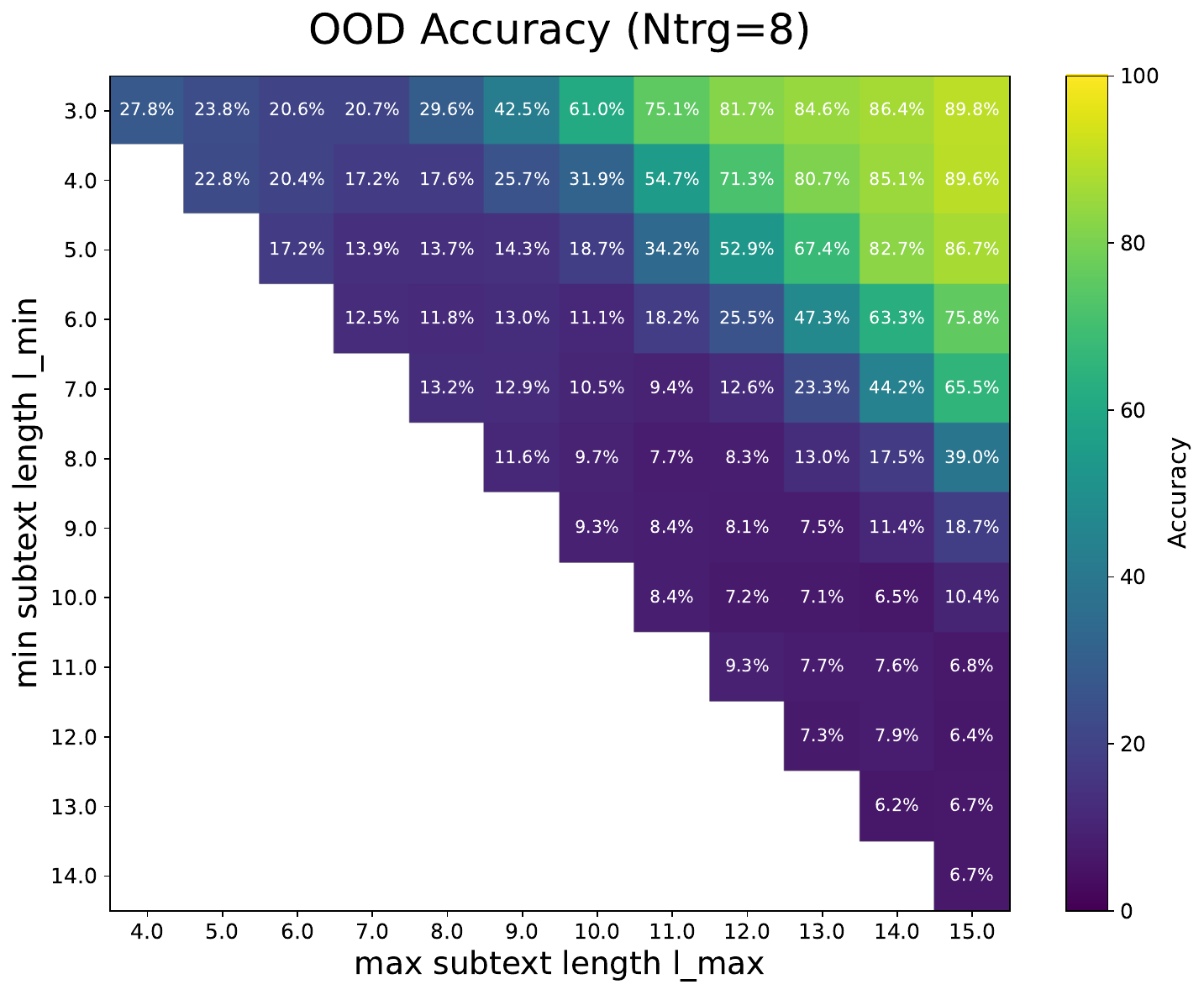}
    \caption{number of triggers $\Ntrg=8$}
    \label{fig:ood-acc-ntrg8}
  \end{subfigure}
  \caption{\small Out-of-distribution accuracy map over varying $\ell_\mathrm{min}$ (vertical) and $\ell_\mathrm{max}$ (horizontal); moving right indicates greater diversity of $\ell$ in the pretraining distribution.}
  \label{fig:ood-accuracy-theo}
  \vspace{-1mm}
\end{figure}

\begin{figure}[!htb]
  \centering
  \begin{subfigure}{0.48\textwidth}
    \centering
    \includegraphics[width=0.9\linewidth]{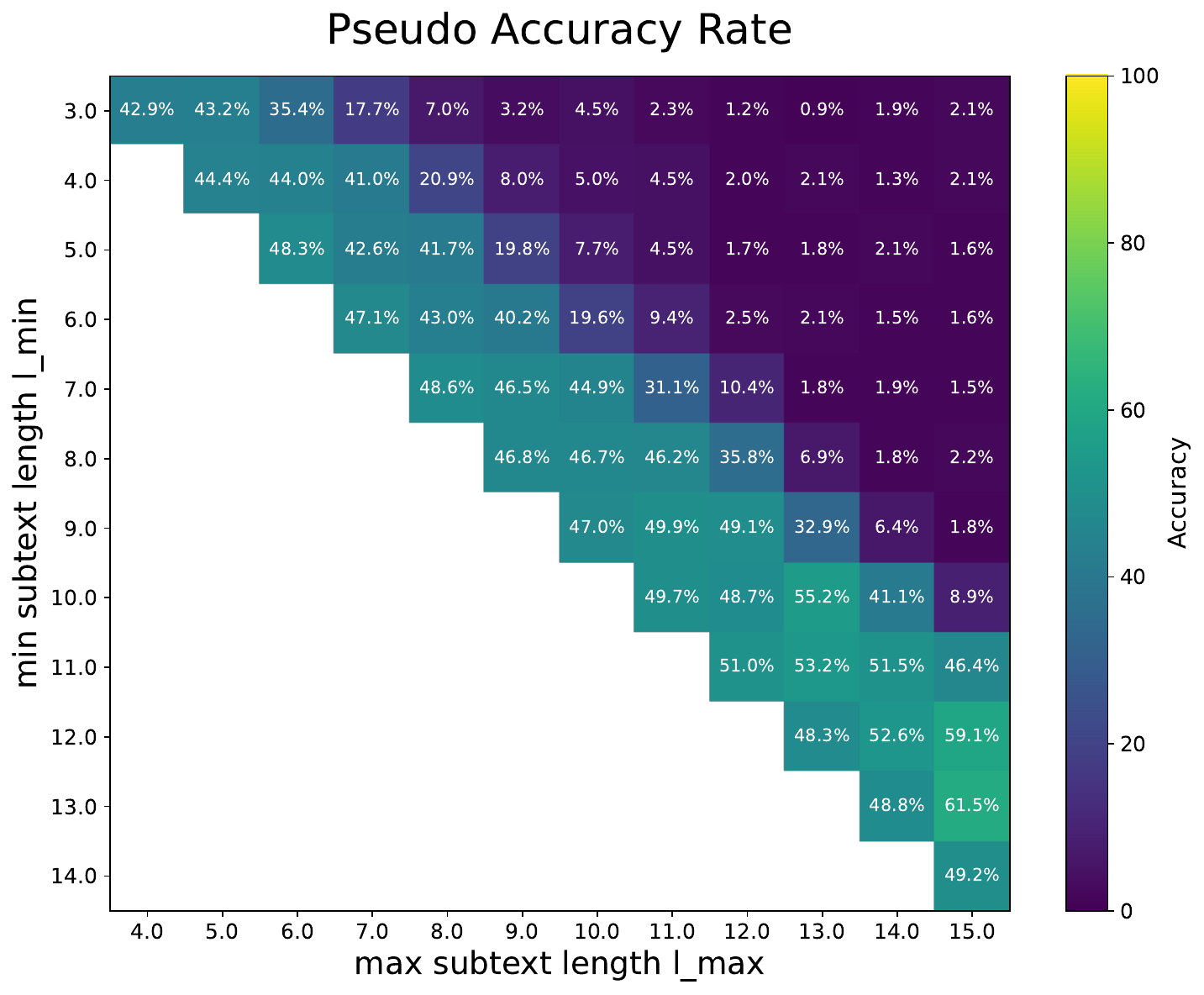}
    \caption{pseudo accuracy rate ($z_{(\ell_1+\ell_2)/2+2}$)}
  \end{subfigure}
  \hfill
  \begin{subfigure}{0.48\textwidth}
    \centering
    \includegraphics[width=0.9\linewidth]{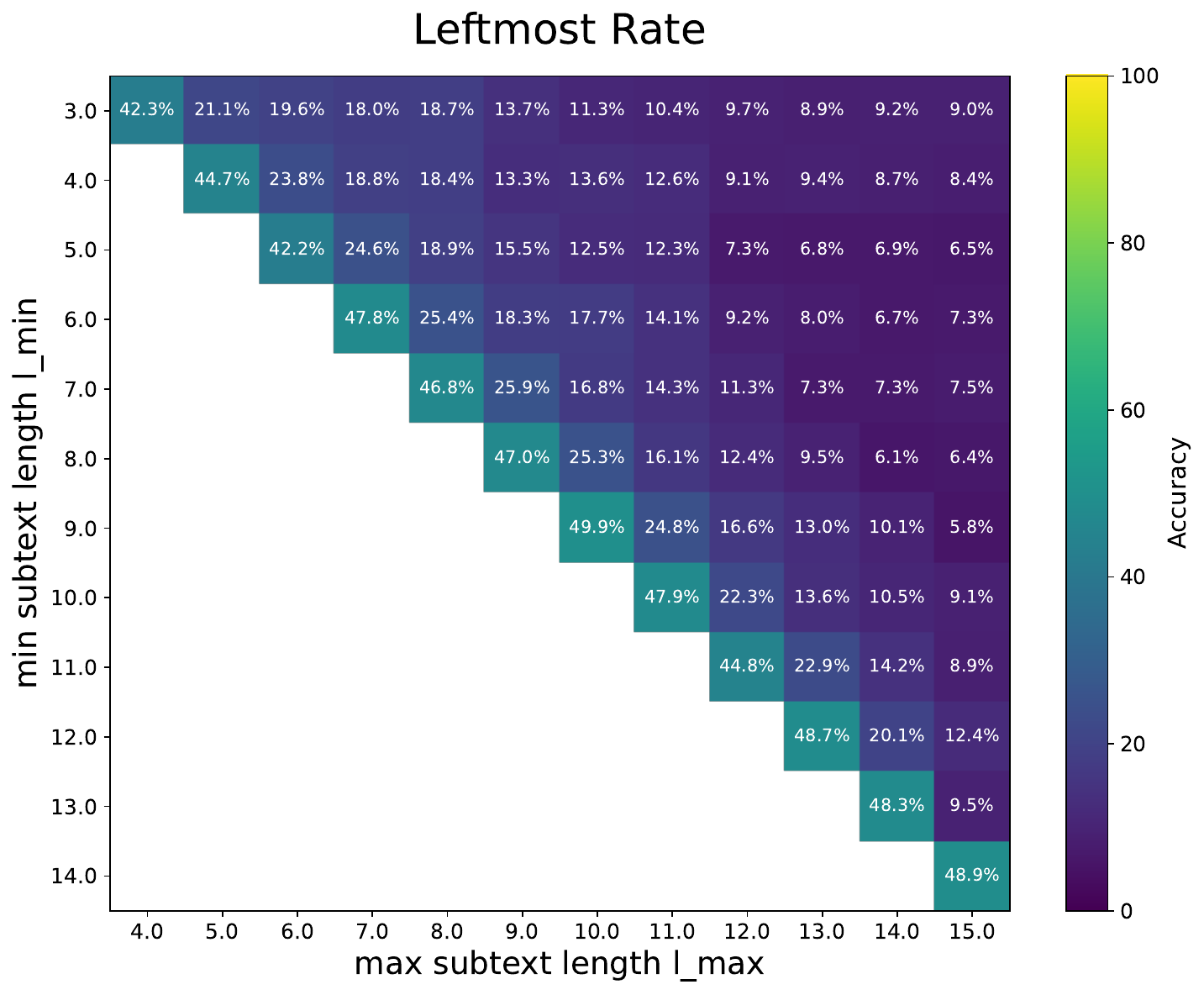}
    \caption{leftmost rate ($z_{\ell_{\mathrm{min}}+2}$)}
  \end{subfigure}
  \caption{\small Map of two types of errors due to the positional shortcut. Note that both errors can probabilistically coincide with the correct answer, and such cases are not excluded.}
  \label{fig:shortcut_mistakes}
\end{figure}

\subsection{Experiments for Practical Settings}\label{subsec:full}

Next we examine whether a similar transition from positional shortcut to induction head occurs in more standard gradient-based pretraining beyond our theoretical simplification. 
We consider a three-layer transformer architecture with separated key-query matrices, MLPs, and residual connections, where all parameters are learned \emph{jointly} using the AdamW optimizer~\cite{kingma2014adam,loshchilov2018decoupled}.
The dataset is generated in the same way as in Section~\ref{sec:exp_ours}: we set $N=32$ and $\Ntrg=1$, and varied $\ell_\mathrm{min}=4,8,\dotsc,20, \ell_\mathrm{max}=4,8,\dotsc,40$.  
More experimental details can be found in Appendix~\ref{app:detailed_exp}.

\begin{figure}[!htb]
  \centering
  \begin{subfigure}{0.32\textwidth}
    \centering
    \includegraphics[width=\linewidth]{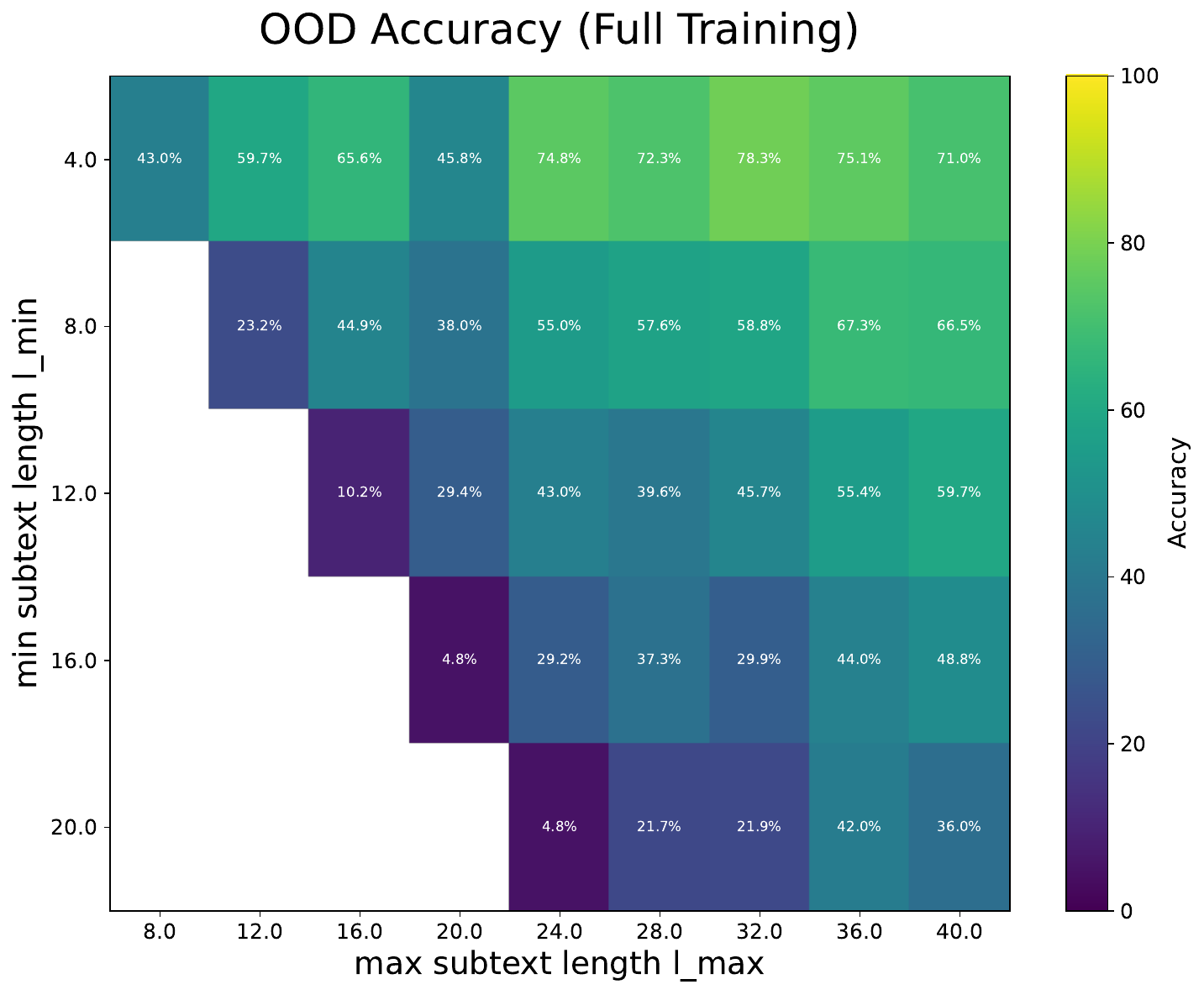}
    \caption{OOD accuracy}
  \end{subfigure}
  \hfill
  \begin{subfigure}{0.32\textwidth}
    \centering
    \includegraphics[width=\linewidth]{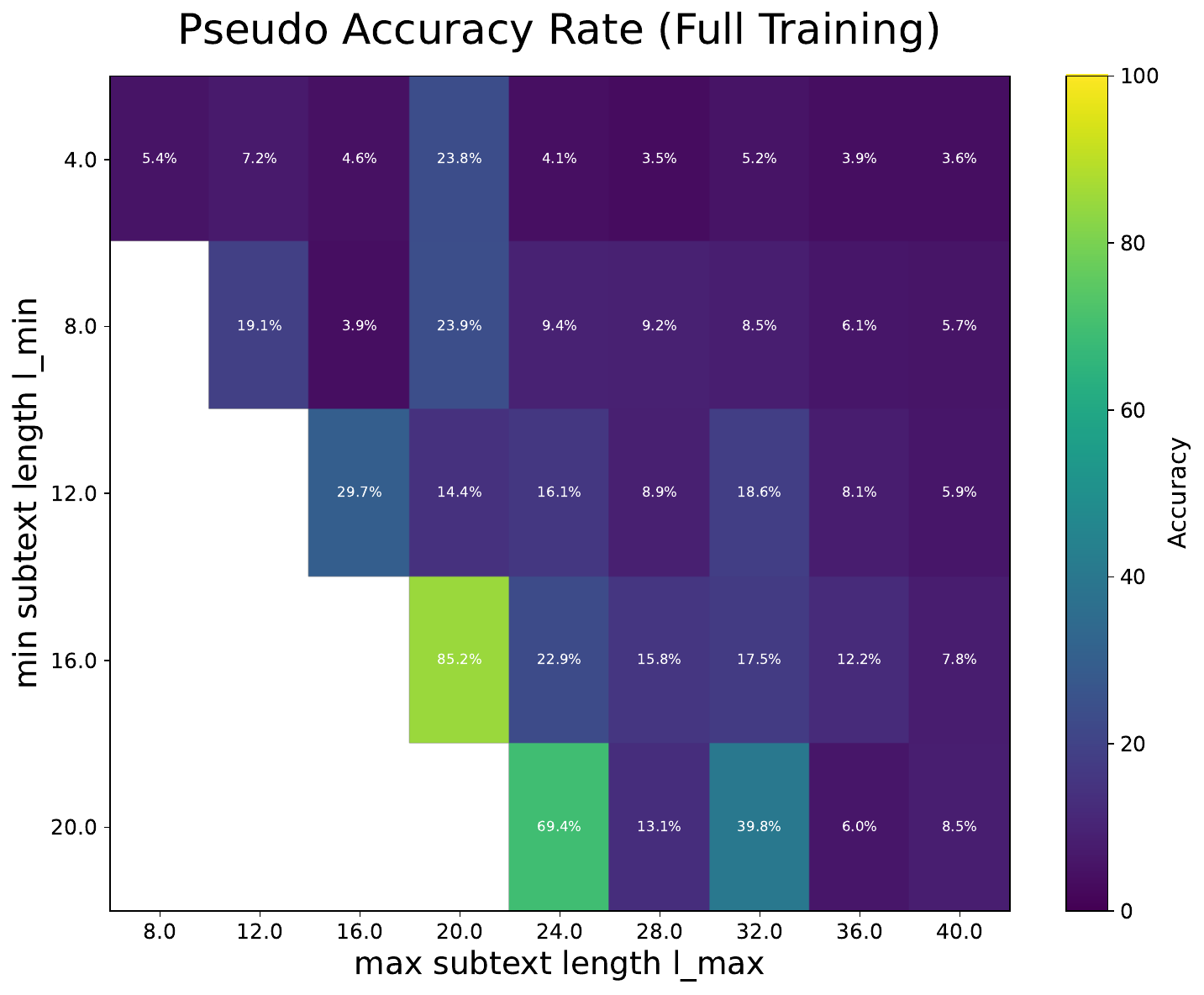}
    \caption{pseudo accuracy error rate}
  \end{subfigure}
  \begin{subfigure}{0.32\textwidth}
    \centering
    \includegraphics[width=\linewidth]{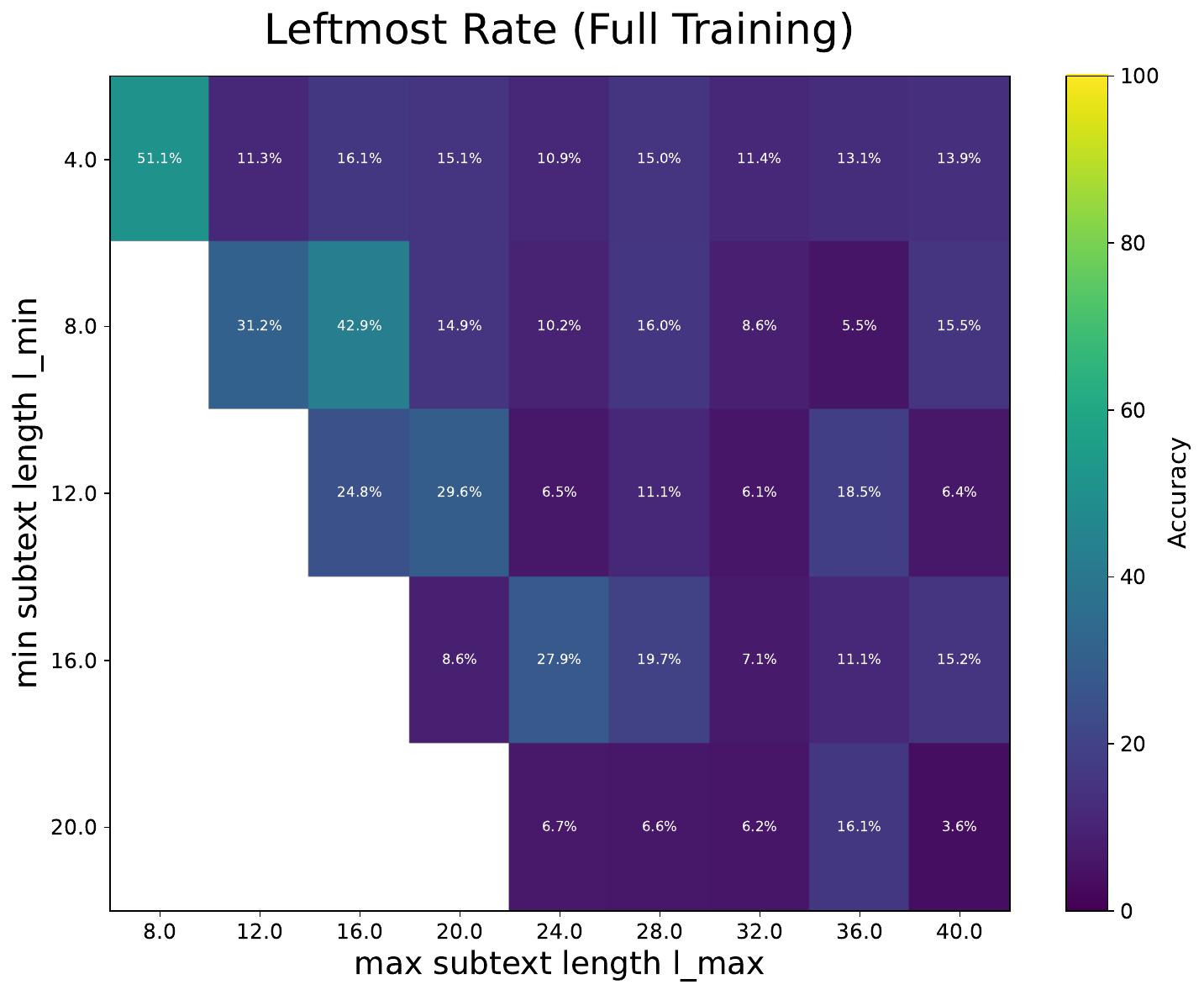}
    \caption{leftmost error rate}
  \end{subfigure}
  \caption{\small Accuracy map over $\ell_\mathrm{min}$ (vertical), $\ell_\mathrm{max}$ (horizontal) for a 3-layer transformer trained with AdamW.\!\!}
  \label{fig:ood-accuracy-full}
\end{figure}

\paragraph{Empirical Observations.}
Figure~\ref{fig:ood-accuracy-full} shows the OOD accuracy, pseudo accuracy rate, and leftmost rate, following the same setup as in Section~\ref{sec:exp_ours}.  
Note that as the diversity of the pretraining distribution increases, the OOD generalization accuracy improves and the errors due to the positional shortcut decrease --- this is consistent with our theoretical prediction in Section~\ref{sec:results}.  
We also observe that the transition point is less sharp compared to our theoretical setting.

\section{Conclusion}

In this work, using a simplified trigger–output task, we developed a theoretical analysis showing that gradient-based training implicitly selects between two distinct mechanisms with different out-of-distribution generalization properties — an induction head or a positional shortcut.
We introduced the \emph{max-sum ratio} as a key quantity governing this selection. 
Our results demonstrate that the statistical structure of pretraining data critically shapes the algorithms internalized by transformers, offering quantitative insights into steering learning via data design.

We conclude with several directions for future work.
First, beyond absolute positional embeddings, it is important to characterize which positional shortcuts can arise under relative position embeddings and related variants.
Second, while our analysis centers on a single-layer architecture, a two-layer model naturally delegates retrieval to the first layer (recovering the token corresponding to $\ve_{z_{t-1}}$); analyzing the coupled dynamics that emerge from this decomposition is an intriguing next step.
Finally, developing methods to analyze and quantify richer classes of algorithmic biases -- beyond the induction–shortcut dichotomy -- would deepen our understanding of how pretraining distributions induce specific computational circuits. 
\subsection*{Acknowledgements}
RK was partially supported by JST CREST (JPMJCR2115).
NN was partially supported by JST ACT-X (JPMJAX24CK) and JST BOOST (JPMJBS2418).
TS was partially supported by JSPS KAKENHI (24K02905) and JST CREST (JPMJCR2015).
This research is supported by the National Research Foundation, Singapore, Infocomm Media Development Authority under its Trust Tech Funding Initiative, and the Ministry of Digital Development and Information under the AI Visiting Professorship Programme (award number AIVP-2024-004). Any opinions, findings and conclusions or recommendations expressed in this material are those of the author(s) and do not reflect the views of National Research Foundation, Singapore, Infocomm Media Development Authority, and the Ministry of Digital Development and Information. 

{

\fontsize{10}{11}\selectfont     

\bibliography{citation}
\bibliographystyle{alpha}

}

\newpage
\appendix
\section{Preliminaries}

Throughout the paper, $O(\cdot)$ notation is taken with respect to the vocabulary size $N$ and we assume the following scaling:

\begin{assumption}[Scaling between problem parameters]\label{assumption:scaling}
    Number of triggers $\Ntrg$ satisfies $\Ntrg=o(N^{1/3})$.
    Context length $T$ satisfies $T=o(N/\Ntrg^2)$ almost surely.
\end{assumption}
We also assume the following:
\begin{assumption}\label{assumption:bounded_ell}
    We assume $\ell \geq 4$ almost surely at pretraining and $L=\mathrm{poly}(N)$.
\end{assumption}

$\mathbb{I}(A)$ denotes the indicator function of the event $A$, that is, $\mathbb{I}(A) = 1$ if $A$ holds, and $0$ otherwise.
For $a\leq b$, $\one_{a:b}$ denotes the vector whose $a,a+1,\dotsc,b$-th entries are one and others are zero.
For a matrix $\vA \in \mathbb{R}^{m \times n}$, we use the notation $\vA[:, I\!:\!J]$ to denote the submatrix consisting of all rows and columns from index $I$ to $J$.
For an arbitrary matrix $\vA$, $\lambda_i(\vA)$ is the $i$-th largest eigenvalue of $\vA$.
 \bigskip
\allowdisplaybreaks

\section{Infinite Sample Analysis}
In this section, we analyze Algorithm~\ref{alg:pretraining} with infinite sample size.
Recall that we defined
$$
f_{\mathrm{TF}}(\vX_{1:t};\vW_{KQ},\vW_V) = \vW_V \vX_{1:t} \, \mathrm{Softmax}(\vX_{1:t}^\top \vW_{KQ} \vx_t) \in \mathbb{R}^N
$$
and next token prediction loss
$$
\cL(\vX_{1:T^{(i)}}^{(i)};\vW_{KQ},\vW_V)=\mathrm{CrossEntropy}(\ve_{z_{T^{(i)}+1}},\mathrm{Softmax}(f_{\mathrm{TF}}(\vX_{1:{T^{(i)}}};\vW_{KQ},\vW_V))).
$$
For simplicity, we denote the population loss as $\bar\cL(\vW_{KQ},\vW_V)\coloneqq \E_T[\E_{\vX_{1:T}}[\cL(\vX_{1:T};\vW_{KQ},\vW_V)]].$
\subsection{Population Gradient of $\mathbf{W}_V$}

Note that $\mathrm{Softmax}(\vX_{1:T}^\top \vW_{KQ} \vx_T)=[1/T,\dotsc,1/T]^\top$ and $\SM{f_{\mathrm{TF}}(\vX_{1:t};\vW_{KQ},\vW_V)}=[1/N,\dotsc,1/N]^\top$ is satisfied at initialization, for any $\vX_{1:t}$.
From \cite[Lemma 1]{bietti2024birth}, the population loss can be calculated as 

\begin{align}
    &\nabla_{\vW_{\!V}}\bar\cL(\vW_{KQ},\vW_V)\\
    =&
    \frac{1}{N}\sum_{k=1}^N\ve_k\E_T\left[\frac{1}{T}\sum_{t=1}^T \E[\vx_t]^\top\right] - \sum_{k=\Ntrg+1}^N \ve_k\E_T\left[\frac{1}{T}\sum_{t=1}^T \E[\mathbb{I}(z_{T+1}=k)\vx_t^\top]\right].\\
    =&
    \frac{1}{N}\sum_{k=1}^N\ve_k\E_T\left[\frac{1}{T}\sum_{t=1}^T \E[\vx_t]^\top\right] - \frac{1}{N-\Ntrg}\sum_{k=\Ntrg+1}^N \ve_k\E_T\left[\frac{1}{T}\sum_{t=1}^T \E[\vx_t|z_{T+1}=k]^\top\right].
\end{align}

We conduct block-wise calculation for the population gradient: let 
$$
\vW_{\!V} = \left[\vW_{\!V}^{(1)}, \vW_{\!V}^{(2)}, \vW_{\!V}^{(3)}\right]
$$
and let
$$
\vW_{\!V}^* = \left[\vW_{\!V}^{*,(1)}, \vW_{\!V}^{*,(2)}, \vW_{\!V}^{*,(3)}\right]
$$
be $\vW_V$ after one GD step, i.e., $\vW_V^*=-\eta_V \nabla_{\vW_V}\bar\cL(\vW_{KQ},\vW_V)$, where $\vW_{\!V}^{(1)}\in\R^{N\times D}$, $\vW_{\!V}^{(2)},\vW_{\!V}^{(3)}\in\R^{N\times N}$. 
In this section we show the following, using the rescaling $\eta_V=N\tilde{\eta_V}$ for notation simplicity.

\begin{lemm}\label{lemm:wv_calculation}
    If we use stepsize $N\tilde{\eta_V}$ for $\vW_V$, then it holds that
    \begin{equation}\label{grad_v1}
    \langle\ve_k,\vW_V^{*,(1)}\vp_t\rangle =\begin{cases}  
    -\alpha_t\tilde{\eta_V}\quad &(k \in[\Ntrg]),\\
    \frac{\alpha_t\tilde{\eta_V}\Ntrg}{N-\Ntrg}\quad &(k\not\in[\Ntrg]),
    \end{cases}
\end{equation}
\begin{equation}\label{grad_v2}
    \langle\ve_j,\vW_V^{*,(2)}\ve_k\rangle =\begin{cases}
         -2\tilde{\eta_V}\E[T^{-1}]\Ntrg^{-1}& (j,k\in[\Ntrg])\\
         -\tilde{\eta_V}\frac{1-2\E[T^{-1}]}{N-\Ntrg}& (j\in[\Ntrg],k\not\in[\Ntrg])\\
         2\tilde{\eta_V}\frac{\E[T^{-1}]}{N-\Ntrg}& (j\not\in[\Ntrg],k\in[\Ntrg])\\
         \tilde{\eta_V}\frac{\Ntrg+\E[T^{-1}](N(N-1)-\Ntrg(N+2))}{(N-\Ntrg)^2}& (j=k\not\in[\Ntrg])\\
    \tilde{\eta_V}\frac{\Ntrg-\E[T^{-1}](N+2\Ntrg)}{(N-\Ntrg)^2} & (j\neq k,~j,k\not\in[\Ntrg])
    \end{cases}
\end{equation}
and 
\begin{equation}\label{grad_v3}
    \langle\ve_j,\vW_V^{*,(3)}\ve_k\rangle =\begin{cases}
         -\tilde{\eta_V}\E[T^{-1}]\Ntrg^{-1}& (j,k\in[\Ntrg])\\
         -\tilde{\eta_V}\frac{1-2\E[T^{-1}]}{N-\Ntrg}& (j\in[\Ntrg],k\not\in[\Ntrg])\\
         \tilde{\eta_V}\frac{\E[T^{-1}]}{N-\Ntrg}& (j\not\in[\Ntrg],k\in[\Ntrg])\\
         \tilde{\eta_V}\frac{\Ntrg+\E[T^{-1}](N(N-1)-\Ntrg(N+2))}{(N-\Ntrg)^2}& (j=k\not\in[\Ntrg])\\
    \tilde{\eta_V}\frac{\Ntrg-\E[T^{-1}](N+2\Ntrg)}{(N-\Ntrg)^2} & (j\neq k,~j,k\not\in[\Ntrg]),
    \end{cases}
\end{equation}
where we defined $\alpha_t\coloneqq \<\E_T\left[T^{-1}\one_{1:T}\right],\vp_t\>$: specifically, $\alpha_t=\E_T[T^{-1}\mathbb{I}\{t\leq T\}]\leq \E_T[T^{-1}]$ is satisfied.
\end{lemm}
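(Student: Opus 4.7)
The plan is to start from the closed-form population gradient given in the excerpt (valid because at zero initialization both softmaxes produce uniform outputs) and compute each block of $\vW_V^{\ast} = -\eta_V\,\nabla_{\vW_V}\bar\cL$ separately by exploiting the block structure of the embedding $\vx_t = [\vp_t;\,\ve_{z_t};\,\ve_{z_{t-1}}]$. Since $\vp_t$ is deterministic, the positional block reduces to manipulating the scalars $\alpha_t = \E_T[T^{-1}\mathbb{I}\{t\le T\}]$; the two regimes in~\eqref{grad_v1} then follow immediately from the fact that the ``teacher'' term in the gradient is supported only on non-trigger output labels $k \notin [\Ntrg]$, and after the rescaling $\eta_V = N\tilde\eta_V$ the overall factor of $N$ cancels neatly to give the stated expressions.

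For blocks~$2$ and~$3$ the bulk of the work is identifying, for each attended position $t \in [1:T]$, the marginal distribution of the corresponding token. Under Definition~\ref{defi:unigram_ours} the sequence contains $2\ell$ irrelevant tokens i.i.d.\ uniform on $[\Ntrg+1:N]$, two trigger occurrences at positions $\ell+1$ and $T = 2\ell+3$, and the output copy at position $\ell+2$; conditioning on $z_{T+1}=k$ with $k\notin[\Ntrg]$ only fixes $o=k$ and hence $z_{\ell+2}=k$, leaving every other position's distribution unchanged. I would therefore define
\[
A_k^{(b)} \;=\; \E_T\!\left[\frac{1}{T}\sum_{t=1}^T \P\bigl(\text{block-}b\text{ token at position } t \text{ equals } k\bigr)\right],
\]
together with its conditional analogue $B_{jk}^{(b)}$ obtained by further conditioning on $\{z_{T+1}=j\}$, and observe that $B_{jk}^{(b)}-A_k^{(b)}$ is supported on the single position where the pinned output appears---position $\ell+2$ for block~$2$ and position $\ell+3$ for block~$3$. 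The only structural asymmetry between the two blocks is the number of trigger positions inside the sum: two for block~$2$ (positions $\ell+1$ and $T$) versus one for block~$3$ (only position $\ell+1$, since the second trigger $z_T$ would contribute to $\ve_{z_{t-1}}$ at $t=T+1$, which lies outside $[1:T]$). This lone counting discrepancy is precisely what produces the factor-of-two difference between~\eqref{grad_v2} and~\eqref{grad_v3} in the columns indexed by $k\in[\Ntrg]$.

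With $A_k^{(b)}$ and $B_{jk}^{(b)}$ in hand, the five cases partitioning $(j,k)$ by trigger status---(i) both in $[\Ntrg]$, (ii) $j\in[\Ntrg],\,k\notin[\Ntrg]$, (iii) $j\notin[\Ntrg],\,k\in[\Ntrg]$, (iv) $j=k\notin[\Ntrg]$, and (v) $j\neq k$ with both outside $[\Ntrg]$---can be evaluated directly by substituting into $\vW_V^{*,(b)}_{jk} = -\eta_V\bigl[N^{-1}A_k^{(b)} - (N-\Ntrg)^{-1}\mathbb{I}\{j\notin[\Ntrg]\}\,B_{jk}^{(b)}\bigr]$ and applying $\eta_V = N\tilde\eta_V$. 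The computation is conceptually routine but delicate in bookkeeping: one must keep the trigger/output/irrelevant positional accounting consistent across the two blocks (adopting the minor convention that $\ve_{z_0}$ is the zero vector so that the $t=1$ term in block~$3$ drops out), and carefully simplify case~(iv), whose numerator $\Ntrg + \E[T^{-1}]\bigl(N(N-1) - \Ntrg(N+2)\bigr)$ emerges only after collecting three terms over the common denominator $N(N-\Ntrg)^2$. I expect this algebraic simplification in case~(iv) to be the main technical obstacle, while the remaining four cases reduce to essentially direct enumeration.
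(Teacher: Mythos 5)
Your proposal is correct and follows essentially the same route as the paper: starting from the uniform-softmax population gradient, computing each embedding block separately, and evaluating the unconditional versus conditional (on $z_{T+1}=k$) token marginals of the data model, including the key observations that only the pinned output position changes under conditioning, that block 3 sees one trigger occurrence versus two in block 2 (with the $\ve_{z_0}=\zero$ convention), and that the rescaling $\eta_V=N\tilde\eta_V$ yields the stated entries. Your per-position marginals $A_k^{(b)},B_{jk}^{(b)}$ are just a repackaging of the paper's direct computation of $\sum_t\E[\ve_{z_t}]$ and its conditional analogue, and the case-by-case substitutions you outline do reproduce \eqref{grad_v1}--\eqref{grad_v3}.
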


\begin{proof}
    
For the first block, we have the following evaluation of the gradient of population loss $\bar\cL$:
\begin{align}
    &\nabla_{\vW_{\!V}^{(1)}}\bar\cL(\vW_{KQ},\vW_V)\\
    =& 
    \frac{1}{N}\sum_{k=1}^N\ve_k\E_T\left[\frac{1}{T}\sum_{t=1}^T \E[\vp_t]^\top\right] - 
    \frac{1}{N-\Ntrg}\sum_{k=\Ntrg+1}^N \ve_k\E_T\left[\frac{1}{T}\sum_{t=1}^T \E[\vp_t|z_{T+1}=k]^\top\right] \\
    =&
    \frac{1}{N}\sum_{k=1}^N\ve_k\E_T\left[T^{-1}\sum_{t=1}^T \vp_t^\top\right] - \frac{1}{N-\Ntrg}\sum_{k=\Ntrg+1}^N \ve_k\E_T\left[T^{-1}\sum_{t=1}^T \vp_t^\top\right].
\end{align}
This immediately yields~\eqref{grad_v1}.

Now for the second block, we have
\begin{align}
    &\nabla_{\vW_{\!V}^{(2)}}\bar\cL(\vW_{KQ},\vW_V)\\
    =&
    \frac{1}{N}\sum_{k=1}^N\ve_k\E_T\left[\frac{1}{T}\sum_{t=1}^T \E[\ve_{z_t}]^\top\right] - 
    \frac{1}{N-\Ntrg}\sum_{k=\Ntrg+1}^N \ve_k\E_T\left[\frac{1}{T}\sum_{t=1}^T \E[\ve_{z_t}|z_{T+1}=k]^\top\right],
\end{align}
where the first term is evaluated as
\begin{align}
    \frac{1}{N}\sum_{k=1}^N\ve_k\E_T\left[\frac{1}{T}\sum_{t=1}^T \E[\ve_{z_t}]^\top\right]
    =&
    \frac{1}{N}\sum_{k=1}^N\ve_k\sum_{w=1}^{\Ntrg}\frac{1}{\Ntrg}\E_T\left[\frac{1}{T}\left(2\ve_w + \frac{T-2}{N-\Ntrg}\one_{\Ntrg+1:N}\right)^\top\right],
\end{align}
and similarly for the second term
\begin{align}
    &\frac{1}{N-\Ntrg}\sum_{k=\Ntrg+1}^N \ve_k\E_T\left[\frac{1}{T}\sum_{t=1}^T \E[\ve_{z_t}|z_{T+1}=k]^\top\right]\\
=&
    \frac{1}{N-\Ntrg}\sum_{k=\Ntrg+1}^N \ve_k\frac{1}{\Ntrg}\sum_{w=1}^{\Ntrg}\E_T\left[\frac{1}{T}\left(2\ve_w + \frac{T-3}{N-\Ntrg}\one_{\Ntrg+1:N} + \ve_k \right)^\top\right].
\end{align}

Putting everything together yields~\eqref{grad_v2}.

The third block can be computed in the same fashion as
\begin{align}
    &\nabla_{\vW_{\!V}^{(3)}}\bar\cL(\vW_{KQ},\vW_V) \\
    =&
    \frac{1}{N}\sum_{k=1}^N\ve_k\E_T\left[\frac{1}{T}\sum_{t=1}^T \E[\ve_{z_{t-1}}]^\top\right] - 
    \frac{1}{N-\Ntrg}\sum_{k=\Ntrg+1}^N \ve_k\E_T\left[\frac{1}{T}\sum_{t=1}^T \E[\ve_{z_{t-1}}|z_{T+1}=k]^\top\right], 
\end{align}
where we have the evaluations
\begin{align}
    &\frac{1}{N}\sum_{k=1}^N\ve_k\E_T\left[\frac{1}{T}\sum_{t=1}^T \E[\ve_{z_{t-1}}]^\top\right] \\
    =&
    \frac{1}{N}\sum_{k=1}^N\ve_k\frac{1}{\Ntrg}\sum_{w=1}^{\Ntrg}\E_T\left[\frac{1}{T}\left(\ve_w + \frac{T-2}{N-\Ntrg}\one_{\Ntrg+1:N}\right)^\top\right] ~(\because \ve_0=\zero)
\end{align}
and
\begin{align}
    &\frac{1}{N-\Ntrg}\sum_{k=\Ntrg+1}^N \ve_k\E_T\left[\frac{1}{T}\sum_{t=1}^T \E[\ve_{z_{t-1}}|z_{T+1}=k]^\top\right] \\
=&
    \frac{1}{N-\Ntrg}\sum_{k=\Ntrg+1}^N \ve_k\frac{1}{\Ntrg}\sum_{w=1}^{\Ntrg}\E_T\left[\frac{1}{T}\left(\ve_w + \frac{T-3}{N-\Ntrg}\one_{\Ntrg+1:N} + \ve_k \right)^\top\right] .
\end{align}
\end{proof}

\subsection{Population Gradient of $\mathbf{W}_{KQ}$}

\subsubsection{Preparations} \label{subsub:preparation}
We denote the transformer's predicted probability of token $k$ given an input sequence $z$ after one-step GD on $\vW_V$ as
$$
\hat p(k|z) = \SM{f_{\mathrm{TF}}(\vX_{1:T};\vW_{KQ},\vW_V^*)}_k.
$$
We can approximate $\hat{p}(k|z)$ by considering sufficiently small $\tilde{\eta_V}$: 
The following corollary is obtained by Lemma~\ref{lemm:wv_calculation}.
\begin{coro}\label{coro:approx_infinite}
    If we set $\tilde{\eta_V} \lesssim 1/N$, then $|1/N-\hat p(k|z)|=O(\E[T^{-1}]/N^2)$ uniformly holds for any $k$ and $z$.
\end{coro}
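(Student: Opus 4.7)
My plan is to reduce the corollary to a first-order softmax expansion after controlling the raw logits. Because $\vW_{KQ}(0) = \zero$, the positional softmax in~\eqref{eq:our_transformer} is uniform, so
\[
f_k(\vX_{1:T}) \;:=\; f_{\mathrm{TF}}(\vX_{1:T};\vW_{KQ}(0),\vW_V^{*})_k
\;=\; \frac{1}{T}\sum_{t=1}^T \langle \ve_k,\, \vW_V^{*}\vx_t\rangle,
\]
and splitting $\vx_t = [\vp_t;\ve_{z_t};\ve_{z_{t-1}}]$ together with the block identities \eqref{grad_v1}--\eqref{grad_v3} yields a closed-form expression for $f_k$ of size $O(\tilde{\eta_V})$.

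The heart of the argument is the uniform bound $\|f\|_\infty = O(\tilde{\eta_V}\E[T^{-1}])$. For the position block this is immediate: every entry of $\vW_V^{*,(1)}$ at column $t$ has magnitude at most $\tilde{\eta_V}\alpha_t$, so $|\frac{1}{T}\sum_t \langle \ve_k, \vW_V^{*,(1)}\vp_t\rangle| \leq \tilde{\eta_V}\cdot \frac{1}{T}\sum_t \alpha_t \leq \tilde{\eta_V}\E[T^{-1}]$. For the token and previous-token blocks I would rewrite $\frac{1}{T}\sum_t \ve_{z_t}$ (and similarly $\ve_{z_{t-1}}$) as the empirical token frequency -- supported on two trigger positions and $T-2$ non-trigger positions -- and match it against each of the five regimes in \eqref{grad_v2}--\eqref{grad_v3}. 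The trigger-column entries carry a $\E[T^{-1}]/\Ntrg$ factor but are weighted by only two positions; the non-trigger diagonal/off-diagonal entries are already of order $\tilde{\eta_V}\E[T^{-1}]$; and the leftover $O(\tilde{\eta_V}/N)$ pieces are absorbed into $O(\tilde{\eta_V}\E[T^{-1}])$ via Assumption~\ref{assumption:scaling}, which forces $\E[T^{-1}] = \Omega(\Ntrg^2/N) = \Omega(1/N)$.

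With $\|f\|_\infty = O(\tilde{\eta_V}\E[T^{-1}])$ in hand, the finish is a softmax linearization. A preliminary identity is $\sum_{k=1}^N f_k = 0$ for every input $z$: at $\vW_V(0) = \zero$ the softmax of $f_{\mathrm{TF}}$ is uniform, so each outer product appearing in $\nabla_{\vW_V}\bar\cL$ inherits the row factor $\sum_k(1/N - \mathbb{I}\{k = z_{T+1}\}) = 0$, forcing $\sum_k \vW_V^{*}[k,:] = \zero$. A second-order expansion of the softmax (valid because $\|f\|_\infty = O(1/N)$) then gives
\[
\hat p(k\mid z) \;=\; \frac{e^{f_k}}{\sum_j e^{f_j}} \;=\; \frac{1 + f_k + O(\|f\|_\infty^2)}{N + O(N\|f\|_\infty^2)} \;=\; \frac{1}{N}\bigl(1 + f_k + O(\|f\|_\infty^2)\bigr),
\]
so $|\hat p(k\mid z) - 1/N| \leq |f_k|/N + O(\|f\|_\infty^2/N)$, and plugging $\tilde{\eta_V}\lesssim 1/N$ places both terms at scale $O(\E[T^{-1}]/N^2)$. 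The main hurdle I foresee is the bookkeeping for $\vW_V^{*,(2)}$ and $\vW_V^{*,(3)}$: their entries fall into several qualitatively different regimes, and obtaining the tight rate $\E[T^{-1}]/N^2$ -- rather than the coarser $1/N^2$ -- hinges on using Assumption~\ref{assumption:scaling} to fold the $O(\tilde{\eta_V}/N)$ contributions into $O(\tilde{\eta_V}\E[T^{-1}])$.
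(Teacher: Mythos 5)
Your proposal is correct and follows essentially the same route as the paper: bound the logits uniformly by $O(\tilde{\eta_V}\,\E[T^{-1}])=O(\E[T^{-1}]/N)$ via the explicit block formulas of Lemma~\ref{lemm:wv_calculation} together with Assumption~\ref{assumption:scaling} (to absorb the $O(\tilde{\eta_V}/N)$ pieces), then linearize the softmax around the uniform distribution. The zero-row-sum identity $\sum_k f_k=0$ is a valid but unnecessary extra — without it the denominator is still $N(1+O(\|f\|_\infty))$, which already yields the claimed $O(\E[T^{-1}]/N^2)$ rate, exactly as in the paper's Taylor-expansion argument.
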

\begin{proof}
    From Lemma~\ref{lemm:wv_calculation}, it holds that $|(f_{\mathrm{TF}}(\vX_{1:T};\vW_{KQ},\vW_V^*))_k| \in O(\E[T^{-1}]/N+1/N^2) \in O(\E[T^{-1}]/N)~(\because \text{Assumption~\ref{assumption:scaling}})$ for all $k$.
    If $v_i=O(\E[T^{-1}]/N)$ for all $i\in[N]$ where $\vv \in \R^N$, then it holds that
    $$
    \frac{1}{N} \exp(-O(\E[T^{-1}]/N))\leq \SM{\vv}_i \leq \frac{1}{N} \exp(O(\E[T^{-1}]/N)).
    $$
    From Taylor's theorem we obtain the assertion.
\end{proof}
Following \cite[Lemma 4]{bietti2024birth}, starting from $\vW_{\!KQ} = \zero$, 
\begin{align}
    &\nabla_{\vW_{\!KQ}} \bar\cL(\vW_{KQ},\vW_V^*) \\
    =& \E\left[\sum_{k=1}^N (\hat{p}(k|z) - \mathbbm{I}\{z_{T+1}=k\}) \nabla_{\vW_{\!KQ}\!=\zero}\left\langle\ve_k, \vW_{\!V}^*\vX_{1:T}\mathrm{Softmax}(\vX_{1:T}^\top\vW_{\!KQ}\vx_{T}) \right\rangle \right], 
\end{align}
where
\begin{align}
    &\nabla_{\vW_{\!KQ}\!=\zero}\left(\ve_k^\top \vW_{\!V}^*\sum_{t=1}^T \vx_t\mathrm{Softmax}(\vX_{1:T}^\top\vW_{\!KQ}\vx_{T})_t \right) \\
=&
    \sum_{t=1}^T \ve_k^\top \vW_{\!V}^* \vx_t\cdot \nabla_{\vW_{\!KQ}=\zero}\mathrm{Softmax}(\vX_{1:T}^\top\vW_{\!KQ}\vx_{T})_t \\
=&
    \frac{1}{T}\sum_{t=1}^T \ve_k^\top \vW_{\!V}^* \vx_t\cdot (\vx_t - \bar{\vx}_{1:T})\vx_{T}^\top, 
\end{align}
for $\bar{\vx}_{1:T} = \frac{1}{T}\sum_{t=1}^T\vx_t$. 
Hence the population gradient simplifies to, assuming $\tilde{\eta_V}\lesssim 1/N$,
\begin{align}
    &\nabla_{\vW_{\!KQ}} \bar\cL(\vW_{KQ},\vW_V^*)\\ =& \sum_{k=1}^N \E_{T}\left[\frac{1}{T}\sum_{t=1}^T \E_{\vX}\left[\hat{p}(k|z) \ve_k^\top \vW_{\!V}^* \vx_t\cdot (\vx_t - \bar{\vx}_{1:T})\vx_{T}^\top\right] \right]\\
    &- {\sum_{k=\Ntrg+1}^N \E_{T}\left[\frac{1}{T}\sum_{t=1}^T p(z_{T+1}=k) \E_{\vX}\left[\ve_k^\top \vW_{\!V}^* \vx_t\cdot (\vx_t - \bar{\vx}_{1:T})\vx_{T}^\top | z_{T+1} = k\right] \right]}\\
    =& \frac{1}{N}\sum_{k=1}^N \E_{T}\left[\frac{1}{T}\sum_{t=1}^T \E_{\vX}\left[ \ve_k^\top \vW_{\!V}^* \vx_t\cdot (\vx_t - \bar{\vx}_{1:T})\vx_{T}^\top\right] \right]  \\
    &- \frac{1}{N-\Ntrg}\sum_{k=\Ntrg+1}^N \E_{T}\left[\frac{1}{T}\sum_{t=1}^T  \E_{\vX}\left[\ve_k^\top \vW_{\!V}^* \vx_t\cdot (\vx_t - \bar{\vx}_{1:T})\vx_{T}^\top | z_{T+1} = k\right] \right]\\
    &+ \underbrace{\sum_{k=1}^N \E_{T}\left[\frac{1}{T}\sum_{t=1}^T \E_{\vX}\left[\left(\hat{p}(k|z)-\frac{1}{N}\right) \ve_k^\top \vW_{\!V}^* \vx_t\cdot (\vx_t - \bar{\vx}_{1:T})\vx_{T}^\top\right] \right]}_{(*1)}.
\end{align}

Note that each entry of $\ve_k^\top \vW_{\!V}^* \vx_t\cdot (\vx_t - \bar{\vx}_{1:T})\vx_{T}^\top$ is of $O(\tilde{\eta_V}\E[T^{-1}])$ from Lemma~\ref{lemm:wv_calculation}, and $|\hat{p}(k|z)-\frac{1}{N}|\lesssim 1/N^2$ from Corollary~\ref{coro:approx_infinite}.
Therefore, using $|\E[ab]|\leq \sqrt{\E[a^2b^2]}$ we can conclude that (*1) is of $O_{\infty}(\tilde{\eta_V}\E[T^{-1}]/N)$.

Moreover, we have
\begin{align*}
    &\frac{1}{N}\sum_{k=1}^N \E_{T}\left[\frac{1}{T}\sum_{t=1}^T \E_{\vX}\left[ \ve_k^\top \vW_{\!V}^* \vx_t\cdot (\vx_t - \bar{\vx}_{1:T})\vx_{T}^\top\right] \right]\\
    =&\frac{1}{N-\Ntrg}\sum_{k=\Ntrg+1}^N \E_{T}\left[\frac{1}{T}\sum_{t=1}^T \E_{\vX}\left[ \ve_k^\top \vW_{\!V}^* \vx_t\cdot (\vx_t - \bar{\vx}_{1:T})\vx_{T}^\top\right] \right]\\
    &+\frac{1}{N-\Ntrg}\sum_{k=1}^{\Ntrg} \E_{T}\left[\frac{1}{T}\sum_{t=1}^T \E_{\vX}\left[ \ve_k^\top \vW_{\!V}^* \vx_t\cdot (\vx_t - \bar{\vx}_{1:T})\vx_{T}^\top\right] \right]\\
    &-\frac{\Ntrg}{N(N-\Ntrg)}\sum_{k=1}^{N} \E_{T}\left[\frac{1}{T}\sum_{t=1}^T \E_{\vX}\left[ \ve_k^\top \vW_{\!V}^* \vx_t\cdot (\vx_t - \bar{\vx}_{1:T})\vx_{T}^\top\right] \right].
\end{align*}

Here, again using the fact that each entry of $\ve_k^\top \vW_{\!V}^* \vx_t\cdot (\vx_t - \bar{\vx}_{1:T})\vx_{T}^\top$ is of $O(\tilde{\eta_V}\E[T^{-1}])$, the second and third terms can be bounded by $O_\infty(\tilde{\eta}_V\E[T^{-1}]\cdot \Ntrg/N)$.

In conclusion, we obtain
\begin{align}
    &\nabla_{\vW_{\!KQ}} \bar\cL(\vW_{KQ},\vW_V^*)\\
    =&\frac{1}{N-\Ntrg}\sum_{k=\Ntrg+1}^N \E_{T}\left[\frac{1}{T}\sum_{t=1}^T \E_{\vX}\left[ \ve_k^\top \vW_{\!V}^* \vx_t\cdot (\vx_t - \bar{\vx}_{1:T})\vx_{T}^\top\right] \right]\\
    &- \frac{1}{N-\Ntrg}\sum_{k=\Ntrg+1}^N \E_{T}\left[\frac{1}{T}\sum_{t=1}^T  \E_{\vX}\left[\ve_k^\top \vW_{\!V}^* \vx_t\cdot (\vx_t - \bar{\vx}_{1:T})\vx_{T}^\top | z_{T+1} = k\right] \right]\\
    &+O_\infty(\tilde{\eta}_V\E[T^{-1}]\cdot \Ntrg/N).
\end{align}

\subsubsection{Detailed Calculations}
Let
\begin{align}
    &\vDelta(k,T)\\
    \coloneqq &\frac{1}{T}\sum_{t=1}^T\left(\E_{\vX}\left[ \ve_k^\top \vW_{\!V}^* \vx_t\cdot (\vx_t - \bar{\vx}_{1:T})\vx_{T}^\top\right]- \E_{\vX}\left[\ve_k^\top \vW_{\!V}^* \vx_t\cdot (\vx_t - \bar{\vx}_{1:T})\vx_{T}^\top|z_{T+1}=k\right]\right)
\end{align}
for $\Ntrg+1\leq k \leq N$.
Then, it holds that $\nabla_{\vW_{\!KQ}} \bar \cL(\vW_{KQ},\vW_V^*)=\frac{1}{N-\Ntrg}\sum_{k=\Ntrg+1}^N\mathbb{E}_T[\vDelta(k,T)]+{\tilde{\eta_V}\E[T^{-1}]O_{\infty}\left(\Ntrg/N\right)}$.

For the first and second (block) columns, we have the following lemma:

\begin{lemm}\label{lemm:kq_modules}
Let $\vW_{KQ}^*=-\eta_{KQ}\nabla_{\vW_{\!KQ}} \bar \cL(\vW_{KQ},\vW_V^*)$.
Then, it holds that
\begin{align}
    &\vW_{KQ}^*[:,1:L+N]\\
    =&\tilde{\eta_V}\eta_{KQ}\frac{1}{\Ntrg}\sum_{w=1}^{\Ntrg}\sum_{\ell}p(\ell)\bigg(T(\ell)^{-1}\begin{bmatrix}
        \vp_{\ell+2}\E_{\ell'}[T(\ell')^{-1}]+\vp_{\ell+3}\E_{\ell'}[T(\ell')^{-1}]\\
        \bm{0}\\
        \ve_w\E_{\ell'}[T(\ell')^{-1}]
    \end{bmatrix}\begin{bmatrix}
        \vp_{T(\ell)}^\top&\ve_w^\top
    \end{bmatrix}\\
    &-2T(\ell)^{-2}\begin{bmatrix}
        \bm{1}_{1:T(\ell) \setminus \{\ell+2,\ell+3\}}\E_{\ell'}[T(\ell')^{-1}]\\
        2\ve_w\E_{\ell'}[T(\ell')^{-1}]\\
        \bm{0}
    \end{bmatrix}\begin{bmatrix}
        \vp_{T(\ell)}^\top&\ve_w^\top
    \end{bmatrix}\bigg)\\
    &+\tilde{\eta_V}\eta_{KQ}\E_{\ell'}[T(\ell')^{-1}]O(\Ntrg\cdot N^{-1}).
\end{align}
Here $O(\Ntrg\cdot N^{-1})$ denotes a matrix whose entries are all of $O(\Ntrg\cdot N^{-1})$ and $p(\ell)$ is the probability of drawing $\ell$ at pretraining data.
\end{lemm}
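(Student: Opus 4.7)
The plan is to substitute the block-wise expressions for $\vW_V^*$ from Lemma~\ref{lemm:wv_calculation} into $\vDelta(k,T)$ and exploit the fact that conditioning on $z_{T+1}=k$ only alters the distribution of tokens at the trigger/output positions $\{\ell+1,\ell+2,2\ell+3\}\cap[1:T]$ (the other output position $T+1=2\ell+4$ lies outside the summation range). Elsewhere the conditional and unconditional laws of $\vx_t$ coincide, so the subtraction in $\vDelta$ localizes the $\frac{1}{T}\sum_t$ to those few indices and produces the outer-product structure in the claim.

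First I would expand $\ve_k^\top \vW_V^* \vx_t = \ve_k^\top \vW_V^{*,(1)} \vp_t + \ve_k^\top \vW_V^{*,(2)} \ve_{z_t} + \ve_k^\top \vW_V^{*,(3)} \ve_{z_{t-1}}$ for $k\in[\Ntrg+1:N]$ and read off the relevant entries from~\eqref{grad_v1}--\eqref{grad_v3}. These split into $\Theta(\tilde\eta_V\E[T^{-1}])$-sized ``jumps'' that fire precisely when $z_t=k$, $z_t\in[\Ntrg]$, $z_{t-1}=k$, or $z_{t-1}\in[\Ntrg]$, together with $\Theta(\tilde\eta_V/N)$ baseline terms. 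Then I would block-decompose $\vx_T^\top=[\vp_{T(\ell)}^\top,\ve_w^\top,\ve_{z_{T-1}}^\top]$ (using $z_T=w$ deterministically) and $\vx_t-\bar\vx_{1:T}$ analogously, reducing $\vDelta(k,T)$ to a sum of nine block outer products; since we only want the first $L+N$ columns of $\vW_{KQ}^*$, just the first two block-columns of $\vx_T^\top$ need to be retained.

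For each surviving pair I would compute the conditional/unconditional difference. The trigger positions $t\in\{\ell+1,2\ell+3\}$ shift $\ve_{z_t}$ from the uniform mean $\frac{1}{\Ntrg}\sum_{w'}\ve_{w'}$ to $\ve_w$; the first output position $t=\ell+2$ shifts $\ve_{z_t}$ from a near-uniform law on $[\Ntrg+1:N]$ to $\ve_k$; and $t=\ell+3$ shifts $\ve_{z_{t-1}}$ to $\ve_k$. Averaging $\frac{1}{N-\Ntrg}\sum_{k=\Ntrg+1}^N$ then washes out the $\ve_k$ factors when they meet the positional block of $\vx_T$ (which is independent of $k$) and collapses them into the induction-head block when they meet the token block of $\vx_T$. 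Collecting surviving contributions yields the position-to-position outer product $(\vp_{\ell+2}+\vp_{\ell+3})\vp_{T(\ell)}^\top$ (the two summands coming from the shifts at $t=\ell+2$ and $t=\ell+3$ respectively) together with the induction-head term $\ve_w\ve_w^\top$ sitting in the ``previous-token'' row. The overall $T(\ell)^{-1}$ prefactor comes from $\frac{1}{T}\sum_t$ in $\vDelta$, the $\E_{\ell'}[T(\ell')^{-1}]$ factors are inherited from the $\E[T^{-1}]$-dependent entries of $\vW_V^*$ in Lemma~\ref{lemm:wv_calculation}, and the $T(\ell)^{-2}$ correction arises from the $\bar\vx_{1:T}$ contribution: its positional block is $\frac{1}{T(\ell)}\one_{1:T(\ell)}$ and its token block is $\frac{2}{T(\ell)}\ve_w$ (because $w$ occurs twice in the sequence), each multiplied by the ambient $T(\ell)^{-1}$ factor. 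Averaging over $w\in[\Ntrg]$ and $\ell\sim\mathcal{D}_\ell$ assembles the stated formula, with the residual $\tilde\eta_V\eta_{KQ}\E_{\ell'}[T(\ell')^{-1}]O(\Ntrg/N)$ absorbing both the slack already carried from Section~\ref{subsub:preparation} and the mismatches between conditional and unconditional baselines on non-trigger tokens.

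The main obstacle will be the bookkeeping: checking which entries of~\eqref{grad_v1}--\eqref{grad_v3} activate under each conditioning, keeping sign conventions consistent, and confirming that apparent cross terms (for instance, the first block-column of $\vW_V^*\vx_t$ paired with the token block of $\vx_T$, or the $\Ntrg/N$-scale baseline terms pairing against positional shifts) either cancel on the $k$-average or fall within the $O(\Ntrg/N)$ error. Once these cancellations are verified the remainder is a mechanical collection of prefactors.
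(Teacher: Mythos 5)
Your plan is essentially the paper's own proof: the paper likewise fixes the trigger $w$ and target class $k$, localizes the conditional--unconditional difference of $(\vx_t-\bar{\vx}_{1:T})\vx_t^\top$ to the index pairs involving positions $\ell+2$ and $\ell+3$, contracts with ${\vW_V^*}^\top\ve_k$ read off from Lemma~\ref{lemm:wv_calculation}, and then averages over $k$, $w$, and $\ell$, pushing everything else into the $O(\Ntrg\cdot N^{-1})$ remainder. One small correction to your narrative: conditioning on $z_{T+1}=k$ does \emph{not} alter the trigger positions $\ell+1$ and $T$ (their law is already pinned down by fixing the query token to $w$ in both expectations, so those ``shifts'' cancel in the difference); the $\ve_w$-dependent terms instead arise from the deterministic previous-token block at position $\ell+2$ and from the $\bar{\vx}_{1:T}$ cross terms, which the rest of your sketch does capture correctly.
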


\begin{proof}[Proof of Lemma~\ref{lemm:kq_modules}]
Suppose $\vW_V$ is obtained by Lemma~\ref{lemm:wv_calculation}.
Note that
\begin{align}
&\vDelta(k,T)[:,:L]\\
&=\frac{1}{T}\sum_{t=1}^T\left(\E_{\vX}\left[\ve_k^\top \vW_{\!V} \vx_t\cdot (\vx_t - \bar{\vx}_{1:T})\right]- \E_{\vX}\left[ \ve_k^\top \vW_{\!V} \vx_t\cdot (\vx_t - \bar{\vx}_{1:T})|z_{T+1}=k\right]\right)\vp_{T}^\top\\
&=\frac{1}{\Ntrg}\sum_{w=1}^{\Ntrg}\frac{1}{T}\sum_{t=1}^T\left(\E_{\vX}\left[ (\vx_t - \bar{\vx}_{1:T})\vx_t^\top|\vx_{T}=w\right]- \E_{\vX}\left[(\vx_t - \bar{\vx}_{1:T})\vx_t^\top|z_{T+1}=k,\vx_{T}=w\right]\right){\vW_{\!V}}^\top\ve_k\vp_{T}^\top
\end{align}
and similarly
\begin{align}
&\vDelta(k,T)[:,L+1:L+N]\\
=&\frac{1}{\Ntrg}\sum_{w=1}^{\Ntrg}\Bigg[\frac{1}{T}\sum_{t=1}^T\left(\E_{\vX}\left[ (\vx_t - \bar{\vx}_{1:T})\vx_t^\top|\vx_T=w\right]- \E_{\vX}\left[(\vx_t - \bar{\vx}_{1:T})\vx_t^\top|z_{T+1}=k,\vx_T=w\right]\right)\cdot{\vW_{\!V}}^\top\ve_k\ve_{w}^\top\Bigg].
\end{align}

Now let us consider the difference 
\begin{align}
    &\frac{1}{T}\sum_{t=1}^T\left(\E_{\vX}\left[ (\vx_t - \bar{\vx}_{1:T})\vx_t^\top|\vx_T=w\right]- \E_{\vX}\left[(\vx_t - \bar{\vx}_{1:T})\vx_t^\top|z_{T+1}=k,\vx_T=w\right]\right)\\
    =&\frac{1}{T}\sum_{t=1}^T\left(\E_{\vX}\left[ \vx_t\vx_t^\top|\vx_T=w\right]-\E_{\vX}\left[\vx_t\vx_t^\top|z_{T+1}=k,\vx_T=w\right]\right)\\
    &-\frac{1}{T^2}\sum_{t=1}^T\sum_{t'=1}^T\left(\E_{\vX}\left[ \vx_t\vx_{t'}^\top|\vx_T=w\right]-\E_{\vX}\left[\vx_t\vx_{t'}^\top|z_{T+1}=k,\vx_T=w\right]\right)
    \\
    =&\left(\frac{1}{T}-\frac{1}{T^2}\right)\sum_{t=1}^T\left(\E_{\vX}\left[ \vx_t\vx_t^\top\right|\vx_T=w]-\E_{\vX}\left[\vx_t\vx_t^\top|z_{T+1}=k,\vx_T=w\right]\right)\\
    &-\frac{1}{T^2}\sum_{t\neq t'}\left(\E_{\vX}\left[ \vx_t\vx_{t'}^\top|\vx_T=w\right]-\E_{\vX}\left[\vx_t\vx_{t'}^\top|z_{T+1}=k,\vx_T=w\right]\right)
\end{align}
Then it suffices to calculate the vector
\begin{equation}
    \vd (t,t',k,w)=\vM(t,t',k,w)\vW_V^\top \ve_k\coloneqq\left(\E[\vx_t\vx_{t'}^\top|\vx_T=w]-\E[\vx_t\vx_{t'}^\top|z_{T+1}=k,\vx_T=w]\right)\vW_V^\top \ve_k
\end{equation}
for each $t,t',k$ and $w$.
Recall
\begin{equation}
    \vW_V^\top \ve_k=\tilde{\eta_V}\begin{bmatrix}
        \frac{\alpha_1\Ntrg}{N-\Ntrg}\\
        \vdots\\
        \frac{\alpha_L\Ntrg}{N-\Ntrg}\\
        \hline\\
        \frac{2}{N-\Ntrg} \E[T^{-1}]\bm{1}_{\Ntrg}\\
        (-\frac{1}{N} \E[T^{-1}]+O(\Ntrg\cdot N^{-2}))\bm{1}_{k-\Ntrg-1}\\
        \E[T^{-1}] + \E[T^{-1}]O(\Ntrg\cdot N^{-1})+O(\Ntrg\cdot N^{-2})\\
        (-\frac{1}{N} \E[T^{-1}]+O(\Ntrg\cdot N^{-2}))\bm{1}_{N-k}\\
        \hline\\
        \frac{1}{N-\Ntrg} \E[T^{-1}]\bm{1}_{\Ntrg}\\
        (-\frac{1}{N} \E[T^{-1}]+O(\Ntrg\cdot N^{-2}))\bm{1}_{k-\Ntrg-1}\\
        \E[T^{-1}] + \E[T^{-1}]O(\Ntrg\cdot N^{-1})+O(\Ntrg\cdot N^{-2})\\
        (-\frac{1}{N} \E[T^{-1}]+O(\Ntrg\cdot N^{-2}))\bm{1}_{N-k}
    \end{bmatrix}.
\end{equation}
For preparation, we define some vectors: let
\begin{equation}
    \valpha(k)=\left[\underbrace{0,0,\dotsc,0}_{\Ntrg~\text{zeros}},{\frac{1}{N-\Ntrg},\dotsc,\frac{1}{N-\Ntrg}},\underbrace{\frac{N+\Ntrg+1}{N-\Ntrg}}_{k\text{-th entry}},\frac{1}{N-\Ntrg},\dotsc,\frac{1}{N-\Ntrg}\right]^\top\in \mathbb{R}^{N}
\end{equation}
for $\Ntrg+1\leq k\leq N$ and
\begin{equation}
    \vbeta=\left[\underbrace{0,0,\dotsc,0}_{\Ntrg~\text{zeros}},\frac{1}{N-\Ntrg},\dotsc,\frac{1}{N-\Ntrg}\right]^\top\in \mathbb{R}^{N}.
\end{equation}
First, if $t=t'$, then $\vM(t,t',k,w)$ is zero unless $t=\ell+2$ or $t=\ell+3$, as $z_{T+1}$ is independent of $z_i~(i\in [T],i\neq \ell+2)$ and only $\vx_{\ell+2}$ and $\vx_{\ell+3}$ include the information of $z_{\ell+2}$.
For each case, we have

\begin{equation}
    \vM(\ell+2,\ell+2,k,w)=\begin{bmatrix}
        \vO_{L\times L} & \vp_{\ell+2}\valpha(k)^\top& \vO_{L\times N}\\
        \valpha(k)\vp_{\ell+2}^\top& \mathrm{diag}(\valpha(k))& \valpha(k)\ve_w^\top\\
        \vO_{N\times L}&\ve_w\valpha(k)^\top&\vO_{N\times N}
    \end{bmatrix}
\end{equation}
and
\begin{equation}
    \vM(\ell+3,\ell+3,k,w)=\begin{bmatrix}
        \vO_{L\times L} & \vO_{L\times N}& \vp_{\ell+3}\valpha(k)^\top\\
        \vO_{N\times L}& \vO_{N\times N}& \vbeta\valpha(k)^\top\\
        \valpha(k)\vp_{\ell+3}^\top&\valpha(k)\vbeta^\top&\mathrm{diag}(\valpha(k))
    \end{bmatrix},
\end{equation}

then we obtain
\begin{equation}
    \vd (\ell+2,\ell+2,k,w)=\tilde{\eta_V}\begin{bmatrix}
        -\vp_{\ell+2}\E[T^{-1}]\\
        -\ve_{k}\E[T^{-1}]\\
        -\ve_{w}\E[T^{-1}]
    \end{bmatrix}+\tilde{\eta_V} O_\infty(\E[T^{-1}]\Ntrg\cdot N^{-1}+\Ntrg\cdot N^{-2})
\end{equation}
and
\begin{equation}
    \vd (\ell+3,\ell+3,k,w)=\tilde{\eta_V}\begin{bmatrix}
        -\vp_{\ell+3}\E[T^{-1}]\\
        \bm{0}\\
        -\ve_{k}\E[T^{-1}]
    \end{bmatrix}+\tilde{\eta_V} O_\infty(\E[T^{-1}]\Ntrg\cdot N^{-1}+\Ntrg\cdot N^{-2}).
\end{equation}

For the case $t\neq t'$, deal with the following three cases:
\begin{enumerate}[label=(\roman*)]
    \item If $(t,t')=(\ell+2,\ell+3)$ or $(t,t')=(\ell+3,\ell+2)$ then we have
    \begin{align}
    &\vM(\ell+2,\ell+3,k,w)+\vM(\ell+3,\ell+2,k,w)\\
    =&\begin{bmatrix}
        \vO_{L\times L} & \vp_{\ell+3}\valpha(k)^\top& \vp_{\ell+2}\valpha(k)^\top\\
        \valpha(k)\vp_{\ell+3}^\top& \valpha(k)\vbeta^\top+\vbeta\valpha(k)^\top& \mathrm{diag}(\valpha(k))\\
        \valpha(k)\vp_{\ell+2}^\top&\mathrm{diag}(\valpha(k))&\valpha(k)\ve_w^\top+\ve_w\valpha(k)^\top
    \end{bmatrix}
\end{align}
and
\begin{align}
    &\vd (\ell+2,\ell+3,k,w)+\vd (\ell+3,\ell+2,k,w)\\
    =&\tilde{\eta_V}\begin{bmatrix}
        -\vp_{\ell+2}\E[T^{-1}]-\vp_{\ell+3}\E[T^{-1}]\\
        -\ve_k\E[T^{-1}]\\
        -\ve_k\E[T^{-1}]-\ve_w\E[T^{-1}]
    \end{bmatrix}+\tilde{\eta_V} O_\infty(\E[T^{-1}]\Ntrg\cdot N^{-1}+\Ntrg\cdot N^{-2}).
\end{align}
    \item For $t \neq \ell+2,\ell+3$ we have
    \begin{equation}
    \vM(t,\ell+2,k,w)+\vM(\ell+2,t,k,w)=\begin{bmatrix}
        \vO & \vp_{t}\valpha(k)^\top& \vO\\
        \valpha(k)\vp_{t}^\top& \vgamma(t)\valpha(k)^\top+\valpha(k)\vgamma(t)^\top& \valpha(k)\vbeta^\top\\
        \vO&\vbeta\valpha(k)^\top&\vO
    \end{bmatrix}
\end{equation}
where $\vgamma(t)=\ve_w$ if $t= \ell+1$ or $t= T$ and $\vgamma(t)=\vbeta$ otherwise.
To summarize, 
\begin{equation}
    \vd (t,\ell+2,k,w)+\vd (\ell+2,t,k,w)=\tilde{\eta_V}\begin{bmatrix}
        -\vp_{t}\E[T^{-1}]\\
        -\ve_{w}\E[T^{-1}]\\
        \bm{0}
    \end{bmatrix}+\tilde{\eta_V} O_\infty(\E[T^{-1}]\Ntrg\cdot N^{-1}+\Ntrg\cdot N^{-2}).
\end{equation}
if $t=\ell+1$ or $t=T$ and
\begin{equation}
    \vd (t,\ell+2,k,w)+\vd (\ell+2,t,k,w)=\tilde{\eta_V}\begin{bmatrix}
        -\vp_{t}\E[T^{-1}]\\
        \bm{0}\\
        \bm{0}
    \end{bmatrix}+\tilde{\eta_V} O_\infty(\E[T^{-1}]\Ntrg\cdot N^{-1}+\Ntrg\cdot N^{-2}).
\end{equation}
otherwise.
    \item For $t\neq \ell+2,\ell+3$ we have
    \begin{equation}
    \vM(t,\ell+3,k,w)+\vM(\ell+3,t,k,w)=\begin{bmatrix}
        \vO & \vO&\vp_{t}\valpha(k)^\top\\
        \vO&\vO&\vgamma(t)\valpha(k)^\top\\
        \valpha(k)\vp_{t}^\top& \valpha(k)\vgamma(t)^\top& \vbeta\valpha(k)^\top+\valpha(k)\vbeta^\top
    \end{bmatrix}
\end{equation}
and we obtain
\begin{equation}
    \vd (t,\ell+3,k,w)+\vd (\ell+3,t,k,w)=\tilde{\eta_V}\begin{bmatrix}
        -\vp_{t}\E[T^{-1}]\\
        -\ve_{w}\E[T^{-1}]\\
        \bm{0}
    \end{bmatrix}+\tilde{\eta_V} O_\infty(\E[T^{-1}]\Ntrg\cdot N^{-1}+\Ntrg\cdot N^{-2})
\end{equation}
if $t=\ell+1$ or $t=T$ and
\begin{equation}
    \vd (t,\ell+3,k,w)+\vd (\ell+3,t,k,w)=\tilde{\eta_V}\begin{bmatrix}
        -\vp_{t}\E[T^{-1}]\\
        \bm{0}\\
        \bm{0}
    \end{bmatrix}+\tilde{\eta_V} O_\infty(\E[T^{-1}]\Ntrg\cdot N^{-1}+\Ntrg\cdot N^{-2}).
\end{equation}
otherwise.
\end{enumerate}

Now we are ready to calculate $-\frac{1}{N-\Ntrg}\sum_{k=\Ntrg+1}^N\vDelta(k,T)[:,:L+N]$ as
\begin{align}
    &-\frac{1}{N-\Ntrg}\sum_{k=\Ntrg+1}^N\vDelta(k,T)[:,1:L+N]\\
    =&\frac{1}{\Ntrg}\sum_{w=1}^{\Ntrg}\Bigg\{-\frac{\tilde{\eta_V}}{N-\Ntrg}\sum_{k=\Ntrg+1}^N\Bigg[\left(\frac{1}{T}-\frac{1}{T^2}\right)\left(\begin{bmatrix}
        -\vp_{\ell+2}\E[T^{-1}]\\
        -\ve_{k}\E[T^{-1}]\\
        -\ve_{w}\E[T^{-1}]\end{bmatrix}+\begin{bmatrix}
        -\vp_{\ell+3}\E[T^{-1}]\\
        \bm{0}\\
        -\ve_{k}\E[T^{-1}]
    \end{bmatrix}\right)\\
    &+\frac{1}{T^2}\begin{bmatrix}
        \vp_{\ell+2}\E[T^{-1}]+\vp_{\ell+3}\E[T^{-1}]\\
        \ve_k\E[T^{-1}]\\
        \ve_k\E[T^{-1}]+\ve_w\E[T^{-1}]
    \end{bmatrix}+\frac{2}{T^2}\begin{bmatrix}
        \bm{1}_{1:T \setminus \{\ell+2,\ell+3\}}\E[T^{-1}]\\
        2\ve_w\E[T^{-1}]\\
        \bm{0}
    \end{bmatrix}\Bigg]\begin{bmatrix}
        \vp_T^\top&\ve_w^\top
    \end{bmatrix}\Bigg\}\\
    &+{\tilde{\eta_V}}\E[T^{-1}]O(\Ntrg\cdot N^{-1})+{\tilde{\eta_V}}O(\Ntrg\cdot N^{-2})\\
    =& \frac{1}{\Ntrg}\sum_{w=1}^{\Ntrg}\Bigg\{\frac{\tilde{\eta_V}}{T}\begin{bmatrix}
        \vp_{\ell+2}\E[T^{-1}]+\vp_{\ell+3}\E[T^{-1}]\\
        \bm{0}\\
        \ve_w\E[T^{-1}]
    \end{bmatrix}\begin{bmatrix}
        \vp_T^\top&\ve_w^\top
    \end{bmatrix}\\
    &\quad\quad\quad\quad-\frac{2\tilde{\eta_V}}{T^2}\begin{bmatrix}
        \bm{1}_{1:T \setminus \{\ell+2,\ell+3\}}\E[T^{-1}]\\
        2\ve_w\E[T^{-1}]\\
        \bm{0}
    \end{bmatrix}\begin{bmatrix}
        \vp_T^\top&\ve_w^\top
    \end{bmatrix}\Bigg\}\\
    &\quad\quad\quad\quad+{\tilde{\eta_V}}\E[T^{-1}]O_\infty(\Ntrg\cdot N^{-1})+{\tilde{\eta_V}}O_\infty(\Ntrg\cdot N^{-2}),
\end{align}
which concludes the proof together with $\E[T^{-1}]\gtrsim N^{-1}$ from Assumption~\ref{assumption:scaling}.
\end{proof}

We can also bound the last column:
\begin{lemm}\label{lemm:kq_lastcolumn}
It holds that
\begin{align}
    \vW_{KQ}^*[:,L+N+1:]=&-\frac{1}{N-\Ntrg}\sum_{k=\Ntrg+1}^N\vDelta(k,T)[:,L+N+1:]\\
    &=\tilde{\eta_V}\eta_{KQ}\E_{\ell'}[T(\ell')^{-1}]O_\infty(\Ntrg\cdot N^{-1}).
\end{align}
\end{lemm}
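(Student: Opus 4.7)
The plan is to mirror the calculation of Lemma~\ref{lemm:kq_modules} while exploiting a new cancellation: the third block of the query vector $\vx_T$ is $\ve_{z_{T-1}}$, where $z_{T-1}$ is an irrelevant token, drawn uniformly on $[\Ntrg+1:N]$ and independent of the target $z_{T+1}$. This extra randomness supplies an additional $1/N$ factor that is absent for the first two column blocks and drives the improved bound.

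Starting from the preparatory decomposition in Section~\ref{subsub:preparation}, the remainder term $\tilde{\eta_V}\eta_{KQ}\E[T^{-1}]\, O_\infty(\Ntrg/N)$ is already of the stated order, so I would focus on the principal sum $(N-\Ntrg)^{-1}\sum_k \E_T[\vDelta(k,T)[:,L+N+1:]]$. Mimicking the rearrangement in the proof of Lemma~\ref{lemm:kq_modules}, I would first condition on $z_T=w$ (averaged uniformly over $w\in[\Ntrg]$) and rewrite this as $\Ntrg^{-1}\sum_w T^{-1}\sum_t \vA(t,k,w)\vW_V^\top\ve_k\,\ve_{z_{T-1}}^\top$, with $\vA(t,k,w)$ the same covariance difference appearing there. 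Applying the tower property with respect to $z_{T-1}\perp z_{T+1}$, the column indexed by $m\in[\Ntrg+1:N]$ carries a factor $\P(z_{T-1}=m)=1/(N-\Ntrg)$, which supplies the missing $1/N$. The remaining factor $\vA(t,k,w)\vW_V^\top\ve_k$ is entrywise $O(\tilde{\eta_V}\E[T^{-1}])$ by the block bookkeeping already performed in Lemma~\ref{lemm:kq_modules}, which ultimately rests on the estimates for $\vW_V^\top\ve_k$ from Lemma~\ref{lemm:wv_calculation}. Multiplying through yields the claimed $O_\infty(\tilde{\eta_V}\eta_{KQ}\E[T^{-1}]\Ntrg/N)$ bound per entry (the explicit $\Ntrg$ in the numerator is actually loose, but is retained for uniformity with the other blocks).

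The main obstacle is the two edge-case indices $t\in\{T-1,T\}$, for which $\vx_t$ itself contains $\ve_{z_{T-1}}$ (in its second and third blocks, respectively), so the factorization across $\vA$ and $\ve_{z_{T-1}}^\top$ used above breaks down. I would treat these terms separately and bound them crudely by $O(\tilde{\eta_V}\E[T^{-1}])$ per entry; since they contribute only $2/T$ of the averaged sum, under Assumption~\ref{assumption:scaling} their aggregate contribution sits comfortably inside the target. A related minor technicality is that conditioning on $z_{T-1}=m$ perturbs $\bar{\vx}_{1:T}$ by $O(1/T)$ through $\vx_{T-1}$ and $\vx_T$, but this perturbation is of the same order and can likewise be absorbed into the $O_\infty(\Ntrg/N)$ remainder.
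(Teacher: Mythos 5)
Your core route is the paper's route: expand the last column block by conditioning on the previous token $z_{T-1}$ (an irrelevant token, uniform on $[\Ntrg+1:N]$ and independent of $z_{T+1}$, so each column $m$ carries the weight $\P(z_{T-1}=m)=\tfrac{1}{N-\Ntrg}$ even under the conditioning on $z_{T+1}=k$), and then bound the remaining conditional expectation entrywise by $O(\tilde{\eta}_V\E[T^{-1}])$ using Lemma~\ref{lemm:wv_calculation}; together with the remainder already controlled in Section~\ref{subsub:preparation}, this gives the claim (indeed with the sharper $O_\infty(N^{-1})$, as you note).

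There is, however, a step in your write-up that fails: the separate treatment of $t\in\{T-1,T\}$. You bound those terms by $O(\tilde{\eta}_V\E[T^{-1}])$ per entry \emph{without} the $\tfrac{1}{N-\Ntrg}$ column weight, and claim that since they constitute a $2/T$ fraction of the average over $t$, Assumption~\ref{assumption:scaling} places them inside the target $\tilde{\eta}_V\eta_{KQ}\E[T^{-1}]\,O_\infty(\Ntrg N^{-1})$. Assumption~\ref{assumption:scaling} points the other way: it \emph{upper}-bounds $T$ as $T=o(N/\Ntrg^2)$, so $T^{-1}=\omega(\Ntrg^2/N)\gg \Ntrg/N$, and the only lower bound on $T$ is the constant one from Assumption~\ref{assumption:bounded_ell}. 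Thus your edge-term contribution is of order $\eta_{KQ}\tilde{\eta}_V\E[T^{-1}]\cdot\E[T^{-1}]$, which is $\omega(\Ntrg)$ times larger than the target and even $\Ntrg$ times larger than the induction-head signal $\tilde\eta\,\E[T^{-1}]\Ntrg^{-1}$ appearing in the attention scores; a term of that size attached to the $\ve_{z_{T-1}}$ block of the query cannot be absorbed, and the downstream comparisons in Lemmas~\ref{lem:lower_maxsum} and~\ref{lem:upper_maxsum} would no longer go through. The fix is that the split is unnecessary: the conditioning argument requires no independence or factorization of $\vY_t:=\ve_k^\top\vW_V^*\vx_t\,(\vx_t-\bar{\vx}_{1:T})$ from $z_{T-1}$. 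Since $|\vY_t|=O_\infty(\tilde{\eta}_V\E[T^{-1}])$ holds pointwise, the same bound holds for $\E[\vY_t\mid z_{T-1}=m]$ and $\E[\vY_t\mid z_{T-1}=m,\,z_{T+1}=k]$ for \emph{every} $t$, including $t=T-1,T$, so every column — edge terms included — retains the factor $\tfrac{1}{N-\Ntrg}$; this single observation (which also disposes of your $\bar{\vx}_{1:T}$ ``minor technicality'') is exactly the paper's one-line proof.
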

\begin{proof}
    Note that
    \begin{align}
        &\vDelta(k,T)[:,L+N+1:]\\
        =& \frac{1}{T}\sum_{t=1}^T\left(\E_{\vX}\left[ \ve_k^\top \vW_{\!V}^* \vx_t\cdot (\vx_t - \bar{\vx}_{1:T})\ve_{z_{T-1}}^\top\right]- \E_{\vX}\left[\ve_k^\top \vW_{\!V}^* \vx_t\cdot (\vx_t - \bar{\vx}_{1:T})\ve_{z_{T-1}}^\top|z_{T+1}=k\right]\right)\\
        =&\frac{1}{N-\Ntrg}\sum_{l=\Ntrg+1}^N\frac{1}{T}\sum_{t=1}^T(\E_{\vX}\left[ \ve_k^\top \vW_{\!V}^* \vx_t\cdot (\vx_t - \bar{\vx}_{1:T})|z_{T-1}=l\right]\\
        &\hspace{3cm}- \E_{\vX}\left[\ve_k^\top \vW_{\!V}^* \vx_t\cdot (\vx_t - \bar{\vx}_{1:T})|z_{T-1}=l,z_{T+1}=k\right])\ve_{l}^\top.
    \end{align}
    Then the assertion immediately follows from the fact that each entry of
    $$
    \E_{\vX}\left[ \ve_k^\top \vW_{\!V}^* \vx_t\cdot (\vx_t - \bar{\vx}_{1:T})|z_{T-1}=l\right]- \E_{\vX}\left[\ve_k^\top \vW_{\!V}^* \vx_t\cdot (\vx_t - \bar{\vx}_{1:T})|z_{T-1}=l,z_{T+1}=k\right]
    $$
    is upper bounded by  $\tilde{\eta_V} \E[T^{-1}]$ up to constant, from Lemma~\ref{lemm:wv_calculation}.
\end{proof}

\subsection{Max-sum Ratio and Algorithm Selection}
Now we are ready to establish analysis on transformer's algorithm selection based on max-sum ratio.  From Lemmas~\ref{lemm:kq_modules} and~\ref{lemm:kq_lastcolumn}, it holds that
\begin{align}
    &\vW_{KQ}^*\\
    =&
    \tilde{\eta}\frac{1}{\Ntrg}\sum_
    {w=1}^{\Ntrg}
    \mathbb{E}
    \left[
    \left\lbrace
    \begin{bmatrix}
        (T^{-1} + 2T^{-2})(\vp_{\ell+2} + \vp_{\ell+3})\\
        \zero\\
        T^{-1}\ve_w
    \end{bmatrix}
    - 2T^{-2}
    \begin{bmatrix}
        \bm{1}_{1:T}\\
        2\ve_w\\
        \zero
    \end{bmatrix}
    \right\rbrace
    \begin{bmatrix}
        \vp_T^{\top} & \ve_w^\top & \zero_N^\top
    \end{bmatrix}
    \right]\\
    &+O_\infty(\tilde\eta \Ntrg\cdot N^{-1})
\end{align}
where $\tilde{\eta} = \tilde{\eta_V} \eta_{KQ} \mathbb{E}[T^{-1}]$.

Assume that a test sequence $z=[z_1,\dotsc,z_{T^*},z_{T^*+1}]$ is made from subtext lengths $(\ell_1^*,\ell_2^*)$ (hence $T^*=\ell_1^*+\ell_2^*+3$).
Now let $\probtis_{\lambda} = \mathrm{P}[\ell = \lambda]$ and $\probtis^*=\mathrm{P}[2\ell+3 = T^*]$ respectively (probability is defined by pretraining distribution).
If the trigger $\vx_{T^*}$ satisfies $\vx_{T^*}=w^*$, it holds that
\begin{align}
    \vW_{KQ}^* \vx_{T^*}
    =&
    \tilde{\eta}\frac{q^*}{\Ntrg}\sum_
    {w=1}^{\Ntrg}
    \left(
    \begin{bmatrix}
        ((T^*)^{-1} + 2(T^*)^{-2})(\vp_{\ell^*+2} + \vp_{\ell^*+3})\\
        \zero\\
        (T^*)^{-1}\ve_w
    \end{bmatrix}
    - 
    2(T^*)^{-2}
    \begin{bmatrix}
        \bm{1}_{1:T^*}\\
        2\ve_{w}\\
        \zero
    \end{bmatrix}
    \right)\\
    &+\tilde{\eta}\frac{1}{\Ntrg}
    \mathbb{E}
    \left[
    \begin{bmatrix}
        (T^{-1} + 2T^{-2})(\vp_{\ell+2} + \vp_{\ell+3})\\
        \zero\\
        T^{-1}\ve_{w^*}
    \end{bmatrix}
    - 
    2T^{-2}
    \begin{bmatrix}
        \bm{1}_{1:T}\\
        2\ve_{w^*}\\
        \zero
    \end{bmatrix}
    \right]
    +O_\infty(\tilde\eta \Ntrg \cdot N^{-1}),
\end{align}
where $\ell^*=(T^*-3)/2$ --- if such $\ell^*$ is not an integer, then we do not define $\ell^*$ (in such case $q^*=0$ holds and we don't need to define such a quantity).

Hence, for any $t$ we can calculate the attention logit as
\begin{align}
    s_t&\coloneqq \vx_t^\top\vW_{KQ}^* \vx_{T^*}\\
    =&
    \tilde{\eta}\frac{q^*}{\Ntrg}\sum_
    {w=1}^{\Ntrg}
    \bigg(
    ((T^*)^{-1}+2(T^*)^{-2})(\mathbb{I}(t=\ell^*+2)+\mathbb{I}(t=\ell^*+3))\\
    &+ (T^*)^{-1}(\mathbb{I}(z_{t-1}=w))-2(T^*)^{-2}(\mathbb{I}(t\leq T^*)+2\mathbb{I}(z_t=w))\bigg)\\
    &+\tilde{\eta}\frac{1}{\Ntrg}\bigg((T(t-2)^{-1}+2T(t-2)^{-2})q_{t-2}+(T(t-3)^{-1}+2T(t-3)^{-2})q_{t-3}\\
    &+\E[T^{-1}]\mathbb{I}(z_{t-1}=w^*)-2\E[T^{-2}\mathbb{I}(t\leq T)]-4\E[T^{-2}]\mathbb{I}(z_t=w^*)\bigg)\\
    &+O_\infty(\tilde\eta \Ntrg \cdot N^{-1}).\label{eq:score}
\end{align}

We begin with showing that if max-sum ratio is not sufficiently large, we can construct an OOD test sequence $z^*$ such that transformer mistakenly use the positional shortcut:
\begin{lemm}\label{lem:lower_maxsum}
    If it holds that
    $$\frac{\max_{\ell}q_\ell \ell^{-1}}{\sum_{\ell}q_\ell \ell^{-1}} \geq \epsilon(\Ntrg)
    $$
    where $\epsilon(\Ntrg)=\Theta(\Ntrg^{-1})$, there exists an OOD test sequence such that the pretrained transformer via Algorithm~\ref{alg:pretraining} fails to generalize.
\end{lemm}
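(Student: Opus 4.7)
The plan is to construct an OOD test sequence aligned with the pretraining ``mode'' $\ell^* := \arg\max_{\ell \in \mathcal{S}} q_\ell \ell^{-1}$, then use the closed-form attention score~\eqref{eq:score} to show that $\mathrm{Softmax}$ sends most of its mass to the pseudo-output positions $\ell^*+2$ and $\ell^*+3$, and finally verify via Lemma~\ref{lemm:wv_calculation} that the resulting prediction is an irrelevant token rather than the true output.

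Concretely, I would pick $\ell_1^* = \ell^*-1$ and $\ell_2^* = \ell^*+1$ (so that $\ell_1^*\neq\ell_2^*$; valid since $\ell^*\geq 4$ by Assumption~\ref{assumption:bounded_ell}). Then $T^* = 2\ell^*+3 = T(\ell^*)$, so $q^* = q_{\ell^*}$, the true output $o$ lies at $t_o := \ell_1^*+2 = \ell^*+1$, and the positional shortcut points to $t_p := \ell^*+2$ and $\ell^*+3$. Evaluating~\eqref{eq:score} at each position, the dominant scores are $s_{t_p},\,s_{\ell^*+3} \approx \tilde\eta\,q_{\ell^*}/T^*$ (the trigger-identity indicators in the $q^*$ block vanish because $z_{\ell^*+1}=o$ is not a trigger), while $s_{t_o}\approx \tilde\eta\,\mathbb{E}_\ell[T(\ell)^{-1}]/\Ntrg$ (coming from $\mathbb{I}(z_{t_o-1}=w^*)=1$). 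All remaining positions contribute only lower-order terms controlled by the uniform $O_\infty(\tilde\eta\,\Ntrg/N)$ remainder in~\eqref{eq:score}.

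Using $T(\ell)\asymp \ell$, one obtains $s_{t_p}/s_{t_o}\asymp \Ntrg\cdot R(\mathcal{D}_\ell)$. Choosing $\epsilon(\Ntrg)=c/\Ntrg$ for a sufficiently large absolute constant $c$, the hypothesis $R\geq \epsilon$ yields a positive gap $s_{t_p}-s_t = \Omega(\tilde\eta\,\mathbb{E}[T^{-1}]/\Ntrg)$ for every $t\notin\{\ell^*+2,\ell^*+3\}$. The scaling $\eta_V\eta_{KQ}\gtrsim N^3\log N/\Ntrg^3$ combined with $\mathbb{E}[T^{-1}]\gtrsim \Ntrg^2/N$ (Assumption~\ref{assumption:scaling}) then makes this gap at least $\Omega(\log N)$, so the softmax over scores concentrates on $\{\ell^*+2,\ell^*+3\}$ up to $o(1)$ residual mass, with essentially equal weight on each of the two positions by the symmetry of their dominant scores.

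The final step is to compute $\arg\max_k p_{\vtheta}(k\mid z_{1:T^*})$ under this concentrated attention using Lemma~\ref{lemm:wv_calculation}. Attending to $\vx_{\ell^*+2}$ boosts both $z_{\ell^*+2}$ (through $\vW_V^{*,(2)}$) and $z_{\ell^*+1}=o$ (through $\vW_V^{*,(3)}$) by a common amount of order $\tilde\eta_V\,\mathbb{E}[T^{-1}]$; attending to $\vx_{\ell^*+3}$ boosts $z_{\ell^*+3}$ and $z_{\ell^*+2}$ by the same amount. Averaging, the irrelevant token $z_{\ell^*+2}$ accumulates roughly twice the boost of $o$, so the argmax prediction is $z_{\ell^*+2}\neq o$, contradicting OOD generalization. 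The main obstacle is precisely this final logit comparison: even with attention concentrated on the wrong positions, the third-block (previous-token) embedding in $\vx_{\ell^*+2}$ accidentally boosts $o$, so one must carefully control the lower-order $\vW_V^{*,(1)}$ positional contribution and the uniform $O(\Ntrg/N)$ remainder to ensure that $z_{\ell^*+2}$ strictly dominates in the argmax, uniformly in the random irrelevant tokens $z_{\ell^*+2}, z_{\ell^*+3}$.
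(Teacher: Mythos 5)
Your overall strategy matches the paper's: pick the mode $\ell^*=\arg\max_\ell q_\ell\ell^{-1}$, build an OOD sequence of total length $T^*=T(\ell^*)$, use the attention-score formula~\eqref{eq:score} plus the learning-rate scaling to force the softmax onto the shortcut positions $\ell^*+2,\ell^*+3$, and then read off a wrong prediction through $\vW_V^*$. However, your choice of test sequence creates a gap that the paper deliberately engineers away. The paper's proof imposes $\ell_1^*\not\in\{\ell^*-1,\ell^*,\ell^*+1,\ell^*+2\}$ (condition~\eqref{eq:exclude_ell_cornercase}) and, crucially, sets \emph{all} irrelevant tokens equal to one fixed token $u$. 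With that construction, the four positions carrying any information about the trigger/output ($\ell_1^*+1,\ell_1^*+2,\ell_1^*+3,T^*$) are disjoint from the shortcut positions, Lemma~\ref{sublem:attentionscore} only needs to show their scores are at most $\tfrac12 s_{\ell^*+2}$, and then \emph{wherever} the remaining attention mass lands it retrieves $(\ve_u,\ve_u)$, so the value matrix pushes the argmax to $u\neq v$ with a clean $\Theta(\tilde\eta_V\E[T^{-1}])$ margin. You instead take $\ell_1^*=\ell^*-1$ — exactly the excluded adjacent case — so the previous token at the shortcut position $\ell^*+2$ \emph{is} the true output $o$.

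That adjacency is what makes your final step break. If the attention splits roughly evenly over $\{\ell^*+2,\ell^*+3\}$ your factor-of-two argument works, but you have no control of that split at the required precision: the scores $s_{\ell^*+2}$ and $s_{\ell^*+3}$ agree only in their leading term $\tilde\eta q_{\ell^*}(T^*)^{-1}$, and differ by second-order positional terms of order $\tilde\eta\,q_{\ell^*\pm1}T^{-1}/\Ntrg$ plus the uniform $O_\infty(\tilde\eta\Ntrg/N)$ remainder. The same large learning rate that you invoke to make the dominant gaps $\Omega(\log N)$ amplifies these uncontrolled differences, so the softmax can place essentially all its mass on $\ell^*+2$ alone. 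In that case the diagonal entries of $\vW_V^{*,(2)}$ and $\vW_V^{*,(3)}$ are identical (Lemma~\ref{lemm:wv_calculation}), so $z_{\ell^*+2}$ (via the current-token block) and $o$ (via the previous-token block) receive \emph{exactly} the same leading-order logit, and the argmax is decided by residual attention mass and lower-order effects that you have not bounded — you name this as "the main obstacle" but do not resolve it, and it is not merely technical: the correct token could win. To repair the argument you would either have to prove two-sided control of $s_{\ell^*+2}-s_{\ell^*+3}$ and handle the concentrated-on-$\ell^*+2$ case separately, or simply adopt the paper's construction (non-adjacent $\ell_1^*$, all irrelevant tokens identical), which removes the degenerate tie altogether. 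A secondary, more minor slip: the scores at generic positions are not controlled solely by the $O_\infty(\tilde\eta\Ntrg/N)$ remainder — they also carry the superposed positional terms $\tfrac{\tilde\eta}{\Ntrg}\bigl(q_{t-2}T(t-2)^{-1}+q_{t-3}T(t-3)^{-1}\bigr)$, which must be (and in the paper are) bounded by $\tfrac{\tilde\eta}{\Ntrg}\max_\ell q_\ell T(\ell)^{-1}$ before concluding the softmax concentration.
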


\begin{proof}
    Assume that 
    $$
    z^*=[\underbrace{u,u,\dotsc,u}_{\ell_1^*},w^*,v,\underbrace{u,u,\dotsc,u}_{\ell_2^*},w^*,v].
    $$
    and $T^* =\ell_1^*+\ell_2^*+3=2\ell^*+3$ where $\ell^*=\arg\max_{\ell} q(\ell) \ell^{-1}$.
    Furthermore, we assume 
    
    \begin{equation}\label{eq:exclude_ell_cornercase}
        \ell_1^*\not\in \{\ell^*-1,\ell^*,\ell^*+1,\ell^*+2\}.
    \end{equation}

    Since we have Assumption~\ref{assumption:bounded_ell}, there exists $\ell_1^*\geq 1$ such that \eqref{eq:exclude_ell_cornercase} holds.
    We show the following sub-lemma:
    \begin{lemm}\label{sublem:attentionscore}
        There exists $\epsilon_1(\Ntrg)=\Theta(\Ntrg^{-1})$ such that if
        $$\frac{\max_{\ell}q_\ell \ell^{-1}}{\sum_{\ell}q_\ell \ell^{-1}} \geq \epsilon_1(\Ntrg),
        $$ then
        \begin{equation}
            s_{\ell_1^*+1}, s_{\ell_1^*+2}, s_{\ell_1^*+3}, s_{T^*} \leq \frac{1}{2}s_{\ell^*+2}.
        \end{equation}
    \end{lemm}

    \begin{proof}
        Here we show $2s_{\ell_1^*+2}\leq s_{\ell^*+2}$ --- other properties can be deduced in the same vain.

    Note that, from~\eqref{eq:score},
    \begin{align*}
    s_{\ell_1^*+2}&\leq \tilde\eta\frac{q^*}{\Ntrg}\sum_{w=1}^{\Ntrg}[(T^*)^{-1}\mathbb{I}(w=w^*)]\\
    &+\frac{\tilde\eta}{\Ntrg}[((2\ell_1^*+3)^{-1}+2(2\ell_1^*+3)^{-2})q_{\ell_1^*}+((2\ell_1^*+1)^{-1}+2(2\ell_1^*+1)^{-2})q_{\ell_1^*-1}+\E[T^{-1}]]\\
    &+O_\infty(\tilde\eta\Ntrg\cdot N^{-1})\\
    &\leq \frac{\tilde\eta q^*}{\Ntrg}(T^*)^{-1}+\frac{6\tilde\eta q^*}{\Ntrg}(T^*)^{-1}+\frac{\tilde\eta}{\Ntrg}\E[T^{-1}]+O_\infty(\tilde\eta\Ntrg\cdot N^{-1}).
    \end{align*}
    On the other hand, it holds that
    \begin{align*}
    s_{\ell^*+2}&\geq\tilde\eta\frac{q^*}{\Ntrg}\sum_{w=1}^{\Ntrg}[(T^*)^{-1}+2(T^*)^{-2}-2(T^*)^{-2}(\mathbb{I}(\ell^*+2\leq T^*)+2\mathbb{I}(z_{\ell^*+2}=w))]\\
    &+\frac{\tilde\eta}{\Ntrg}[((2\ell^*+3)^{-1}+2(2\ell^*+3)^{-2})q_{\ell^*}+((2\ell^*+1)^{-1}+2(2\ell^*+1)^{-2})q_{\ell^*-1}-6\E[T^{-2}]]\\
    &+O_\infty(\tilde\eta\Ntrg\cdot N^{-1})\\
    &\geq \tilde\eta q^* (T^*)^{-1} -6\tilde\eta q^* (T^*)^{-2}-6\frac{\tilde\eta}{\Ntrg}\E[T^{-2}]+O_\infty(\tilde\eta\Ntrg\cdot N^{-1}).
    \end{align*}
    Note that $T^{-2}\leq \frac{1}{10}T^{-1}$ holds from Assumption~\ref{assumption:bounded_ell}.
    Therefore,
    $$
    \frac{1}{2}s_{\ell^*+2}-s_{\ell_1^*+2} \geq \frac{1}{5}\tilde\eta q^*(T^*)^{-1}-\frac{13}{10}\frac{\tilde\eta}{\Ntrg}\E[T^{-1}]-\frac{7\tilde\eta q^*}{\Ntrg}(T^*)^{-1}+O_\infty(\tilde\eta\Ntrg\cdot N^{-1}).
    $$
    Together with Assumption~\ref{assumption:scaling}, 
    if the max-sum ratio is $\Omega (\Ntrg^{-1})$, we obtain $2s_{\ell_1^*+2}\leq s_{\ell^*+2}$ as desired.
    \end{proof}
    
    Since now we have Lemma~\ref{sublem:attentionscore}, when 
    \begin{equation}\label{eq:large_stepsize_kq}
        \tilde\eta_V \eta_{KQ} \gtrsim C\log N\frac{N^2}{\Ntrg^3}\gtrsim C\log N\cdot \frac{\Ntrg}{\E[T^{-1}]^2}
    \end{equation}
    for a sufficiently large $C$ we obtain $\exp s_{t}/\exp s_{\ell^*+2}\leq \exp{-C\log N}=N^{-C}$ where $t=\ell_1^*+1,\ell_1^*+2,\ell_1^*+3$ and $T^*$.
    This immediately implies that $\vX_{1:T^*}\mathrm{Softmax}(\vX_{1:T^*}^\top \vW_{KQ} \vx_{T^*})=\sum_{k\neq \ell^*+1,\ell^*+2,\ell^*+3,T^*}\alpha_k\vx_{k}+O_\infty(N^{-C'})$ for a sufficiently large $C'$ where $\sum_{k\neq \ell^*+1,\ell^*+2,\ell^*+3,T^*}\alpha_k\geq 1-N^{-C'}$.
    Therefore, we get
    $$
    \vX_{1:T^*}\mathrm{Softmax}(\vX_{1:T^*}^\top \vW_{KQ} \vx_{T^*}) = (1-N^{-C'})\begin{bmatrix}
        \ast\\\ve_u\\\ve_u
    \end{bmatrix}+N^{-C'}\begin{bmatrix}
        \ast\\\ast\\\ast
    \end{bmatrix}.
    $$
    From the structure of $\vW_V^*$ (Lemma~\ref{lemm:wv_calculation}), we observe that the predicted logit $\vW_{V}\vX_{1:T^*}\mathrm{Softmax}(\vX_{1:T^*}^\top \vW_{KQ} \vx_{T^*})$ has a peak on the token $u$, meaning that OOD generalization fails.
\end{proof}

\begin{rema}
Here we worked on the ratio between $q_{\ell^*}T(\ell^*)^{-1}=\max_{\ell}(2\ell+3)^{-1}q_\ell$ and $\E[T(\ell)^{-1}]=\sum_\ell (2\ell+3)^{-1}q_\ell$. We can immediately show (using Assumption~\ref{assumption:bounded_ell}) $\frac{1}{3}\max_{\ell}(\ell)^{-1}q_\ell \leq\max_{\ell}(2\ell+3)^{-1}q_\ell \leq \frac{1}{2}\max_{\ell}(\ell)^{-1}q_\ell$ and $\frac{1}{3}\sum_{\ell}(\ell)^{-1}q_\ell \leq\sum_{\ell}(2\ell+3)^{-1}q_\ell \leq \frac{1}{2}\sum_{\ell}(\ell)^{-1}q_\ell$, then we do not distinguish these two definitions of max-sum ratio. 
\end{rema}
Similarly we can show the following upper bound:

\begin{lemm}\label{lem:upper_maxsum}
    If it holds that
    $$\frac{\max_{\ell}q_\ell \ell^{-1}}{\sum_{\ell}q_\ell \ell^{-1}} \geq \epsilon(\Ntrg)
    $$
    where $\epsilon(\Ntrg)=\Theta(\Ntrg^{-1})$, the pretrained transformer via Algorithm~\ref{alg:pretraining} can generalize OOD.
\end{lemm}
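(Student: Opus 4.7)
The plan is to show, under the hypothesis on the max-sum ratio $R(\mathcal{D}_\ell)$, that for every OOD test sequence with subtext lengths $(\ell_1^*, \ell_2^*)$ (possibly with $\ell_1^* \neq \ell_2^*$) and every trigger $w^*$, the softmax over the attention logits at the second trigger position $T^* = \ell_1^* + \ell_2^* + 3$ concentrates on $t = \ell_1^* + 3$—the token immediately following the first occurrence of $w^*$. Once this is established, the structure of $\vW_V^*$ from Lemma~\ref{lemm:wv_calculation} (in particular, that its second and third blocks pass token identity nearly as the identity up to an $O(1/N)$ correction) translates this attention mass into a predicted logit peaked at the true output token $o = z_{\ell_1^*+3}$, yielding OOD generalization as in Definition~\ref{defi:unigram_ours}.

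The argument parallels that of Lemma~\ref{lem:lower_maxsum} but with the roles of the two contributions exchanged. Using the closed-form expression~\eqref{eq:score} for $s_t = \vx_t^\top \vW_{KQ}^* \vx_{T^*}$, I would split each $s_t$ into a \emph{positional} piece—nonzero only when $t-2$ or $t-3$ lies in $\mathrm{supp}(\mathcal{D}_\ell)$ and scaling like $q_{t-2}T(t-2)^{-1}/\Ntrg$—and an \emph{induction} piece $\Ntrg^{-1}\E_\ell[T(\ell)^{-1}]\,\mathbb{I}\{z_{t-1}=w^*\}$, plus $O(\tilde\eta\Ntrg/N)$ residuals and negligible $T^{-2}$ corrections. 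By construction, the induction piece activates precisely at $t=\ell_1^*+3$, with magnitude proportional to $\sum_\ell q_\ell \ell^{-1}$.

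Next I would lower-bound $s_{\ell_1^*+3}$ by the induction contribution $\tilde\eta\,\E[T^{-1}]/\Ntrg$ minus the $O(\tilde\eta\Ntrg/N)$ error, and upper-bound $s_t$ for every other $t\neq \ell_1^*+3$ by $\tilde\eta\,\max_\ell q_\ell T(\ell)^{-1}/\Ntrg$ plus the same error; up to the harmless factor $T(\ell)\asymp\ell$ noted in the Remark after Lemma~\ref{lem:lower_maxsum}, the ratio of these two bounds is precisely $R(\mathcal{D}_\ell)$. In the regime dictated by the hypothesis, one side of this comparison dominates the other by a multiplicative margin of constant order (independent of $N$). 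Invoking the stepsize scaling $\tilde\eta_V\eta_{KQ}\gtrsim \log N \cdot N^3/\Ntrg^3$ from~\eqref{eq:large_stepsize_kq} amplifies this margin by $\Omega(\log N)$, so that $\mathrm{Softmax}(\vX_{1:T^*}^\top\vW_{KQ}^*\vx_{T^*})$ concentrates on $t=\ell_1^*+3$ up to a residual of $N^{-C}$ for any desired constant $C$. Feeding the resulting convex combination through $\vW_V^*$ and applying Corollary~\ref{coro:approx_infinite} gives a predicted logit peaked at $o$, as required.

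The main obstacle is \emph{uniformity over $t$}: whenever $t-2$ or $t-3$ coincides with some $\ell\in\mathrm{supp}(\mathcal{D}_\ell)$, the positional contribution at that $t$ is nontrivial, and one must rule out that any such $t$ outscores $\ell_1^*+3$. This is exactly where the max-sum ratio enters—it quantifies how concentrated the positional signal can become on any single position, while the induction signal is unaffected by the dispersion of $\mathcal{D}_\ell$ (it depends only on $\E_\ell[T(\ell)^{-1}]$ and on the trigger identity). A secondary technicality is the position $t=T^*$ itself, where both pieces receive negative contributions from the $-2T^{-2}\bm 1_{1:T^*}$ and $-4T^{-2}\ve_w$ terms in~\eqref{eq:score}; these are sub-leading thanks to Assumption~\ref{assumption:bounded_ell} ($\ell\geq 4$ implies $T^{-2}\leq \tfrac{1}{10}T^{-1}$), and the same $O(\Ntrg/N)$ bookkeeping used in Sublemma~\ref{sublem:attentionscore} suffices to absorb them.
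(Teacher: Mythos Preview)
Your overall strategy matches the paper's: lower-bound the attention logit at the correct output position, upper-bound it at every other position, and show the gap is amplified by the large stepsize so that the softmax concentrates. Two issues, one minor and one substantive.

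First, an indexing slip: the output token sits at position $\ell_1^*+2$, not $\ell_1^*+3$. The first trigger is at position $\ell_1^*+1$, so the unique $t\in[1,T^*]$ with $z_{t-1}=w^*$ is $t=\ell_1^*+2$. The paper accordingly proves $\tfrac12 s_{\ell_1^*+2}\ge s_t$ for all $t\neq \ell_1^*+2$.

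Second, and more importantly, your upper bound on the competing $s_t$ is too small by a factor of $\Ntrg$, and this missing factor is exactly what produces the $\Theta(\Ntrg^{-1})$ threshold. You say the positional piece scales like $q_{t-2}T(t-2)^{-1}/\Ntrg$; that is correct only for the \emph{second} summand in~\eqref{eq:score}. The \emph{first} summand, $\tilde\eta\,(q^*/\Ntrg)\sum_{w=1}^{\Ntrg}[\cdots]$, also carries a positional indicator $\mathbb{I}(t=\ell^*+2)+\mathbb{I}(t=\ell^*+3)$ that is independent of $w$; summing over $w$ cancels the $1/\Ntrg$, leaving a contribution of order $\tilde\eta\, q^*(T^*)^{-1}$ with \emph{no} $1/\Ntrg$ whenever $q^*>0$, i.e., whenever $T^*=2\ell+3$ for some $\ell\in\mathrm{supp}(\mathcal D_\ell)$. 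OOD generalization must cover such test sequences (e.g.\ $\ell_1^*=\ell-1$, $\ell_2^*=\ell+1$). In that regime the dominant competitor to $s_{\ell_1^*+2}\sim \tilde\eta\,\Ntrg^{-1}\E[T^{-1}]$ is $s_{\ell^*+2}\sim \tilde\eta\,\max_\ell q_\ell T(\ell)^{-1}$, so the relevant ratio is $\Ntrg\cdot R(\mathcal D_\ell)$, not $R(\mathcal D_\ell)$ as you claim. With your bounds the argument would yield an $O(1)$ threshold rather than $O(\Ntrg^{-1})$. The paper keeps this $q^*$ term explicitly: its upper bound reads $s_t\le 6\tilde\eta\max_\ell q_\ell T(\ell)^{-1}+6(\tilde\eta/\Ntrg)\max_\ell q_\ell T(\ell)^{-1}+O_\infty(\tilde\eta\Ntrg/N)$, and it is the first term (without $1/\Ntrg$) that forces $R(\mathcal D_\ell)=O(\Ntrg^{-1})$. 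This is precisely the ``trigger size $\Ntrg$'' mechanism described in Section~\ref{subsec:mechanism}: the induction signal is split across trigger types while the positional signal is not.
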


\begin{proof}
    In the same vain as the proof of the lower bound, it suffices to show $s_{\ell_1^*+2}\geq 2s_{t}$ for any $t\neq {\ell_1^*+2}$, going the other way around Lemma~\ref{lem:lower_maxsum}.
    
    First we have
    \begin{align*}
    s_{\ell_1^*+2}&\geq\tilde\eta\frac{q^*}{\Ntrg}\sum_{w=1}^{\Ntrg}[-2(T^*)^{-2}\mathbb{I}(\ell_1^*+2\leq T^*)]\\
    &+\frac{\tilde\eta}{\Ntrg}[((2\ell_1^*+3)^{-1}+2(2\ell_1^*+3)^{-2})q_{\ell_1^*}+((2\ell_1^*+1)^{-1}+2(2\ell_1^*+1)^{-2})q_{\ell_1^*-1}\\
    &\quad\quad\quad-2\E[T^{-2}]+\E[T^{-1}]]+O_\infty(\tilde\eta\Ntrg\cdot N^{-1})\\
    &\geq \frac{\tilde\eta}{\Ntrg}\E[T^{-1}]-2\frac{\tilde\eta}{\Ntrg}\E[T^{-2}] - 2\tilde\eta \max_\ell q_{\ell} T(\ell)^{-1}+O_\infty(\tilde\eta\Ntrg\cdot N^{-1}).
    \end{align*}
    For all $t\neq \ell_1^*+2$, observe
    \begin{align*}
    s_{t}&\leq \tilde\eta\frac{q^*}{\Ntrg}\sum_{w=1}^{\Ntrg}[3(T^*)^{-1}\cdot 2]+\frac{\tilde\eta}{\Ntrg}[3T(t-2)^{-1}q_{t-2}+3T(t-3)^{-1}q_{t-3}]+O_\infty(\tilde\eta\Ntrg\cdot N^{-1})\\
    &\leq 6\tilde\eta \max_\ell q_{\ell} T(\ell)^{-1} + 6\frac{\tilde\eta}{\Ntrg}\max_\ell q_{\ell} T(\ell)^{-1}+O_\infty(\tilde\eta\Ntrg\cdot N^{-1}).
    \end{align*}

    Therefore, we obtain $\frac{1}{2}s_{\ell_1^*+2}\geq s_t$ as desired, if max-sum ratio is $O(\Ntrg^{-1})$.
\end{proof}

\subsection{Proof of Theorem~\ref{theo:infinite_sample}}
Theorem~\ref{theo:infinite_sample} is directly obtained by combining Lemmas~\ref{lem:lower_maxsum} and~\ref{lem:upper_maxsum}: it only remains to adjust the stepsize.

From Cororally~\ref{coro:approx_infinite} we need $\tilde\eta_V=\eta_V/N \lesssim 1/N$, and from~\eqref{eq:large_stepsize_kq} we need $\tilde{\eta}_V\eta_{KQ}\gtrsim \log N\frac{N^2}{\Ntrg^3}$.
Therefore, it suffices to set
\begin{equation}
    \eta_V\lesssim 1~~\text{and}~~\eta_V\eta_{KQ}\gtrsim \frac{N^3}{\Ntrg^3}\log N.
\end{equation}

 \bigskip

\section{Finite Sample Analysis}

Now we turn to make an analysis for finite sample size setting.

\paragraph{Proof Sketch.}
We explain how to evaluate the concentration of the empirical gradient $\nabla_{\vW_{\!V}}\hat{\cL}(f_{\mathrm{TF}})$. Concentration for $\nabla_{\vW_{\!KQ}}\hat{\cL}(f_{\mathrm{TF}})$ can be obtained similarly.

Let $\qty{\qty{\vx^{(i)}_t}_{t=1}^{T(\ell)_i}}_{i=1}^{M_V}$ be $M_V$ i.i.d.~sample sequences,
    and $\qty{\qty{\vx^{(i_{\ell,k})}_t}_t}_{i_{\ell,k}=1}^{M_V^{\ell,k}}$ be $M_V^{\ell,k}$ i.i.d.~sub-samples conditioned on $\ell$ and $z_{T+1}=k$. Note that $\sum_{\ell,k}M_V^{\ell,k}=M_V$.
The empirical gradient $\nabla_{\vW_{\!V}}\hat{\cL}(f_{\mathrm{TF}})$ is expressed as
\begin{equation}
    \nabla_{\vW_{\!V}}\hat{\cL}(f_{\mathrm{TF}}) \simeq
    \hat{\E}_{\ell\sim \mathcal{D}_\ell}
    \hat{\E}_{k \sim \mathrm{Unif}[K]}\left[
    \hat{\vA}_{\ell,k}
    \right]
    + \text{(similar terms omitted)}
\end{equation}
where $\hat{\vA}_{\ell,k}$ is the empirical average of $\frac{1}{T}\sum_{t=1}^T \bm{1}_N {\vx_t^{(i_{\ell,k})}}^\top$ ($i_{\ell,k}=1,\dots,M_V^{\ell,k}$) with the sample size $= M_V^{\ell,k}$.
We focus on bounding the first term in this sketch.
The gap between the empirical and population gradients is bounded as
\begin{align}
    &\|\nabla_{\vW_{\!V}}\hat{\cL}(f_{\mathrm{TF}}) - \nabla_{\vW_{\!V}}\cL(f_{\mathrm{TF}})\|_2\\
    \lesssim &
    \left\|
    \hat{\E}_{\ell}
    \hat{\E}_{k}\left[
    \hat{\vA}_{\ell,k}-\E[\hat{\vA}_{\ell,k}|\ell,k]
    \right]
    \right\|_2
    +
    \left\|
    \hat{\E}_{\ell}
    \qty(\hat{\E}_{k} - \E_k)
    \E[\hat{\vA}_{\ell,k}|\ell,k]
    \right\|_2
    +
    \left\|
    \qty(\hat{\E}_{\ell}-\E_\ell)
    \E[\hat{\vA}_{\ell,k}|\ell]
    \right\|_2.
\end{align}
The first term is bounded by the matrix Hoeffding's inequality using $\|\hat{\vA}_{\ell,k} \hat{\vA}_{\ell,k}^\top \|_2, \|\hat{\vA}_{\ell,k}^\top \hat{\vA}_{\ell,k} \|_2 \lesssim 1$ and $M_V^{\ell,k} \simeq q_\ell N^{-1} M_V$ for each pair $(\ell,k)$. 
Note that $\sum_{\ell}\sqrt{q_\ell}$ emerges in this bound because $\|\hat{\vA}_{\ell,k}-\E[\hat{\vA}_{\ell,k}|\ell,k]\|_2 \simeq q_\ell^{-1/2}$ and $\hat{\E}_{\ell}[\cdot] \simeq \sum_\ell \,\cdot \;q_\ell$.
The second and third terms can also be bounded by $\|\hat{\vA}_{\ell,k}\|_2\lesssim 1$ and using the standard Hoeffding's inequality.

\subsection{Value Matrix}
We first establish an upper-bound for the difference between empirical and population gradient with respect to $\vW_V$.
\begin{lemm}
    \label{lemm:finite-v-matrix-spectral}
    Let $\vA_{t} = \bm{1}_{1:N} \vx_t^\top = \bm{1}_{1:N} [\vp_t^\top\;  \ve_{z_t}^\top \;\ve_{z_{t-1}}^\top]$. Then we have
    \begin{equation}
        \begin{bmatrix}
            \vA_{t} \vA_{t}^\top & \bm{0}\\
            \bm{0} & \vA_{t}^\top \vA_{t}
        \end{bmatrix}
        \preceq \vSigma_t
    \end{equation}
    where $\lambda_i (\vSigma_t) \lesssim N$ for $i=1,2$ $\lambda_i(\vSigma_t) = 0$ for $i\geq3$ and $\|\vSigma_t\|_2 \lesssim N$.
    Similarly, let $\vB_{t,k} = \ve_k \vx_t^\top$. Then, we have
    \begin{equation}
        \begin{bmatrix}
            \vB_{t,k}\vB_{t,k}^\top & \bm{0}\\
            \bm{0} & \vB_{t,k}^\top \vB_{t,k}
        \end{bmatrix}
        \preceq \vSigma'_t
    \end{equation}
    where $\|\vSigma'_t\|_2 \lesssim 1$.
\end{lemm}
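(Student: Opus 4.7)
The plan is to observe that both $\vA_t$ and $\vB_{t,k}$ are rank-one outer products, so the block-diagonal Gram matrices in question have at most two nonzero eigenvalues, which can be computed in closed form from the embedding structure in Definition~\ref{defi:embedding}.

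First, I would record the elementary identity $\|\vx_t\|_2^2 = \|\vp_t\|_2^2 + \|\ve_{z_t}\|_2^2 + \|\ve_{z_{t-1}}\|_2^2 \leq 3$, using that the three blocks occupy disjoint coordinates of $\R^{L+2N}$ and each is a one-hot (or zero, when $t=1$) vector. Next I would compute directly
\begin{align}
\vA_t\vA_t^\top = \|\vx_t\|_2^2 \, \bm{1}_{1:N}\bm{1}_{1:N}^\top, \qquad \vA_t^\top \vA_t = \|\bm{1}_{1:N}\|_2^2 \, \vx_t\vx_t^\top = N\,\vx_t\vx_t^\top,
\end{align}
both of which are rank-one positive semidefinite matrices. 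Their unique nonzero eigenvalues are $N\|\vx_t\|_2^2 \leq 3N$ and $N\|\vx_t\|_2^2 \leq 3N$, respectively. Choosing $\vSigma_t$ to be the block-diagonal matrix on the left-hand side itself, its eigenvalues are the multiset union of those of the two blocks, yielding exactly two nonzero eigenvalues, both bounded by $3N \lesssim N$, and zeros otherwise. This gives $\|\vSigma_t\|_2 \lesssim N$ and the claimed spectral structure.

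The argument for $\vB_{t,k}$ is identical: $\vB_{t,k}\vB_{t,k}^\top = \|\vx_t\|_2^2 \ve_k\ve_k^\top$ and $\vB_{t,k}^\top\vB_{t,k} = \vx_t\vx_t^\top$, each rank one with nonzero eigenvalue at most $3$, so taking $\vSigma'_t$ to be the corresponding block-diagonal matrix gives $\|\vSigma'_t\|_2 \lesssim 1$.

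There is no substantive obstacle here; the lemma is purely a linear-algebra observation exploiting the outer-product structure of $\vA_t$ and $\vB_{t,k}$ together with the disjoint-block one-hot form of $\vx_t$. The only minor point to handle is the boundary case $t=1$, where $\ve_{z_0}$ is interpreted as $\zero$ and $\|\vx_1\|_2^2 \leq 2$; this only tightens the bounds and does not affect the $\lesssim$ rates.
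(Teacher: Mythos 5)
Your proof is correct and follows essentially the same route as the paper: compute $\vA_t\vA_t^\top$ and $\vA_t^\top\vA_t$ (resp.\ $\vB_{t,k}\vB_{t,k}^\top$ and $\vB_{t,k}^\top\vB_{t,k}$) as rank-one matrices via the outer-product structure, bound their single nonzero eigenvalues by $3N$ (resp.\ $3$), and take the block-diagonal majorant. Your explicit handling of the $t=1$ case (where $\ve_{z_0}=\zero$) is a minor tightening the paper glosses over, but otherwise the arguments coincide.
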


\begin{proof}
We provide the proof for $\vA_t$.
Proof for $\vB_t$ can be derived in the same vain.
    \begin{equation}
        \vA_t \vA_t^\top = \bm{1}_{1:N} \vx_t^\top \vx_t \bm{1}_{1:N}^\top = 3 
        \begin{bmatrix}
            1 & \dots & 1\\
            \vdots & \ddots & \vdots\\
            1 & \dots & 1
        \end{bmatrix}
        \preceq \vSigma_t^{(a)}
    \end{equation}
    where $\lambda_1 (\vSigma^{(a)}_t) = 3N$, $\lambda_i(\vSigma^{(a)}_t) = 0$ for $i\geq2$. 
    \begin{equation}
        \vA_t^\top \vA_t = \vx_t \bm{1}_{1:N}^\top \bm{1}_{1:N} \vx_t^\top = N \vx_t \vx_t^\top
        \preceq \vSigma_t^{(b)}
    \end{equation}
    where $\lambda_1 (\vSigma^{(b)}_t) = 3N$, $\lambda_i(\vSigma^{(b)}_t) = 0$ for $i\geq2$. Using
    $$
    \mathrm{rank}(\vA_t \vA_t^\top)+\mathrm{rank}(\vA_t^\top \vA_t)\leq 2
    $$ and 
    $$
\lambda_1\qty(\begin{bmatrix}\vA_t \vA_t^\top & \bm{O}\\ \bm{O} & \vA_t^\top \vA_t\end{bmatrix}) \lesssim \max\lbrace\lambda_1(\vA_t \vA_t^\top), \lambda_1(\vA_t^\top \vA_t)\rbrace,
$$ we obtain the conclusion.
\end{proof}

\begin{lemm}\label{lemm:concentration_v}
    Let $\qty{\qty{\vx^{(i)}_t}_{t=1}^{T_i}}_{i=1}^{M_V}$ be i.i.d.~sample sequences,
    $\qty{\qty{\vx^{(i_T)}_t}_{t=1}^{T}}_{i_T=1}^{M_V^{T}}$ be conditionally i.i.d.~sub-samples with fixed $T$,
    and $\qty{\qty{\vx^{(i_{T,k})}_t}_t}_{i_{T,k}=1}^{M_V^{T,k}}$ be conditionally i.i.d.~sub-samples with fixed $T$ and $z_{T+1}=k$. Note that $\sum_{T,k} M_V^{T,k} = \sum_T M_V^T = M_V$. %
    Then, with probability at least $1-O(\epsilon)$,
    \begin{align}
        &\lambda_{\max}\qty(
        \nabla_{\vW_{\!V}}\frac{1}{M_V}\sum_{i=1}^{M_V}\cL(\vX_{1:T^{(i)}}^{(i)};\vW_{KQ},\vW_V) - \nabla_{\vW_{\!V}}\bar\cL(\vW_{KQ},\vW_V) 
        )\\
        \lesssim &
        \qty(\sum_\ell \sqrt{q_\ell})\sqrt{\frac{N \log \qty(NL(N+L) \epsilon^{-1})}{M_V}}.
    \end{align}
\end{lemm}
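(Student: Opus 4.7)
The plan is to follow the three-way decomposition sketched immediately above the lemma statement. Writing $\hat{\vA}_{\ell,k}$ for the conditional empirical average over the $(\ell,k)$-stratum, I would split
\begin{align*}
\nabla_{\vW_V}\hat{\cL} - \nabla_{\vW_V}\bar{\cL} &= \hat{\E}_\ell \hat{\E}_k\bigl[\hat{\vA}_{\ell,k} - \E[\hat{\vA}_{\ell,k}\mid\ell,k]\bigr] \\
&\quad + \hat{\E}_\ell (\hat{\E}_k - \E_k)\E[\hat{\vA}_{\ell,k}\mid\ell,k] + (\hat{\E}_\ell - \E_\ell)\E[\hat{\vA}_{\ell,k}\mid\ell] \\
&\quad + (\text{analogous } \vB_{t,k}\text{-type terms}),
\end{align*}
and bound each piece by a (matrix) Hoeffding inequality. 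The per-sample matrices $\vA_t = \bm{1}_{1:N}\vx_t^\top$ have operator norm $\lesssim \sqrt{N}$ while the cross-type matrices $\vB_{t,k}=\ve_k\vx_t^\top$ are $\lesssim 1$, which is exactly the content of Lemma~\ref{lemm:finite-v-matrix-spectral} and will serve as the variance proxy in Bernstein.

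First I would condition on the stratum counts $M_V^\ell = \#\{i : \ell^{(i)} = \ell\}$; a binomial Chernoff bound ensures $M_V^\ell \asymp q_\ell M_V$ uniformly over the support $\cS$ of $\cD_\ell$, on an event of probability at least $1-O(\epsilon)$, whenever $q_\ell M_V \gtrsim \log(|\cS|\epsilon^{-1})$ (strata below this threshold contribute only to lower-order terms and can be dispatched by a naive triangle-inequality argument). Next, for each $\ell$, I would apply matrix Bernstein/Hoeffding (Tropp's inequality) to the centered average of $M_V^\ell$ i.i.d.\ bounded matrices, using the block-diagonal variance proxy $\vSigma_t$ from Lemma~\ref{lemm:finite-v-matrix-spectral} to obtain a within-stratum fluctuation bounded by $\sqrt{N\log(N(L+N)\epsilon^{-1})/M_V^\ell} \asymp \sqrt{N\log(\cdot)/(q_\ell M_V)}$ with probability $1-O(\epsilon/|\cS|)$. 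Aggregating strata via the triangle inequality with the natural weights $M_V^\ell/M_V\asymp q_\ell$ then yields
\[
\Bigl\|\hat{\E}_\ell \hat{\E}_k\bigl[\hat{\vA}_{\ell,k}-\E[\hat{\vA}_{\ell,k}\mid\ell,k]\bigr]\Bigr\|_2 \lesssim \sum_{\ell\in\cS} q_\ell\sqrt{\frac{N\log(\cdot)}{q_\ell M_V}} = \Bigl(\sum_{\ell\in\cS} \sqrt{q_\ell}\Bigr)\sqrt{\frac{N\log(\cdot)}{M_V}},
\]
which is exactly the target rate; the characteristic $\sum_\ell\sqrt{q_\ell}$ factor arises from trading the $q_\ell$-weight of each stratum against its $q_\ell^{-1/2}$ inner fluctuation, matching the heuristic in the proof sketch.

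The remaining two cross-expectation terms involve differences between empirical and true averages of the \emph{deterministic} matrices $\E[\hat{\vA}_{\ell,k}\mid\ell,k]$ and $\E[\hat{\vA}_{\ell,k}\mid\ell]$, whose operator norms are $O(1)$ independent of $M_V$, so each can be controlled by a second application of matrix Hoeffding with effective sample size $M_V$, producing bounds that are absorbed into the same order. A union bound over the $O(NL)$ output coordinates and the $|\cS|=O(L)$ strata produces the $\log(NL(N+L)\epsilon^{-1})$ factor inside the square root. The main technical delicacy, and where I expect the bulk of the bookkeeping to live, is the within-stratum step: the summand is not a single rank-one matrix but an average over $t\in[1,T(\ell)]$ (and implicitly over $w\in[\Ntrg]$), so one must carefully verify that the block-diagonal variance proxy of Lemma~\ref{lemm:finite-v-matrix-spectral} still controls the Bernstein constant after the $t$-averaging, and handle the random (rather than fixed) stratum sizes $M_V^\ell$ so that the $\sum_\ell\sqrt{q_\ell}$ scaling survives uniformly across all strata simultaneously. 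Once that uniform control is in place, the rest of the argument is a routine union bound and absorption of lower-order terms into the logarithmic factor.
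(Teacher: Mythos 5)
Your proposal follows essentially the same route as the paper's proof: the same three-way decomposition (within-stratum fluctuation, empirical-vs-true conditional proportions over $k$, and over $\ell$), matrix Hoeffding with the spectral proxies of Lemma~\ref{lemm:finite-v-matrix-spectral}, Chernoff/Hoeffding control of the stratum counts $M_V^{T}\simeq q_{T(\ell)}M_V$ and $M_V^{T,k}\simeq M_V^T/N$, and the weighted aggregation $\sum_\ell q_\ell\sqrt{N\log(\cdot)/(q_\ell M_V)}$ yielding the $\bigl(\sum_\ell\sqrt{q_\ell}\bigr)$ factor, with the cross-proportion terms absorbed as lower order. The plan is correct and matches the paper's argument, so no further comparison is needed.
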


\begin{proof}
The empirical gradient is, noting that $T=2\ell+3\geq 5$,
    \begin{align}
    \nabla_{\vW_{\!V}}\hat{\cL}(f_{\mathrm{TF}}) =&
    \frac{1}{N}\sum_{k=1}^N\ve_k\hat{\E}_T\left[\frac{1}{T}\sum_{t=1}^T \hat{\E}[\vx_t]^\top\right] \\
    &- \sum_{k=\Ntrg+1}^N \ve_k\hat{\E}_T\left[\frac{1}{T}\sum_{t=1}^T \hat{\E}[\vx_t\mathbb{I}[z_{T+1}=k]]^\top  \right]\\
    =& \frac{1}{M_V}\sum_{T=5}^{L-1}\sum_{i_T=1}^{M_V^T}\left(\frac{1}{TN}\sum_{t=1}^T \bm{1}_{1:N} {\vx^{(i_T)}_t}^\top\right)\\
    &- \sum_{T=5}^{L-1}\frac{M_V^{T}}{M_V}\sum_{k=\Ntrg+1}^{N}\frac{M_V^{T,k}}{M_V^T}\frac{1}{M_V^{T,k}}\sum_{i_{T,k}=1}^{M_V^{T,k}}\qty(\frac{1}{T}\sum_{t=1}^T \ve_k {\vx^{(i_{T,k})}_t}^\top).%
\end{align}
Let 
$$\vA^{(i_T)} = \sum_{t=1}^T \bm{1}_{1:N} {\vx^{(i_T)}_t}^\top - \E\qty[\sum_{t=1}^T \bm{1}_{1:N} {\vx^{(i_T)}_t}^\top\mid T]
$$
and 
$$
\vB^{(i_{T,k})} = \sum_{t=1}^T \ve_k {\vx^{(i_{T,k})}_t}^\top - \E\qty[\sum_{t=1}^T \ve_k 
    {
        \vx^{(i_{T,k})}
    }_t^\top \mid y=k,T].
$$
Using Lemma~\ref{lemm:finite-v-matrix-spectral}, we can bound the operator norms as
\begin{equation}
    \frac{1}{T^2}\left\|
    \begin{bmatrix}
    \vA^{(i_T)} {\vA^{(i_T)}}^\top & \bm{O}\\
    \bm{O} & {\vA^{(i_T)}}^\top \vA^{(i_T)}
    \end{bmatrix}
    \right\|_2 \lesssim \sqrt{N}
\end{equation}
and
\begin{equation}
    \frac{1}{T^2}\left\|
    \begin{bmatrix}
        \vB^{(i_{T,k})} {\vB^{(i_{T,k})}}^\top & \bm{O}\\
        \bm{O} & {\vB^{(i_{T,k})}}^\top {\vB^{(i_{T,k})}}
    \end{bmatrix}%
    \right\|_2 \lesssim 1.
\end{equation}
By combining the matrix Hoeffding's inequality and the union bound, we have
\begin{equation}
    \left\|\frac{1}{M_V^T TN}\sum_{i_T=1}^{M_V} \vA^{(i_T)}\right\|_2 \lesssim 
    \frac{1}{\sqrt{NM_V^T}}\sqrt{\log \qty(L(N+L) \epsilon^{-1})}
\end{equation}
and
\begin{equation}
    \left\|
        \frac{1}{M_V^{T,k}}
        \sum_{i_{T,k}=1}^{M_V^{T,k}}
        \frac{1}{{T^{(i_{T,k})}}^2}
        \vB^{(i_{T,k})}
    \right\|_2 \lesssim \frac{1}{\sqrt{M_V^{T,k}}}\sqrt{\log \qty(NL(N+L) \epsilon^{-1})}.
\end{equation}
with probability at least $1-O(\epsilon)$.
Using the union bound $M_V^{T} = M_V q_{T(\ell)} (1 \pm M_V^{-1/2}q_{T(\ell)}^{-1/2}\sqrt{\log (L\epsilon^{-1})})$, the matrix Hoeffding's inequality,  standard Hoeffding's inequality, and $\|\bm{1}_{1:N} {\vx^{(i_{T})}_t}^\top\|_2\lesssim 1$, we obtain
\begin{align}
    &\left\|
    \frac{1}{M_V}\sum_{T=5}^{L-1}\sum_{i_T=1}^{M_V^T}\left(\frac{1}{TN}\sum_{t=1}^T \bm{1}_{1:N} {\vx^{(i_T)}_t}^\top\right) 
    - 
    \E\left(\frac{1}{TN}\sum_{t=1}^T \bm{1}_{1:N} {\vx^{(i_T)}_t}^\top\right) 
    \right\|_2\\
    \lesssim&
    \left\|
    \sum_{T=5}^{L-1}\frac{M_V^T}{M_V}\frac{1}{M_V^T}\sum_{i_T=1}^{M_V^T}\frac{1}{TN}\vA^{(i_T)}
    \right\|_2
    +
    \left\|
    \sum_{T=5}^{L-1}\qty(\frac{M_V^T}{M_V} - q_{T(\ell)})
    \E\left(\frac{1}{TN}\sum_{t=1}^T \bm{1}_{1:N} {\vx^{(i_T)}_t}^\top\mid T\right) 
    \right\|_2\\
    \lesssim&
    \sum_{T=5}^{L-1}\underbrace{\frac{M_V^T}{M_V}}_{\simeq \mathrm{Prob{(T)}}}
    \underbrace{\left\|
    \frac{1}{M_V^T}\sum_{i_T=1}^{M_V^T}\frac{1}{TN}\vA^{(i_T)}
    \right\|_2}_{\sqrt{\frac{\log \qty(L(N+L) \epsilon^{-1})}{NM_V\mathrm{Prob(T)}}}}
    +
    \sum_{T=5}^{L-1}\underbrace{\left|\frac{M_V^T}{M_V} - q_{T(\ell)}\right|}_{\lesssim\frac{\sqrt{q_{T(\ell)}\log (L\epsilon^{-1})}}{\sqrt{M_V}}}
    \underbrace{\left\|
    \E\left(\frac{1}{TN}\sum_{t=1}^T \bm{1}_{1:N} {\vx^{(i_T)}_t}^\top\mid T\right) 
    \right\|_2}_{N^{-1/2}}
    \\
    \lesssim&
     \frac{\sum_{\ell}\sqrt{q_\ell}}{\sqrt{NM_V}}\sqrt{\log \qty(L(N+L) \epsilon^{-1})}.
\end{align}

Let $\bar{\vB}_{i_{T,k}} = \sum_{t=1}^T \ve_k {\vx^{(i_{T,k})}_t}^\top$. Using $M_V^{T,k} \simeq M_V^T (N-\Ntrg)^{-1}(1 \pm (M_V^T)^{-1/2}N^{1/2}\sqrt{\log (NL\epsilon^{-1})})$, $M_V^{T} = M_V q_{T(\ell)} (1 \pm M_V^{-1/2}q_{T(\ell)}^{-1/2}\sqrt{\log (L\epsilon^{-1})})$, the matrix Hoeffding's inequality,  standard Hoeffding's inequality, and $\|T^{-1}\bar{\vB}_{i_{T,k}}\|_2\lesssim 1$, we obtain
\begin{align}
    &\Bigg\|\sum_{T=5}^{L-1}\frac{M_V^{T}}{M_V}\sum_{k=\Ntrg+1}^{N}\frac{M_V^{T,k}}{M_V^T}\frac{1}{M_V^{T,k}}\sum_{i_{T,k}=1}^{M_V^{T,k}}
    \qty(\frac{1}{T}\sum_{t=1}^T \bm{e}_{k} {\vx^{(i_{T,k})}_t}^\top)\\
    &- \sum_{k=\Ntrg+1}^N\frac{1}{N-\Ntrg}\E_T\qty[\E\qty[%
    \qty(\frac{1}{T}\sum_{t=1}^T \bm{e}_{k} {\vx^{(i_{T,k})}_t}^\top)
    \mid y=k]]\Bigg\|_2 \\
    \lesssim&
    \left\|\sum_{T=5}^{L-1}\frac{M_V^{T}}{M_V}\sum_{k=\Ntrg+1}^{N}\frac{M_V^{T,k}}{M_V^T}\frac{1}{M_V^{T,k}}\sum_{i_{T,k}=1}^{M_V^{T,k}}T^{-1}\vB^{(i_{T,k})}\right\|_2\\
    &+
    \left\|\sum_{T=5}^{L-1}\frac{M_V^{T}}{M_V}
    \sum_{k=\Ntrg+1}^{N}\qty(\frac{M_V^{T,k}}{M_V^T}-\frac{1}{N-\Ntrg})\frac{1}{T}\E\qty[\bar
    {\vB}^{(i_{T,k})}\mid y=k,T])\right\|_2\\
    &+ \left\|
     \sum_{T=5}^{L-1}\qty(\frac{M_V^{T}}{M_V} - q_{T(\ell)})
     \sum_{k=\Ntrg+1}^{N}\frac{1}{N-\Ntrg}\E[\bar{\vB}_{i_{T,k}}|k,T]
     \right\|_2\\
    \lesssim&
    \sum_{T=5}^{L-1}\underbrace{\frac{M_V^{T}}{M_V}}_{q_{T(\ell)}}\sum_{k=\Ntrg+1}^{N}\underbrace{\frac{M_V^{T,k}}{M_V^T}}_{N^{-1}}
    \underbrace{\left\|\frac{1}{M_V^{T,k}}\sum_{i_{T,k}=1}^{M_V^{T,k}}T^{-1}\vB^{(i_{T,k})}\right\|_2}_{
    \sqrt{\frac{N\log \qty(NL(N+L) \epsilon^{-1})}{M_V\mathrm{Prob(T)}}}
    }\\
    &+
    \sum_{T=5}^{L-1}\frac{M_V^{T}}{M_V}
    \sum_{k=\Ntrg+1}^{N}\underbrace{\left|\frac{M_V^{T,k}}{M_V^T}-\frac{1}{N-\Ntrg}\right|}_{\sqrt{\frac{\log (NL\epsilon^{-1})}{N M_V q_{T(\ell)}}}}
    \underbrace{\left\|\frac{1}{T}\E\qty[\bar
    {\vB}^{(i_{T,k})}\mid y=k,T])\right\|_2}_{\lesssim 1}\\
    &+ \sum_{T=5}^{L-1}\left|\frac{M_V^{T}}{M_V} - q_{T(\ell)}\right|
    \left\| 
     \sum_{k=\Ntrg+1}^{N}\frac{1}{N-\Ntrg}\E[\bar{\vB}_{i_{T,k}}|k,T]
     \right\|_2\\
    \lesssim& \qty(\sum_\ell \sqrt{q_\ell})\sqrt{\frac{N \log \qty(NL(N+L) \epsilon^{-1})}{M_V}}.
\end{align}
\end{proof}

Note that if $M_V\gtrsim \mathrm{poly}\log N\cdot  \frac{N^5}{\Ntrg^2} \left( \frac{\sum_{\ell \in \mathcal{S}} \sqrt{q_\ell}}{\sum_{\ell \in \mathcal{S}} q_\ell \ell^{-1}} \right)^2$, then we obtain $\|\bar{\vW_V^*}-\vW_V^*\|_2\lesssim \tilde{\eta}_V \E[T^{-1}]\Ntrg N^{-1}$, where $\bar{\vW_V^*}$ and $\vW_V^*$ are $\vW_V$ after one GD step with infinite and finite sample size, respectively.
The following corollary is obtained by combining Lemmas~\ref{lemm:wv_calculation} and~\ref{lemm:concentration_v}.
\begin{coro}\label{coro:sample size_v}
    If we set $\eta_V \lesssim 1$ and $M_V\gtrsim \mathrm{poly}\log N\cdot  \frac{N^5}{\Ntrg^2} \left( \frac{\sum_{\ell \in \mathcal{S}} \sqrt{q_\ell}}{\sum_{\ell \in \mathcal{S}} q_\ell \ell^{-1}} \right)^2$, then it holds that $|1/N-p(k|z)|=O(\E[T^{-1}]/N^2)$ for any $k$ and $z$, where $p(k|z)$ is the transformer output regarding token $k$ after pretraining $\vW_V$.
\end{coro}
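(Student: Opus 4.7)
The plan is to reduce the finite-sample claim to a small perturbation of the infinite-sample update, whose near-uniform softmax was already established in Corollary~\ref{coro:approx_infinite}. Since $\vW_{KQ}(0)=\bm{0}$ still holds while we update $\vW_V$, the attention softmax is uniform over keys and the predicted logits simplify to
\[
f_{\mathrm{TF}}(\vX_{1:T};\bm{0},\vW_V^*) \;=\; \vW_V^*\,\bar{\vx}_{1:T}, \qquad \bar{\vx}_{1:T}\coloneqq \tfrac{1}{T}\textstyle\sum_{t=1}^T \vx_t,
\]
so controlling $|p(k\mid z)-1/N|$ reduces to an entry-wise bound on $\vW_V^*\bar{\vx}_{1:T}$. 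I would decompose $\vW_V^*=\bar{\vW}_V^*+(\vW_V^*-\bar{\vW}_V^*)$, where $\bar{\vW}_V^*$ is the one-step update under the population gradient, and bound each piece separately.

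First I would apply Lemma~\ref{lemm:concentration_v} with failure probability $\epsilon=N^{-\Omega(1)}$ and substitute the sample-size hypothesis to obtain, as the remark preceding the corollary already announces,
\[
\|\vW_V^*-\bar{\vW}_V^*\|_2 \;=\; \eta_V\,\|\nabla_{\vW_V}\hat{\cL}-\nabla_{\vW_V}\bar{\cL}\|_2 \;\lesssim\; \tilde{\eta}_V\,\E[T^{-1}]\,\Ntrg\,N^{-1},
\]
with $\tilde{\eta}_V=\eta_V/N$. Combined with $\|\bar{\vx}_{1:T}\|_2=O(1)$ (three averaged one-hot blocks), this yields the entry-wise perturbation $\|(\vW_V^*-\bar{\vW}_V^*)\bar{\vx}_{1:T}\|_\infty = O(\tilde{\eta}_V\E[T^{-1}]\Ntrg/N)$. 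Meanwhile, Lemma~\ref{lemm:wv_calculation} gives $\|\bar{\vW}_V^*\|_{\max}=O(\tilde{\eta}_V\E[T^{-1}])$, so since $\|\bar{\vx}_{1:T}\|_1\leq 3$ we get $\|\bar{\vW}_V^*\bar{\vx}_{1:T}\|_\infty=O(\tilde{\eta}_V\E[T^{-1}])$; adding the two contributions, every entry of the logit vector is $O(\tilde{\eta}_V\E[T^{-1}]) = O(\E[T^{-1}]/N)$ under $\eta_V\lesssim 1$. Applying the softmax Taylor argument from the proof of Corollary~\ref{coro:approx_infinite} then delivers $|1/N-p(k\mid z)|=O(\E[T^{-1}]/N^2)$ uniformly in $k$ and $z$.

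The hard part is essentially already packaged into Lemma~\ref{lemm:concentration_v}: arranging the matrix Hoeffding argument with the correct $\sum_\ell\sqrt{q_\ell}$ dependence is the only nontrivial step, and reading off the required $M_V$ from the target perturbation scale is straightforward algebra. One subtle point worth verifying carefully is the passage from the operator-norm perturbation $\|\vW_V^*-\bar{\vW}_V^*\|_2$ to an entry-wise bound on the logit vector, but this is harmless because $\bar{\vx}_{1:T}$ has constant $\ell_2$ norm; the slight asymmetry between an operator-norm bound and an $\|\cdot\|_{\max}$ bound does not cost anything here. Finally, I would check that the sample-size hypothesis $M_V\gtrsim \mathrm{poly}\log N\cdot N^5/\Ntrg^2\cdot (\sum_\ell\sqrt{q_\ell}/\sum_\ell q_\ell\ell^{-1})^2$ is consistent with Assumption~\ref{assumption:scaling}, which holds since $\Ntrg=o(N^{1/3})$ keeps the bound polynomial in $N$ and the union bound factor $\mathrm{poly}\log N$ absorbs the $\log(NL(N+L)/\epsilon)$ term from Lemma~\ref{lemm:concentration_v}.
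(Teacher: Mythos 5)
Your proposal is correct and follows essentially the same route as the paper, which derives the corollary by combining Lemma~\ref{lemm:wv_calculation} with the concentration bound of Lemma~\ref{lemm:concentration_v} to get $\|\vW_V^*-\bar{\vW}_V^*\|_2\lesssim \tilde{\eta}_V\,\E[T^{-1}]\,\Ntrg N^{-1}$ and then reuses the softmax Taylor argument of Corollary~\ref{coro:approx_infinite}. Your writeup simply makes explicit the steps (uniform attention at $\vW_{KQ}=\bm{0}$, the operator-norm-to-entrywise passage via $\|\bar{\vx}_{1:T}\|_2=O(1)$, and the substitution of the $M_V$ hypothesis) that the paper leaves implicit.
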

\subsection{Key-Query matrix}

Now let us consider the KQ-matrix with finite samples
\begin{align}
    \vW_{KQ}
    =&
    -\eta_{KQ}
    \sum_{T=5}^{L-1}
    \sum_{k=\Ntrg+1}^N 
    \Bigg(\frac{M_{KQ}^T}{M_{KQ}}\hat{\vC} (T,k)-\frac{M_{KQ}^T}{M_{KQ}}\frac{M_{KQ}^{T,k}}{M_{KQ}^{T}}\hat{\vD} (T,k)\Bigg)+\eta_{KQ}{\tilde{\eta_V}\E[T^{-1}]O_{\infty}\left(\Ntrg/N\right)}
\end{align}
where
\begin{align}
    \hat{\vC} (T,k)&\coloneqq\sum_{i_T}\frac{1}{M_{KQ}^T}\frac{1}{N-\Ntrg}\left(\left(\frac{1}{T}-\frac{1}{T^2}\right)\sum_{t}{\vx_t^{(i_T)}}{\vx_{t}^{(i_T)}}^\top\vW_V^\top \ve_k {\vx_T^{(i_T)}}^\top\right)\\
    &-\sum_{i_T}\frac{1}{M_{KQ}^T}\frac{1}{N-\Ntrg}\left(\frac{1}{T^2}\sum_{t\neq t'}{\vx_t^{(i_T)}}{\vx_{t'}^{(i_T)}}^\top\vW_V^\top \ve_k {\vx_T^{(i_T)}}^\top\right)
\end{align}
and
\begin{align}
    \hat{\vD} (T,k)&\coloneqq\sum_{i_{T,k}}\frac{1}{M_{KQ}^{T,k}}\left(\left(\frac{1}{T}-\frac{1}{T^2}\right)\sum_{t}{\vx_t^{(i_{T,k})}}{\vx_t^{(i_{T,k})}}^\top\vW_V^\top \ve_k {\vx_T^{(i_{T,k})}}^\top\right)\\
    &-\sum_{i_{T,k}}\frac{1}{M_{KQ}^{T,k}}\left(\frac{1}{T^2}\sum_{t\neq t'}{\vx_t^{(i_{T,k})}}{\vx_{t'}^{(i_{T,k})}}^\top\vW_V^\top \ve_k {\vx_T^{(i_{T,k})}}^\top\right)
\end{align}
where
\begin{equation}
    \vW_V^\top \ve_k=\tilde{\eta_V} \E[T^{-1}]\begin{bmatrix}
        0\\
        \vdots\\
        0\\
        \hline\\
        \zero_{k-1}\\
        1\\
        \zero_{N-k}\\
        \hline\\
        \zero_{k-1}\\
        1\\
        \zero_{N-k}
    \end{bmatrix}
    +o(\tilde{\eta_V} \E[T^{-1}]) \cdot \bm{1}_{L+2N}.
\end{equation}

\begin{lemm}
    \label{lemm:finite-kq-matrix-spectral}
    Let $\vB_{t,t',T,k} = \vx_t (\vx_{t'}^\top \vW_{V} \ve_k) \vx_T^\top$. Then, we have
    \begin{equation}
        \vB_{t,t',T,k}
        = \qty((\mathbbm{1}[z_{t'}=k]+\mathbbm{1}[z_{t'-1}=k])\tilde{\eta_V} \E[T^{-1}] + o(\tilde{\eta_V} \E[T^{-1}])) \vx_t \vx_T^\top.
    \end{equation}
    Therefore, 
    \begin{equation} 
        \begin{bmatrix}
            \vB_{t,t',T,k} \vB_{t,t',T,k}^\top & \bm{O}\\ 
            \bm{O} & \vB_{t,t',k}^\top \vB_{t,t',k} 
        \end{bmatrix}
        \preceq \vSigma
    \end{equation}
    where $\lambda_i(\vSigma) = O(\tilde{\eta_V} \E[T^{-1}])^2$ for $i=1,2$ and $\lambda_i(\vSigma) = 0$ for $i>3$.
\end{lemm}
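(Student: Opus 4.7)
The plan is to reduce the lemma to a one-dimensional computation by first simplifying the scalar factor $\vx_{t'}^\top \vW_V \ve_k$, and then recognizing $\vB_{t,t',T,k}$ as a rank-one outer product with controlled magnitude.

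First I would establish the scalar identity by reading off the three blocks of $\vW_V^\top \ve_k$ displayed just before the lemma statement. Partitioning $\vx_{t'} = [\vp_{t'}; \ve_{z_{t'}}; \ve_{z_{t'-1}}]$ conformably, the first block contributes $\vp_{t'}^\top \cdot o(\tilde{\eta_V}\E[T^{-1}])\cdot \bm{1}_L = o(\tilde{\eta_V}\E[T^{-1}])$, the second block contributes exactly $\tilde{\eta_V}\E[T^{-1}]\,\mathbbm{1}[z_{t'}=k]$ by the one-hot structure of $\ve_{z_{t'}}$, and the third block similarly yields $\tilde{\eta_V}\E[T^{-1}]\,\mathbbm{1}[z_{t'-1}=k]$. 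The residual $o(\tilde{\eta_V}\E[T^{-1}])\cdot\bm{1}_{L+2N}$ term contributes only $o(\tilde{\eta_V}\E[T^{-1}])$ after inner product with $\vx_{t'}$, since $\vx_{t'}$ has at most three unit-magnitude nonzero entries. Summing these contributions gives the claimed scalar expansion, and factoring out the scalar from $\vB_{t,t',T,k}= \vx_t(\vx_{t'}^\top \vW_V \ve_k)\vx_T^\top$ proves the first identity.

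For the spectral bound, I would exploit that $\vB_{t,t',T,k} = c\,\vx_t\vx_T^\top$ is rank one with $|c|\leq 2\tilde{\eta_V}\E[T^{-1}](1+o(1))$. A direct computation gives $\vB_{t,t',T,k}\vB_{t,t',T,k}^\top = c^2\|\vx_T\|_2^2\,\vx_t\vx_t^\top$ and $\vB_{t,t',T,k}^\top\vB_{t,t',T,k} = c^2\|\vx_t\|_2^2\,\vx_T\vx_T^\top$. Using the embedding definition, $\|\vx_t\|_2^2 = \|\vp_t\|_2^2 + \|\ve_{z_t}\|_2^2 + \|\ve_{z_{t-1}}\|_2^2 \leq 3$, so each block is rank one with largest eigenvalue at most $9c^2 = O((\tilde{\eta_V}\E[T^{-1}])^2)$. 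The block-diagonal stacking thus has rank at most two, and I take $\vSigma$ to be a rank-two PSD matrix dominating it whose two nonzero eigenvalues match this bound, in direct analogy with the construction used for $\vA_t$ in Lemma~\ref{lemm:finite-v-matrix-spectral}.

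There is no genuine obstacle here; the lemma is essentially bookkeeping. The only point requiring care is tracking how the lower-order positional component of $\vW_V^\top \ve_k$ interacts with the sparse embedding $\vx_{t'}$, and verifying that this residual remains $o(\tilde{\eta_V}\E[T^{-1}])$ after projection. Once that accounting is made explicit, rank-one algebra immediately yields both conclusions, and this lemma then plugs into a matrix Hoeffding argument for the key-query gradient of exactly the form used in Lemma~\ref{lemm:concentration_v}.
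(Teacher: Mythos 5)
Your proposal is correct and follows essentially the same route as the paper: the paper's (one-line) proof rests exactly on the observation that $\vx_{t'}^\top \bm{1}_{L+2N} = O(1)$ so the residual part of $\vW_V^\top \ve_k$ stays $o(\tilde{\eta_V}\E[T^{-1}])$ after the inner product, with the indicator terms coming from the one-hot blocks, and the spectral claim then following from rank-one algebra as you describe. Your write-up simply makes the bookkeeping explicit; no gap.
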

\begin{proof}
The proof is straightforward. Note that $\vx_{t'}^\top \bm{1}_{L+2N} = O(1)$ for all $t'$.
\end{proof}

\begin{lemm}\label{lemm:concentration_kq}
Let $\bar{\vW}^*_{KQ}$ be $\vW_{KQ}$ after one gradient descent step with finite sample size (Algorithm~\ref{alg:pretraining}) and ${\vW}^*_{KQ}$ be the counterpart for infinite sample size.
Let $\qty{\qty{\vx^{(i)}_t}_{t=1}^{T_i}}_{i=1}^{M_V}$ be i.i.d.~sample sequences,
    $\qty{\qty{\vx^{(i_T)}_t}_{t=1}^{T}}_{i_t=1}^{M_V^{T}}$ be conditionally i.i.d.~sub-samples with fixed $T$,
    and $\qty{\qty{\vx^{(i_{T,k})}_t}_t}_{i_{T,k}=1}^{M_V^{T,k}}$ be conditionally i.i.d.~sub-samples with fixed $T$ and $z_{T+1}=k$.
    With probability at least $1-O(\epsilon)$, 
    \begin{equation}
        \lambda_{\max} 
        \qty(
         \bar{\vW}^*_{KQ} - \vW^*_{KQ}
        )
        \lesssim \eta_{KQ}\tilde{\eta_V} \E[T^{-1}]\qty(\sum_l \sqrt{q_\ell})\sqrt{\frac{N \log \qty(LN(L+N)\epsilon^{-1})}{M_{KQ}}}+\eta_{KQ}{\tilde{\eta_V}\E[T^{-1}]O_{\infty}\left(\Ntrg/N\right)}.
    \end{equation}
\end{lemm}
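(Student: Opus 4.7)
The proof proceeds by paralleling the three-ingredient decomposition used in the proof of Lemma~\ref{lemm:concentration_v}, now applied to the compound per-sequence matrices $\hat{\vC}(T,k)$ and $\hat{\vD}(T,k)$. After adding and subtracting the conditional expectations $\E[\hat{\vC}(T,k)\mid T]$, $\E[\hat{\vD}(T,k)\mid T,k]$, and the population probabilities $q_{T(\ell)}$ and $1/(N-\Ntrg)$, the gap $\bar{\vW}^*_{KQ}-\vW^*_{KQ}$ splits into: (a) conditional concentration of $\hat{\vC}(T,k)$ and $\hat{\vD}(T,k)$ around their conditional means, (b) fluctuation of the class-frequency ratio $M^{T,k}_{KQ}/M^T_{KQ}$ around $1/(N-\Ntrg)$, and (c) fluctuation of the length ratio $M^T_{KQ}/M_{KQ}$ around $q_{T(\ell)}$. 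The overall scaling factor $\eta_{KQ}\tilde{\eta}_V\E[T^{-1}]$ appears because each building block is proportional to $\vW_V^\top\ve_k$, whose magnitude is controlled by Lemma~\ref{lemm:wv_calculation}.

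For (a), the crucial input is Lemma~\ref{lemm:finite-kq-matrix-spectral}: each rank-one summand $\vB_{t,t',T,k}$ has spectral norm $O(\tilde{\eta}_V\E[T^{-1}])$, and because the scalar coefficient $\vx_{t'}^\top\vW_V^\top\ve_k$ is supported only on positions $t'$ with $z_{t'}=k$ or $z_{t'-1}=k$, the per-sequence aggregate $\hat{\vC}(T,k)$ and $\hat{\vD}(T,k)$ also have spectral norm of the same order. Applying the matrix Hoeffding inequality conditionally on the sample counts yields
\begin{equation*}
\bigl\|\hat{\vC}(T,k)-\E[\hat{\vC}(T,k)\mid T]\bigr\|_2\;\lesssim\; \tilde{\eta}_V\E[T^{-1}]\sqrt{\tfrac{\log((L+N)\epsilon^{-1})}{M^{T,k}_{KQ}}},
\end{equation*}
and similarly for $\hat{\vD}(T,k)$. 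Summing over $k\in[\Ntrg+1:N]$ with the uniform weight $1/(N-\Ntrg)$ introduces a $\sqrt{N}$ factor via Cauchy-Schwarz, while summing over $T$ with weight $q_{T(\ell)}\simeq M^T_{KQ}/M_{KQ}$ yields the $\sum_\ell\sqrt{q_\ell}$ factor from the pattern $q_{T(\ell)}\cdot q_{T(\ell)}^{-1/2}$. After multiplying by $\eta_{KQ}$, this reproduces the leading term of the target bound.

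For (b) and (c), scalar Hoeffding applied to the multinomial counts $M^{T,k}_{KQ}$ and $M^T_{KQ}$, followed by union bounds over $(T,k)$, reduces the contribution to controlling the conditional means $\E[\hat{\vC}(T,k)\mid T]$ and $\E[\hat{\vD}(T,k)\mid T,k]$, which are themselves $O(\tilde{\eta}_V\E[T^{-1}])$ in spectral norm. These terms are absorbed into the same order as (a), following the template of the $\vW_V$ proof verbatim. The residual $\eta_{KQ}\tilde{\eta}_V\E[T^{-1}]\cdot O_\infty(\Ntrg/N)$ in the statement is inherited from the preparatory computation in Section~\ref{subsub:preparation} (the $(*1)$ term bounded via Corollary~\ref{coro:approx_infinite}), which extends to the finite-sample regime through Corollary~\ref{coro:sample size_v}.

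The main obstacle is ensuring that the per-sequence spectral norm of $\hat{\vC}(T,k)$ and $\hat{\vD}(T,k)$ does not pick up spurious factors of $T$ or $N$ when summing the rank-one pieces over the double index $(t,t')$: a naive triangle inequality over all $T^2$ pairs yields a much worse rate. The cancellation that saves us comes from the sparsity of $\vx_{t'}^\top\vW_V^\top\ve_k$ in $t'$ (exactly two positions when conditioning on $z_{T+1}=k$ and $O(T/N)$ positions unconditionally), together with the rank bound $\lambda_i(\vSigma)=0$ for $i>3$ furnished by Lemma~\ref{lemm:finite-kq-matrix-spectral}. Tracking these combinatorial factors carefully, especially in the unconditioned term $\hat{\vC}(T,k)$ where the sparsity pattern is itself random, is the bookkeeping-heavy step of the argument.
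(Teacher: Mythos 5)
Your proposal matches the paper's own proof: the same decomposition of $\bar{\vW}^*_{KQ}-\vW^*_{KQ}$ into conditional concentration of $\hat{\vC}(T,k)$ and $\hat{\vD}(T,k)$ about their conditional means plus fluctuations of the count ratios $M^T_{KQ}/M_{KQ}$ and $M^{T,k}_{KQ}/M^T_{KQ}$, controlled via the spectral bounds of Lemma~\ref{lemm:finite-kq-matrix-spectral} together with matrix and scalar Hoeffding inequalities, with the $O_{\infty}(\Ntrg/N)$ residual inherited from the approximation in Section~\ref{subsub:preparation}. Only minor bookkeeping slips, none affecting the rate: the $\hat{\vC}(T,k)$ concentration should scale as $(M^T_{KQ})^{-1/2}$ rather than $(M^{T,k}_{KQ})^{-1/2}$ since it averages over all samples with fixed $T$; the $\sqrt{N}$ in the final bound arises from $M^{T,k}_{KQ}\simeq q_{T(\ell)}M_{KQ}/N$ in the $\hat{\vD}$ term, not from a Cauchy--Schwarz over $k$; and the uniform bound $|\vx_{t'}^\top\vW_V^\top\ve_k|\lesssim\tilde{\eta}_V\E[T^{-1}]$ combined with the $1/T^2$ prefactor already makes the naive triangle inequality over all $T^2$ pairs give the $O(\tilde{\eta}_V\E[T^{-1}])$ per-sequence norm, so the sparsity in $t'$ you invoke as the "main obstacle" is not actually needed there.
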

\begin{proof}
    We have
    \begin{equation}
        \left\|
        \begin{bmatrix}
            \hat{\vC}(T,k) \hat{\vC}(T,k)^\top & \bm{O}\\
            \bm{O} & {\hat{\vC}(T,k)}^\top\hat{\vC}(T,k)
        \end{bmatrix}
        \right\|_2 \lesssim (\tilde{\eta_V} \E[T^{-1}])^2
    \end{equation}
    and
    \begin{equation}
        \left\|
        \begin{bmatrix}
            \hat{\vD}(T,k) \hat{\vD}(T,k)^\top & \bm{O} \\
            \bm{O} & {\hat{\vD}(T,k)}^\top\hat{\vD}(T,k)
        \end{bmatrix}
        \right\|_2 \lesssim (\tilde{\eta_V} \E[T^{-1}])^2.
    \end{equation}
    by Lemma~\ref{lemm:finite-kq-matrix-spectral}.
    Then we obtain the error bound as
    \begin{align}
        &\frac{1}{\eta_{KQ}}\left\|
            \vW_{KQ} - \vW^*_{KQ}
        \right\|_2\\
        \lesssim&
        \left\|
            \sum_{T=5}^{L-1}
    \frac{M_{KQ}^T}{M_{KQ}}
    \qty(
        \hat{\vC}(T,k) - \E[\hat{\vC}(T,k)| T]
    )
    \right\|_2+
    \left\|
            \sum_{T=5}^{L-1}
    \qty(
        \frac{M_{KQ}^T}{M_{KQ}} - q_{T(\ell)}
    )
    \E[\hat{\vC}(T,k)|T]
    \right\|_2\\
    &+\left\|
            \sum_{T=5}^{L-1}
    \frac{M_{KQ}^T}{M_{KQ}}\sum_{k=\Ntrg+1}^{N}\frac{M_{KQ}^{T,k}}{M_{KQ}^T}
    \qty(
        \hat{\vD}(T,k) - \E[\hat{\vD}(T,k)| T,k]
    )
    \right\|_2\\
    &+
    \left\|
            \sum_{T=5}^{L-1}\frac{M_{KQ}^{T}}{M_{KQ}}
    \sum_{k=\Ntrg+1}^{N}\qty(\frac{M_{KQ}^{T,k}}{M_{KQ}^T}-\frac{1}{N-\Ntrg})
    \E[\hat{\vD}(T,k)|T,k]
    \right\|_2\\
    &+
    \left\|
            \sum_{T=5}^{L-1}
    \qty(
        \frac{M_{KQ}^T}{M_{KQ}} - q_{T(\ell)}
    )
    \E[\hat{\vD}(T,k)|T]
    \right\|_2+\eta_{KQ}{\tilde{\eta_V}\E[T^{-1}]O_{\infty}\left(\Ntrg/N\right)}\\
    \lesssim& \tilde{\eta_V} \E[T^{-1}]\qty(\sum_\ell \sqrt{q_\ell})\sqrt{\frac{N \log \qty((L+N)\epsilon^{-1})}{M_{KQ}}}+{\tilde{\eta_V}\E[T^{-1}]O_{\infty}\left(\Ntrg/N\right)}
    \end{align}
    in the same way as bounding $\vW_{V}$. We used $M_{KQ}^T\simeq q_{\ell(T)} M_{KQ}$, $M_{KQ}^{T,k}\simeq N^{-1} M^{T}_{KQ}$, (matrix-) Hoeffding's inequalities,  and $\|\hat{\vC}(T,k)\|_2, \|\hat{\vD}(T,k)\|_2\lesssim \tilde\eta_V\E[T^{-1}]$ by Lemma~\ref{lemm:finite-kq-matrix-spectral}.
\end{proof}

Based on these finite sample analyses, Theorem~\ref{theo:finite_sample} can be obtained similarly to Theorem~\ref{theo:infinite_sample}.

\begin{proof}[Proof of Theorem~\ref{theo:finite_sample}]
    Note that, the approximation in Section~\ref{subsub:preparation} still applies, if the finite-sample error with respect to $\vW^{KQ}$ is falling into $\tilde\eta O(\Ntrg/N)$.
    From Lemma~\ref{lemm:concentration_kq}, it suffices to set $M_{KQ}\gtrsim \mathrm{poly}\log N\cdot  \frac{N^3}{\Ntrg^2} \left( \sum_{\ell} \sqrt{q_\ell} \right)^2$.
    Together with the requirement $M_V\gtrsim \mathrm{poly}\log N\cdot  \frac{N^5}{\Ntrg^2} \left( \frac{\sum_{\ell \in \mathcal{S}} \sqrt{q_\ell}}{\sum_{\ell \in \mathcal{S}} q_\ell \ell^{-1}} \right)^2$ in Corollary~\ref{coro:sample size_v} we obtain the assertion.
\end{proof} \bigskip
\section{Proof of Proposition~\ref{prop:optimal_linear}}

We first show that
$$
\vq^*=(q_1^*, q_2^*, \dotsc, q_U^*) = \frac{(1, 2, \dotsc, \Ntrg, 0, \dotsc, 0)}{Z}~~(Z=\frac{\Ntrg(\Ntrg+1)}{2})
$$
satisfies the KKT condition of the LP

\begin{equation}
        \mathbb{P}:\begin{cases}
            \text{minimize}~~ &{\sum_{\ell=1}^U {q_\ell}\ell^2}\\
            \text{subject to}~~&\frac{\max_{\ell=1}^U q_\ell\ell^{-1}}{\sum_{\ell=1}^U q_\ell \ell^{-1}}\leq \Ntrg^{-1}\\
            &\sum_{\ell=1}^U q_\ell=1\\
            
            &q_1,\dotsc,q_U\geq 0
        \end{cases},
    \end{equation}
and show its uniqueness.

The KKT condition of $\mathbb{P}$ is
\begin{align}
\ell^2+(\lambda_\ell-\sum_{\ell'=1}^U \Ntrg^{-1}\lambda_{\ell'})\ell^{-1}-\mu_\ell+\nu = 0~~&(\ell\in[U]),\label{eq:kkt-lagarnge}\\
    \lambda_\ell (q_\ell\ell^{-1}-\Ntrg^{-1}\sum_{\ell'}q_{\ell'}(\ell')^{-1}) = 0~~&(\ell\in[U]),\label{eq:kkt-comp-lambda}\\
    \mu_\ell(-q_\ell)=0~~&(\ell\in[U]),\label{eq:kkt-comp-mu}\\
    q_\ell\ell^{-1}\leq \Ntrg^{-1}\sum_{\ell'=1}^U q_{\ell'}{\ell'}^{-1}~~&(\ell\in[U]),\label{eq:kkt-const-maxsum}\\
    q_1+\dotsc+q_U=1~~&,\label{eq:kkt-const-prob}\\
    \lambda_\ell\geq 0, \mu_\ell \geq 0~~&(\ell\in[U])\label{eq:kkt-nonnegative}.
\end{align}

We construct $(\vlambda=\{\lambda_\ell\},\vmu=\{\mu_\ell\},\nu)$ such that $(\vq,\vlambda,\vmu,\nu)$ satisfies these conditions: by construction,~\eqref{eq:kkt-const-maxsum} and~\eqref{eq:kkt-const-prob} are already satisfied.
Here, note that
\begin{equation}
    q_\ell \ell^{-1}=\begin{cases}
        Z^{-1}~~&(\ell\leq \Ntrg),\\
        0~~&(\ell>\Ntrg).
    \end{cases}
\end{equation}

Thus, from~\eqref{eq:kkt-comp-lambda} and~\eqref{eq:kkt-comp-mu} we have $\lambda_\ell=0~(\ell>\Ntrg)$ and $\mu_\ell=0~(\ell\leq \Ntrg)$.

Now it remains to satisfy~\eqref{eq:kkt-lagarnge}, not braking~\eqref{eq:kkt-nonnegative}.
For $\ell\in[\Ntrg]$,~\eqref{eq:kkt-lagarnge} reduces to the following linear equations:
\begin{equation}
    \left(\vI_{\Ntrg}-\begin{bmatrix}
        \Ntrg^{-1}&\cdots&\Ntrg^{-1}\\
        \vdots&\ddots&\vdots\\
        \Ntrg^{-1}&\cdots&\Ntrg^{-1}\\
\end{bmatrix}\right)\begin{bmatrix}\lambda_1\\\vdots\\\lambda_{\Ntrg}\end{bmatrix}=-\begin{bmatrix}1^3+\nu\cdot 1\\\vdots \\\Ntrg^3+\nu\cdot\Ntrg\end{bmatrix}.
\end{equation}
Noting that the sum of the all entries in the vector obtained by evaluating left-hand side is zero, we obtain
$$
\nu =-\frac{1+\cdots+\Ntrg^3}{1+\cdots+\Ntrg}=\frac{\Ntrg(\Ntrg+1)}{2}.
$$
We can also observe $\lambda_1\geq \cdots \geq \lambda_{\Ntrg}$ since the right-hand side is decreasing w.r.t the vector index.
Since $\vw=[1,1,\dotsc,1]^\top$ belongs to the right kernel of $\left(\vI_{\Ntrg}-\begin{bmatrix}
        \Ntrg^{-1}&\cdots&\Ntrg^{-1}\\
        \vdots&\ddots&\vdots\\
        \Ntrg^{-1}&\cdots&\Ntrg^{-1}\\
\end{bmatrix}\right)$, we can shift $\vlambda$ by this vector to ensure $\lambda_{\Ntrg}=1$ (then $\lambda\geq 0$), meaning 
$$
1-\bar\lambda = -\Ntrg^3-\nu \cdot \Ntrg \iff \bar\lambda =\frac{1}{2}\Ntrg^3-\frac{1}{2}\Ntrg^2+1
$$
where $\bar\lambda=\Ntrg^{-1}\sum_{\ell=1}^{\Ntrg}\lambda_\ell$.
We now consider~\eqref{eq:kkt-lagarnge} with $\ell>\Ntrg$.
Since
\begin{align}
\mu_\ell&=\nu+\ell^2-\ell^{-1}\bar\lambda\\
&\geq (\Ntrg+1)^2- \frac{\Ntrg(\Ntrg+1)}{2} - \frac{1}{2}\Ntrg^2+\frac{1}{2}\Ntrg-\frac{1}{\Ntrg}
&=2\Ntrg+1-\frac{1}{\Ntrg} \geq 1,
\end{align}
and then we can now determine $\vmu$ satisfying~\eqref{eq:kkt-nonnegative}.
Therefore, obtained $(\vq^*,\vlambda,\vmu,\nu)$ satisfies the KKT condition.

From~\cite{MANGASARIAN1979151}, to show the uniqueness it suffices to show that for any $\vp\in \mathbb{R}^U$, there exists $\epsilon>0$ such that even if we replace the objective function to $\sum_{\ell\in[U]}(\ell^2+\epsilon p_\ell)q_\ell$, $\vq^*$ is optimal.
We can easily see this by reconsidering KKT condition---while the only effect by changing the objective is the nonnegativeness of $\vlambda$ and $\vmu$~\eqref{eq:kkt-nonnegative}, these parameters are continuous with respect to the perturbation, and we can still ensure nonnegativeness since for $\vq=\zero$ we already obtained positive parameters. \bigskip
\section{Detailed Experimental Setting}\label{app:detailed_exp}
\subsection{Detailed Settings for Section~\ref{subsec:full}}
We introduce the detailed settings for the full-traning experiment.
\paragraph{Architecture.}
\begin{itemize}[leftmargin=1em]
    \item \textbf{Embedding.}  
    We use embeddings obtained by concatenating the positional embedding and the token embedding, i.e., $\begin{bmatrix}\vp_t \\ \ve_{z_t}\end{bmatrix}$ where $\vp_t$ and $\ve_{z_t}$ are one-hot vectors with ones at $t$-th and $z_t$-th entries, respectively.  
    The previous-token embedding in~\eqref{eq:embedding} is omitted.
    \item \textbf{Transformer blocks.}  
    Each layer consists of a single-head attention module with separate Key-Query matrices, a GeLU-based MLP, and residual connections:
    $$
    \vx_t \leftarrow \vx_t + \mathrm{MLP}\bigl(\vx_t + \vW_V \vX_{1:t} \, \mathrm{Softmax}(\vX_{1:t}^\top \vW_{K}^\top \vW_{Q} \vx_t)\bigr),
    $$
    where 
    $$
    \mathrm{MLP}(\vx)=\vW_{\mathrm{MLP},2}\mathrm{GeLU}(\vW_{\mathrm{MLP},1}\vx+\bm{b}_1)+\bm{b}_2.
    $$
    Three such layers are stacked, followed by a linear projection of size $(N, D)$ that maps the final embeddings (dimension $D$) to the vocabulary of size $N$.
    We initialized $\vW_{K},\vW_{Q},\vW_{V},\vW_{\mathrm{MLP},1}$ and $\vW_{\mathrm{MLP},2}$ using Xavier initialization~\cite{pmlr-v9-glorot10a}, while biases $\bm{b}_1$ and $\bm{b}_2$ are initialized from the zero vector.
    The size of $\vW_{\mathrm{MLP},1}$ and $\vW_{\mathrm{MLP},2}$ are $(4D,D)$ and $(D,4D)$, respectively, where $D=N+L$ is the embedding dimension.
    The transformed embedding at the last layer is fed into the trainable linear output layer $\vW^O$ of size $(N,D)$, initialized using Xavier, before softmax.
\end{itemize}

\paragraph{Training.}
Training was performed using AdamW with both the learning rate and weight decay set to $10^{-2}$, using 32,768 training samples.  
We prepared 1,024 in-distribution samples drawn from the same distribution as the training data and stopped training once the accuracy exceeded 90\% on these samples.  \end{document}